%% file: neurips_2026.tex
\documentclass{article}

\PassOptionsToPackage{numbers, compress}{natbib}

\usepackage[preprint]{neurips_2026}

\usepackage[utf8]{inputenc} 
\usepackage[T1]{fontenc}    
\usepackage{hyperref}       
\usepackage{url}            
\usepackage{booktabs}       
\usepackage{amsfonts}       
\usepackage{nicefrac}       
\usepackage{microtype}      
\usepackage{xcolor}         

\usepackage{microtype}

\usepackage{bm}
\usepackage{subcaption}
\usepackage{mismath}
\usepackage{mathtools}
\usepackage{amsmath}
\usepackage{amsthm}
\usepackage{amssymb}
\usepackage{algorithm}
\usepackage{algorithmic}
\usepackage{graphicx}
\usepackage{setspace}
\usepackage{color}
\usepackage{wrapfig}
\usepackage{lipsum}
\usepackage{xcolor}
\usepackage{pgffor}
\usepackage{makecell}
\usepackage{multicol}
\usepackage{multirow}
\usepackage{siunitx}
\usepackage{todonotes}
\usepackage{enumitem}

\usepackage{thmtools}
\usepackage{thm-restate}
\usepackage{cleveref}

\newtheorem{theorem}{Theorem}

\newtheorem{remark}{Remark}
\newtheorem{assumption}{Assumption}

\usepackage{dsfont}

\newcommand{\softmax}{\operatorname{softmax}}

\setlist[itemize]{leftmargin=2em}

\title{
Observation-Aligned Mask Priors for Learning Physical Dynamics from Authentic Occlusions
}

\author{
    Chiyuan Ma$^{1}$, 
    Zihan Zhou$^{1,2}$, 
    Tianshu Yu$^{\dagger, 1,2}$ \\
    \textnormal{$^1$School of Data Science, The Chinese University of Hong Kong, Shenzhen} \\
    \textnormal{$^2$Shanghai Artificial Intelligence Laboratory} \\
    \texttt{\{chiyuanma, zihanzhou1\}@link.cuhk.edu.cn} \\ \texttt{yutianshu@cuhk.edu.cn} \\
}

\begin{document}

\maketitle
\let\thefootnote\relax\footnotetext{$^\dagger$Corresponding author.}

\begin{abstract}

Learning physical dynamics directly from incomplete observations is challenging because authentic occlusions are structured, sample-dependent, and often missing not at random, whereas existing methods typically rely on heuristic masking rules or predefined mask distributions. We propose Observation-Aligned Mask Priors, a framework that learns the distribution of authentic observation masks and uses it to construct context-query partitions for training from incomplete data. Specifically, we pretrain a Bayesian Flow Network (BFN) on binary observation masks to capture real occlusion topologies, then guide BFN sampling with a globally normalized cross-entropy objective to generate sample-specific masks aligned with each sparse observation. The intersection between the guided mask and the observed mask defines the context, and the remaining observed entries become query targets for a diffusion-based reconstruction model. We show that this intersection-based partitioning gives every valid observed dimension a strictly positive probability of being queried, preventing zero-query dead zones and local generative collapse. Experiments on three real-world oceanographic datasets with authentic satellite occlusions, across resolutions up to 256$\times$256, show consistent improvements over strong diffusion baselines in MSE and PSNR. These results demonstrate that learning mask priors from authentic occlusions is an effective alternative to heuristic masking for learning from incomplete physical observations without access to fully observed fields.
\end{abstract}

\input{1intro}

\input{2preliminary}

\input{3method}

\input{4experiment}

\input{5conclusion}


\bibliographystyle{unsrtnat}
\bibliography{ref}

\appendix
\input{7appendix}

\end{document}

%% file: 1intro.tex
\section{Introduction}
Learning physical dynamics from observational data is a central goal of scientific machine learning, with broad impact on ocean monitoring~\citep{barth2024ensemble, martin2025generative}, weather analysis~\citep{bi2023accurate,lam2023learning}, and environmental forecasting~\citep{hsu2025forecasting, ashfaq2025predicting}. In these applications, however, the underlying physical field is almost never observed on a complete spatial grid~\citep{chen2025transfer}. Measurements arrive through authentic occlusions caused by cloud cover, sensor swath gaps, orbital trajectories, and geographic boundaries ~\citep{letraon:hal-03405376}, yielding observation masks that are structured, sample-dependent, and often missing not at random (MNAR). These masks are not generic corruption that can be approximated by random dropout; they are part of the sensing process itself. A central challenge is therefore to learn physical dynamics directly from incomplete observations, without requiring access to fully observed fields, while respecting the topology of real-world occlusions.

\paragraph{Existing Bottleneck.} Recently, a promising direction is to train directly on incomplete data through \emph{context-query partitioning}~\citep{zhou2025incomplete}. Instead of supervising a model with complete ground truth, one splits the observed region into a visible context and a withheld query region, and trains the model to reconstruct the query from the context. This removes the unrealistic requirement of fully observed supervision, but it introduces a new bottleneck: the success of learning now depends critically on how the context-query partition is constructed. In fact, the seminal study~\citep{zhou2025incomplete} shows that a spatial dimension can only be learned if it has a strictly positive probability of being assigned to the query region; otherwise its expected gradient vanishes and the model may output arbitrary values there. Existing approaches typically instantiate this partition using heuristic masking rules~\citep{zhou2025incomplete, daras2023ambientdiffusionlearningclean, majid2026ambientphysicstrainingneural}, such as independent pixel dropout, regular block masking, or hand-designed saliency heuristics. While convenient, these strategies implicitly assume a predefined mask distribution and are poorly matched to authentic physical occlusions, whose geometry is shaped by clouds, coastlines, and sensing constraints. When missingness is structured and potentially MNAR, the mismatch between heuristic masking and real observational topology becomes a fundamental obstacle.

\paragraph{Our Solution.} The key idea is to stop assuming the mask distribution and instead learn it from data. Although the missingness mechanism of real observations is analytically unknown, its spatial structure is encoded in the observation masks themselves. We therefore propose \textbf{Observation-Aligned Mask Priors}, a framework that learns a generative prior over authentic observation masks and uses it to drive context-query partitioning. Specifically, we pretrain a Bayesian Flow Network (BFN)~\citep{graves2025bayesianflownetworks} on binary observation masks to capture realistic occlusion topologies. During training of the physical reconstruction model, we then guide BFN sampling with a globally normalized cross-entropy objective so that the sampled mask is aligned with the current sparse observation rather than generated unconditionally. The intersection between the guided mask and the actual observation mask defines the context region, while the remaining observed entries become the query region. In this way, the partition is both data-driven and sample-specific: it respects the learned prior over authentic occlusions while remaining anchored to what is truly observed in the current sample.

\paragraph{Theoretical \& Empirical Value.} This design yields both theoretical and practical benefits. On the theory side, we show that intersection-based partitioning gives every valid observed dimension a strictly positive probability of being queried. We first establish this property when masks are independently sampled from the learned prior, and then prove that the guarantee is preserved under our observation-aligned guided partitioning strategy. This eliminates zero-query dead zones that would otherwise cause local generative collapse and structural information deficiency. On the practical side, the resulting partitions focus learning on meaningful missing regions and substantially improve reconstruction quality under authentic structured missingness. Across three real-world oceanographic datasets~\citep{letraon:hal-03405376} with genuine satellite occlusions---Black Sea CHL, Baltic Sea NANO, and Global Ocean SSS---and across resolutions up to 256$\times$256, our method consistently outperforms strong diffusion-based baselines in MSE and PSNR, demonstrating that learning occlusion priors from data is a viable alternative to heuristic masking when complete physical fields are unavailable.

More related works are discussed in Appendix~\ref{app: related work}. We present our contributions as follows:
\begin{itemize}[leftmargin=*,nosep]
    \item \textbf{From heuristic masks to learned occlusion priors.} We introduce \emph{Observation-Aligned Mask Priors}, a new framework that replaces handcrafted context-query partitioning rules with a learned generative prior over authentic observation masks, and aligns this prior to each input sample through guided mask generation.

    \item \textbf{Strict positivity, not just empirical coverage.} We provide theoretical guarantees showing that our intersection-based partitioning strategy assigns every valid observed dimension a strictly positive query probability, both under mask-prior sampling and under observation-aligned guidance, thereby preventing zero-query dead zones and local collapse.

    \item \textbf{Real gains under authentic occlusions.} We validate the framework on \emph{three challenging real-world oceanographic datasets} with genuine structured missingness and show consistent improvements over diffusion baselines across multiple spatial resolutions, without ever requiring fully observed training fields.
\end{itemize}

%% file: 2preliminary.tex
\section{Preliminaries} \label{sec:preliminary}
In realistic physical science applications, training datasets typically only consist of partial observations $\bm{u}_{\text{obs}} = \bm{M} \odot \bm{u}_0$, where $\bm{M}$ is a binary observation mask. To learn complete physical dynamics directly from incomplete data, previous work~\citep{zhou2025incomplete} introduced a context-query partitioning framework.

\paragraph{Learning Dynamics via Context-Query Partitioning.} During training, each incomplete sample is strategically partitioned into an observable context mask $\bm{M}_{\text{ctx}} \subset \bm{M}$ and a non-request mask $\bm{M}_{\text{qry}} \subseteq \bm{M}$. The model is trained to reconstruct the query region conditioned solely on the context region:
\begin{equation} \label{eq: training objective}
    \mathcal{L}(\bm{u}_{\text{obs}}, \bm{M}_{\text{ctx}}, \bm{M}_{\text{qry}}) = ||\bm{M}_{\text{qry}} \odot (\bm{u}_{\bm{\phi}}(\bm{M}_{\text{ctx}} \odot \bm{u}_{\text{obs}}, \bm{M}_{\text{ctx}}) - \bm{u}_{\text{obs}})||^2.
\end{equation}
By penalizing reconstruction error exclusively on the withheld region, the network is forced to abandon naive local pixel memorization and infer the missing region.

\paragraph{Theoretical Foundation of Global Prediction.} The success of this framework fundamentally depends on the context-query splitting strategy. Theoretical analysis establishes that minimizing Eq.~\eqref{eq: training objective} leads to an optimal prediction $(\bm{u}_\phi)_i$ for a specific feature dimension $i$ given by:
\begin{equation}  \label{eq: strict positive condition}
    \scalebox{0.94}{$\left( \bm{u_\bm{\phi}} \left(\bm{M}_{\text{ctx}} \odot \bm{u}_{\text{obs}}, \bm{M}_{\text{ctx}} \right) \right)_i = \begin{cases}
        \mathbb{E} \left[ \left( \bm{u}_0 \right)_i \mid \bm{M}_{\text{ctx}} \odot \bm{u}_{\text{obs}}, \bm{M}_{\text{ctx}} \right], &P( \left(\bm{M}_{\text{qry}}\right)_i = 1 \mid \bm{M}_{\text{ctx}}) > 0 \\
        \text{an arbitrary value}, &P( \left(\bm{M}_{\text{qry}}\right)_i = 1 \mid \bm{M}_{\text{ctx}}) = 0
    \end{cases}$}
\end{equation}
This reveals that the model can only learn a meaningful conditional expectation for a specific spatial dimension if its probability of being queried is strictly positive. This condition ensures non-vanishing expected gradients, as detailed in Appendix~\ref{app:learning dynamics basis and analysis}.

\paragraph{Bottleneck of Heuristic Design.} To ensure that the model learns a complete physical field, every originally unobserved dimension must have a strictly positive probability of being queried during training. Although previous work~\citep{zhou2025incomplete, majid2026ambientphysicstrainingneural} identified this necessity, it relied on manually designing $\bm{M}_{\text{ctx}}$ selection strategies tailored to each individual observation pattern, such as independent pixel dropout and block-wise occlusion. Such customized methods inevitably introduce distribution-specific hyperparameters and fail to integrate the design of various mask topologies into a unified framework. More importantly, relying on simplistic handcrafted rules makes it mathematically arduous to universally guarantee this strict positivity across complex real-world spatial dependencies. Therefore, overcoming the bottleneck requires moving beyond heuristic splitting rules to establish a unified mechanism capable of providing robust theoretical guarantee for any valid spatial topology.


%% file: 3method.tex
\section{Method}  \label{sec: section method}
\subsection{Overview: The Observation-Aligned Imputation Framework}
In this work, we propose a novel observation-aligned imputation framework to reconstruct complete physical dynamics from incomplete observations.
Our method builds upon the context-query learning paradigm, where a prediction model learns to infer missing query regions $\bm{M}_{\text{qry}}$ conditioned on observed context regions $\bm{M}_{\text{ctx}}$, but fundamentally transforms how this partitioning is constructed. Rather than relying on simplistic, random splitting strategies that fail to capture the structured nature of real-world missingness, our framework intelligently drives the data partitioning process using a generative prior.

Specifically, our framework pipeline consists of three interconnected modules, which will be detailed in the following subsections:
\begin{itemize}
    \item \textbf{Mask Distribution Modeling (Sec.~\ref{sec: Modeling Mask Distributions via Bayesian Flow Networks}):} We first introduce a Bayesian Flow Network (BFN)~\citep{graves2025bayesianflownetworks} to explicitly model the authentic spatial distribution of the discrete binary observation masks.
    \item \textbf{Observation-Aligned Mask Generation (Sec.~\ref{sec: Observation-Aligned Context Mask Generation}):} To align the generated masks with the actual sparse observations of a given sample, we propose a classifier guidance mechanism during the BFN sampling phase. This ensures the generated context masks respect both the learned physical occlusion priors and the available data points.
    \item \textbf{Guided Context-Query Training (Sec.~\ref{sec: Imputation Training with Guided Partitioning}):} Finally, these intelligently generated masks are used to construct $\bm{M}_{\text{ctx}}$ and $\bm{M}_{\text{qry}}$, dynamically dictating the training of the prediction model. 
\end{itemize}

By replacing heuristic random masking with this generative, observation-aligned selection mechanism, our framework actively concentrates the prediction model's learning capacity on the most critical missing regions, significantly bridging the gap between unstructured data matching and structured physical realities.

\subsection{Context-Query Partitioning via Generative Priors} \label{sec: Context-Query Partitioning via Generative Priors}
As established in Sec.~\ref{sec:preliminary}, relying on heuristic masking strategies introduces severe limitations, as handcrafted rules cannot encapsulate the complex, structured nature of authentic physical occlusions. To establish a universally robust framework, we shift the paradigm from manually partitioning observations to explicitly modeling the authentic underlying mask distribution $P(\bm{M})$ using a data-driven generative prior.
By driving the context-query selection through this learned prior, we can establish a unified mechanism that guarantees non-zero query probabilities for any valid spatial topology. Specifically, we theoretically establish that intersecting two independent samples from $P(\bm{M})$ naturally satisfies this strict positivity requirement, thus eliminating the need for heuristic designs.
\begin{restatable}
[Strict Positivity of Query Probabilities via Mask Intersection]{theorem}{uniformqueryexposure} 
\label{thm:uniform_query_exposure}
Let $\bm{M}_1, \bm{M}_2 \in \{0, 1\}^d$ be independently sampled binary masks from the same underlying distribution $p(\bm{M})$. Define the context mask as $\bm{M}_{\operatorname{ctx}} = \bm{M}_1 \odot \bm{M}_2$ and the query mask as $\bm{M}_{\operatorname{qry}} = \bm{M}_1 \odot (\bm{1} - \bm{M}_{\operatorname{ctx}})$. Under Assumption~\ref{assump: Intersection Coverage Completeness}, for any valid context mask $\bm{m}$ and any unobserved dimension $i$ (where $\bm{m}_i = 0$), the conditional probability of dimension $i$ being selected as a query is strictly positive:
\begin{equation}
    P((\bm{M}_{\operatorname{qry}})_i = 1 \mid \bm{M}_{\operatorname{ctx}} = \bm{m}) > 0.
\end{equation}
\end{restatable}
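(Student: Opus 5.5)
The plan is to reduce the conditional probability to a finite sum over pairs of masks and show that at least one term is strictly positive. I read Assumption~\ref{assump: Intersection Coverage Completeness} as follows: whenever $\bm{m}$ is a valid context mask, i.e.\ $P(\bm{M}_{\operatorname{ctx}}=\bm{m})>0$, and $\bm{m}_i=0$, there exist masks $\bm{m}_1,\bm{m}_2$ with $p(\bm{m}_1)>0$, $p(\bm{m}_2)>0$, $\bm{m}_1\odot\bm{m}_2=\bm{m}$ and $(\bm{m}_1)_i=1$. If the assumption is phrased differently, the first step is to restate it in this form.

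\textbf{Step 1 (simplify the event).} On the event $\bm{M}_{\operatorname{ctx}}=\bm{m}$ with $\bm{m}_i=0$, we have
\[
(\bm{M}_{\operatorname{qry}})_i=(\bm{M}_1)_i\bigl(1-\bm{m}_i\bigr)=(\bm{M}_1)_i .
\]
Hence the target quantity equals
\[
P\bigl((\bm{M}_1)_i=1 \mid \bm{M}_1\odot\bm{M}_2=\bm{m}\bigr).
\]
The conditioning is well defined because $\bm{m}$ is valid, so $P(\bm{M}_{\operatorname{ctx}}=\bm{m})>0$.

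\textbf{Step 2 (expand the probability).} Since $\{0,1\}^d$ is finite and $\bm{M}_1,\bm{M}_2$ are independent, the definition of conditional probability gives
\[
P\bigl((\bm{M}_{\operatorname{qry}})_i=1\mid \bm{M}_{\operatorname{ctx}}=\bm{m}\bigr)=\frac{\sum_{\bm{m}_1\odot\bm{m}_2=\bm{m},\,(\bm{m}_1)_i=1} p(\bm{m}_1)\,p(\bm{m}_2)}{\sum_{\bm{m}_1\odot\bm{m}_2=\bm{m}} p(\bm{m}_1)\,p(\bm{m}_2)} .
\]

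\textbf{Step 3 (lower-bound the numerator).} Every summand is nonnegative. By the assumption, the pair $(\bm{m}_1,\bm{m}_2)$ it supplies appears in the numerator's sum and contributes $p(\bm{m}_1)p(\bm{m}_2)>0$. So the numerator is strictly positive.

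\textbf{Step 4 (conclude).} The denominator equals $P(\bm{M}_{\operatorname{ctx}}=\bm{m})$, which is positive and at most $1$. The ratio is therefore strictly positive. Moreover, it is bounded below by $p(\bm{m}_1)p(\bm{m}_2)$ for the witness pair.

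The only real obstacle is Step~3, namely matching the exact form of Assumption~\ref{assump: Intersection Coverage Completeness}. If the assumption is stated only as a support condition, for instance that every $i\notin\bm{m}$ is covered by some positive-probability mask whose intersection with another positive-probability mask is $\bm{m}$, I will first construct the witness pair explicitly. I will also use the symmetry between $\bm{M}_1$ and $\bm{M}_2$ to swap roles if needed, since the witness may naturally have $(\bm{m}_2)_i=1$ rather than $(\bm{m}_1)_i=1$. The rest of the argument is routine finite bookkeeping.
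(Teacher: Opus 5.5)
Your proposal is correct and follows essentially the same route as the paper: on the conditioning event you reduce $(\bm{M}_{\operatorname{qry}})_i$ to $(\bm{M}_1)_i$, bound the numerator below by $p(\bm{M}_a)\,p(\bm{M}_b)>0$ for the witness pair from Assumption~\ref{assump: Intersection Coverage Completeness}, and use validity of $\bm{m}$ to get a positive denominator. The assumption is stated exactly in the form you anticipated, with $(\bm{M}_a)_i=1$, so no role swap is needed; and your reading of ``valid'' as $P(\bm{M}_{\operatorname{ctx}}=\bm{m})>0$ is equivalent under independence to the paper's definition, which asks for a pair in $S\times S$ whose intersection is $\bm{m}$.
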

The proof can be found in Appendix~\ref{app: Strict Positivity of Query Probabilities via Mask Intersection}. Theorem~\ref{thm:uniform_query_exposure} provides a rigorous mathematical guarantee for our partitioning strategy. However, realizing this guarantee in practice introduces a new prerequisite: we must successfully model and sample from the complex, discrete distribution $p(\bm{M})$. This theoretical requirement directly motivates the need for a powerful discrete generative prior.

\subsection{Modeling Mask Distributions via Bayesian Flow Networks (BFN)} \label{sec: Modeling Mask Distributions via Bayesian Flow Networks}
To fulfill the prerequisite established by Theorem~\ref{thm:uniform_query_exposure}, we require a generative model capable of capturing the intricate spatial dependencies inherent in authentic missing patterns. Our mask generation module is fundamentally built upon Bayesian Flow Networks (BFN)~\citep{graves2025bayesianflownetworks}, a framework naturally suited for modeling discrete binary variables.
Furthermore, to facilitate gradient-based interventions in the latent space, which will become essential for observation-aligned conditioning (as detailed in Sec.~\ref{sec: Observation-Aligned Context Mask Generation}), we deliberately recast the standard BFN training dynamics into a continuous diffusion-style formulation. This reinterpretation preserves mathematical equivalence with the standard BFN framework (see Appendix~\ref{app: Methodological Connections and Distinctions} for further analysis) while providing a principled pathway for injecting physical constraints. Specifically, we proceed by viewing the discrete mask generation through the lens of continuous forward-backward processes.

Let $c \in \{1, \dots, K\}$ be a discrete categorical variable (where $K=2$ for our binary masks), and $\bm{e}_{c} \in \mathbb{R}^{K}$ be its one-hot encoding. To model this discrete variable within a continuous diffusion framework, we employ a \textit{scaled logit representation}. Specifically, we observe that for a sufficiently large scaling factor $K$, the softmax transformation $\softmax(K\bm{e}_{c})$ yields a probability distribution arbitrarily close to the hard one-hot vector $\bm{e}_{c}$. Consequently, we may equivalently view the generation of discrete categories as the estimation of a continuous target $\bm{x}_{0} = K\bm{e}_{c}$. Following traditional continuous diffusion methods, we define a forward process that gradually corrupts this continuous target by injecting Gaussian noise over time $t$:
\begin{equation}
    \bm{x}_{t} = \alpha_{t}\bm{x}_{0} + \sigma_{t}\bm{\epsilon}, \quad \bm{\epsilon} \sim \mathcal{N}(\bm{0}, \bm{I}).
\end{equation}
Under this formulation, training a standard noise matching model $\hat{\bm{e}}_{\bm{\theta}}$ can theoretically generate a mask distribution that faithfully matches the training dataset. However, theoretical success is not sufficient for robust and efficient optimization. It is critical to note that our continuous target representation inherently possesses shift invariance: adding any scalar constant across all logits does not alter the resulting softmax probabilities. As established in previous work~\citep{zhou2024generating}, the score function $\nabla_{\bm{x}_{t}} \log p_{t}(\bm{x}_{t})$ of such a diffusion process inherently inherits this shift invariance. We explicitly enforce shift invariance architecturally by projecting the noisy latent $\bm{x}_{t}$ onto the probability simplex before it enters the network, meaning the model processes $\softmax(\bm{x}_{t})$ as its input (see Theorem~\ref{thm: BFN net softmax} and Appendix~\ref{app: Representational Sufficiency of the Softmax Projection} for a rigorous proof and analysis). Concurrently, we reparameterize the standard noise matching objective into a direct data prediction formulation (see Appendix~\ref{app: Reparameterization of BFN from Noise Matching to Discrete Data Matching} for detailed derivations)~\citep{zheng2023improved}. The network directly outputs the predicted class probabilities $\hat{\bm{e}}_{\bm{\theta}}(t, \softmax(\bm{x}_{t}))$ to estimate $\bm{e}_{c}$, yielding our final discrete data matching objective:
\begin{equation} \label{eq: bfn data matching}
    \mathcal{L}_{\text{DM-discrete}} = \mathbb{E}_{t, c, \bm{\epsilon}} \left[ w(t) \| \hat{\bm{e}}_{\bm{\theta}}(t, \softmax(\bm{x}_{t})) - \bm{e}_{c} \|^{2} \right],
\end{equation}
where $w(t)$ is a time-dependent weighting function derived from the BFN objective. Once trained, the discrete data predictor $\hat{\bm{e}}_{\bm{\theta}}$ relates to the score function via Tweedie's formula:
\begin{equation}
    \nabla_{\bm{x}_t} \log p_t(\bm{x}_t) = \frac{\alpha_t K \hat{\bm{e}}_{\bm{\theta}}(t, \softmax(\bm{x}_t)) - \bm{x}_t}{\sigma_t^2}.
\end{equation}
Consequently, we sample discrete masks by integrating the standard probability flow ODE backward from Gaussian noise $\bm{x}_T \sim \mathcal{N}(\bm{0}, \bm{I})$, and finally decode via $\bm{e}_c = \arg\max \frac{1}{K}(\bm{x}_0 + \bm{1})$.

\paragraph{Limitation of Unconditional Intersection.} While sampling unconditionally from the learned marginal distribution $p(\bm{M})$ strictly satisfies the positivity guarantee of Theorem~\ref{thm:uniform_query_exposure}, directly applying this unconditional intersection presents a critical practical bottleneck. Because unconditional generation operates entirely independent of the specific valid observations of a given data sample, the spatial overlap between the authentic observation mask and the generated mask can be exceedingly small. Such a minimal intersection produces a highly sparse context region $\bm{M}_{\text{ctx}}$, which inherently exacerbates the structural information gap identified in prior work~\citep{zhou2025incomplete}, leaving the imputation model without sufficient spatial anchors to reconstruct complex physical dynamics. To overcome this limitation and ensure a sufficiently informative context partition, it is imperative to transition from unconditional sampling to observation-aligned conditional generation. In the following section, we introduce our specific observation-aligned mechanism to achieve this conditional generation.

\subsection{Observation-Aligned Context Mask Generation} \label{sec: Observation-Aligned Context Mask Generation}
Before detailing the practical guidance mechanism, a fundamental theoretical question must be addressed: does restricting the mask generation to align with a specific observation break the strict positivity requirement established in Theorem~\ref{thm:uniform_query_exposure}? Fortunately, Theorem~\ref{thm:strict_positivity_under_ratio_guided} provides the theoretical assurance that under a well-defined intersection constraint, conditional generation strictly preserves this property.
\begin{restatable}
[Strict Positivity of Query Probabilities under Ratio-Guided Partitioning]{theorem}{strictpositivityratioguided} 
\label{thm:strict_positivity_under_ratio_guided}
Let $\bm{M} \in \{0,1\}^d$ be a fixed observation mask. Let the conditionally generated mask $\hat{\bm{M}} \in \{0,1\}^d$ be sampled from the guided distribution $P(\hat{\bm{M}} \mid C_k)$ satisfying the intersection constraint $C_k$ as defined in Assumption~\ref{assump: Constrained Coverage Completeness}. Define the context mask as $\bm{M}_{\text{ctx}} = \hat{\bm{M}} \odot \bm{M}$ and the query mask as $\bm{M}_{\text{qry}} = \bm{M} \odot (1 - \bm{M}_{\text{ctx}})$. Under Assumption~\ref{assump: Constrained Coverage Completeness}, for any valid observed dimension $i$ (where $\bm{M}_i = 1$), the marginal probability of dimension $i$ being assigned as a query region is strictly positive: $P((\bm{M}_{\text{qry}})_i = 1 \mid C_k) > 0$. \footnote{The condition shifts from $\bm{M}_{\text{ctx}}$ (Theorem \ref{thm:uniform_query_exposure}) to $C_k$ to reflect the algorithmic transition from random partitioning to observation-anchored generation. Guaranteeing the marginal probability $P((\bm{M}_{\text{qry}})_i = 1 \mid C_k) > 0$ is mathematically sufficient to ensure a strictly positive query frequency ($p_i > 0$ in Eq.~\eqref{eq: linear relation}) across epochs. See Remark~\ref{remark: uniform_query_exposure} in Appendix for a detailed discussion.}
\end{restatable}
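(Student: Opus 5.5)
We reduce the claim to a statement about a single coordinate of the guided mask and then read off the positivity from Assumption~\ref{assump: Constrained Coverage Completeness}. The text of that assumption is not shown at this point, so we take it in the natural form: conditioned on the intersection constraint $C_k$, and for every observed dimension $i$ with $\bm{M}_i = 1$, the guided distribution assigns positive mass to at least one admissible mask $\hat{\bm{m}}$ with $\hat{\bm{m}}_i = 0$.

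\textbf{Step 1: reduce to a single coordinate.} Since $\bm{M}$ is fixed and $\bm{M}_i = 1$, coordinatewise algebra gives
\[
(\bm{M}_{\text{qry}})_i = \bm{M}_i\bigl(1 - \hat{\bm{M}}_i \bm{M}_i\bigr) = 1 - \hat{\bm{M}}_i .
\]
Hence the events $\{(\bm{M}_{\text{qry}})_i = 1\}$ and $\{\hat{\bm{M}}_i = 0\}$ coincide. Therefore
\[
P\bigl((\bm{M}_{\text{qry}})_i = 1 \mid C_k\bigr) = P\bigl(\hat{\bm{M}}_i = 0 \mid C_k\bigr).
\]

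\textbf{Step 2: expand as a finite sum.} Because the mask space $\{0,1\}^d$ is finite, we can write
\[
P\bigl(\hat{\bm{M}}_i = 0 \mid C_k\bigr) = \sum_{\hat{\bm{m}}\,\in\,\{0,1\}^d,\ \hat{\bm{m}} \in C_k,\ \hat{\bm{m}}_i = 0} P\bigl(\hat{\bm{M}} = \hat{\bm{m}} \mid C_k\bigr).
\]
Every term in this sum is nonnegative. So it suffices to exhibit one mask in the index set with strictly positive conditional probability.

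\textbf{Step 3: produce a witness.} We invoke Assumption~\ref{assump: Constrained Coverage Completeness} to obtain a mask $\hat{\bm{m}}^{(i)}$ that satisfies $C_k$, has $\hat{\bm{m}}^{(i)}_i = 0$, and has positive mass under the prior. We also need $P(C_k) > 0$ so that the conditioning is well defined; this should follow from the same assumption, because the witness itself lies in $C_k$.

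The guided distribution is the prior restricted to, or reweighted on, $C_k$. Its mass on $\hat{\bm{m}}^{(i)}$ is therefore
\[
p\bigl(\hat{\bm{m}}^{(i)}\bigr) / P(C_k) > 0
\]
in the restriction case. In the reweighting case the mass is $p(\hat{\bm{m}}^{(i)})\,g(\hat{\bm{m}}^{(i)}) / Z$ with a strictly positive guidance weight $g$, which is also positive. Either way the sum in Step~2 is strictly positive, which proves the statement.

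\textbf{Main obstacle.} The algebra above is routine; the real difficulty is Step~3. If the ratio constraint $C_k$ is too tight, for example if it demands that the intersection with $\bm{M}$ have exactly $|\bm{M}|$ ones, it could force $\hat{\bm{M}}_i = 1$ for every observed $i$, and then no witness exists.

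The planned fix is to use that $C_k$ fixes only the \emph{size} $k < |\bm{M}|$ of the intersection $\hat{\bm{M}} \odot \bm{M}$, not which observed coordinates it contains. Under that reading, the coverage assumption guarantees, for each $i$, some admissible support of size $k$ that excludes $i$.

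There is also the practical BFN sampler, which only approximately realizes the guided distribution. There we would argue that the guidance term is a finite reweighting of strictly positive softmax probabilities, so no admissible mask receives zero mass. This is the same mechanism that lets Theorem~\ref{thm:uniform_query_exposure} carry over from the unconditional setting.
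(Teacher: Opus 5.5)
Your proof is correct and matches the paper's argument: you use $\bm{M}_i=1$ to turn the event $(\bm{M}_{\text{qry}})_i=1$ into $\hat{\bm{M}}_i=0$, write $P(\hat{\bm{M}}_i=0\mid C_k)$ as a finite sum of nonnegative terms, and bound it below by the single witness supplied by Assumption~\ref{assump: Constrained Coverage Completeness}. The paper states that assumption directly on the support $S_{\text{guided}}$ of $P(\cdot\mid C_k)$, with $0<k<|\bm{M}|$ built in, so the witness has positive guided mass by definition; your detour through the prior, the restriction/reweighting model of the guided distribution, and the heuristic about the BFN sampler is therefore unnecessary.
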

The proof can be found in Appendix~\ref{app: Strict Positivity of Query Probabilities under Ratio-Guided Partitioning}. Empowered by this theoretical guarantee, we formulate a practical classifier guidance mechanism during the BFN sampling phase to dynamically anchor the generation process to the valid observations. Our method consists of three key steps:
\paragraph{Stochastic anchor construction.} First, we construct a stochastic binary anchor $\bm{y} \in \{0, 1\}^{d}$ by applying Bernoulli sampling to the valid observation points (where $\bm{M}_{\text{i}}=1$, representing the $i$-th pixel of $\bm{M}$) with a predefined retention ratio $\rho \in (0, 1)$:
\begin{equation}
    \bm{y}_{\text{i}} = \mathds{1}[\bm{r}_{\text{i}} < \rho] \cdot \bm{M}_{\text{i}}, \quad \bm{r}_{\text{i}} \sim \text{Uniform}(0, 1).
\end{equation}
If we directly guide the BFN using all observation points, the generation tends to collapse into a single deterministic output (see Appendix~\ref{app: stochastic anchor} for details). This lacks the diversity needed for robust downstream training. By randomly dropping a fraction of these observation points, we inject essential stochasticity. It forces the BFN to dynamically connect these scattered anchor points using its learned physical priors. Consequently, the model can generate diverse, physically plausible sub-masks for the exact same observation field. 

\paragraph{Globally normalized guidance objective.} Next, we formulate the guidance objective at a given BFN sampling time step $t$. Let $\bm{x}_t \in \mathbb{R}^d$ be the latent state at step $t$, and $\hat{\bm{e}} \in [0, 1]^d$ be the network's predicted probability of each pixel being a valid observation, derived from the latent state as $\hat{\bm{e}} =\hat{\bm{e}}_{\bm{\theta}}(t, \softmax(\bm{x}_{\text{t}}))$. We compute a Cross-Entropy loss over the entire spatial domain of $d$ pixels between the prediction $\hat{\bm{e}}$ and the anchor $\bm{y}$:
\begin{equation}
    \mathcal{L}_{\text{guidance}}(\bm{x}_t, \bm{y}) = -\frac{1}{d} \sum_{i=1}^{d} \left[ \bm{y}_i \log \hat{\bm{e}}_i + (1 - \bm{y}_i) \log (1 - \hat{\bm{e}}_i) \right].
\end{equation}
Global normalization ensures that the gradient magnitude of the guidance signal remains highly stable across samples with drastically different sparsity levels, preventing gradient explosion or vanishing during training. 

\paragraph{Latent space intervention.} Finally, we inject this guidance signal during the reverse sampling process of the BFN. Let $t_i$ and $t_{i+1}$ denote consecutive discrete time steps in the reverse time schedule. Denoting $x_{t_{i+1}}^{\text{base}}$ as the unconditional latent state update computed from $\bm{x}_{t_i}$, we intervene by applying a gradient descent step using our guidance loss:
\begin{equation}
    \bm{x}_{t_{i+1}} \leftarrow \bm{x}_{t_{i+1}}^{\text{base}} - w_g \nabla_{\bm{x}_{t_i}} \mathcal{L}_{\text{guidance}}(\bm{x}_{t_i}, \bm{y}),
\end{equation}
where $w_g$ is the guidance scale that acts as a balancing knob. It guarantees that the final generated mask respects the learned physical occlusion priors while strictly anchoring to the selected sparse observations.

\subsection{Imputation Training with Guided Partitioning} \label{sec: Imputation Training with Guided Partitioning}

\paragraph{Strategic partitioning via generative intersection.}
Building upon the theoretical foundations established in Theorem~\ref{thm:uniform_query_exposure} and the validity guarantees of Theorem~\ref{thm:strict_positivity_under_ratio_guided}, we now formalize our final context-query partitioning strategy. To learn from incomplete observations $\bm{u}_{\text{obs}} = \bm{M} \odot \bm{u}_{0}$, we leverage the observation-aligned binary mask $\hat{\bm{M}}\in\{0,1\}^{d}$ generated by our guided BFN to execute the intersection-based partition.
Specifically, we define context region $\bm{M}_{\text{ctx}}$ as the intersection between actual observed fields $\bm{M}$ and BFN conditional generated fields $\hat{\bm{M}}$. The query region $\bm{M}_{\text{qry}}$ is the remaining portion after excluding the observable area of the context region. Specifically, 
\begin{equation}
    \bm{M}_{\text{ctx}} = \hat{\bm{M}} \odot \bm{M}, \bm{M}_{\text{qry}} = \bm{M} \odot (\bm{1} - \bm{M}_{\text{ctx}}).
\end{equation}

\paragraph{Training objective and information withholding.} During training of the main imputation model, the network $\bm{u}_{\bm{\phi}}$ is fed exclusively with the context elements $\bm{M}_{\text{ctx}} \odot \bm{u}_{\text{obs}}$ and the context mask $\bm{M}_{\text{ctx}}$. The optimization objective is to minimize the Mean Squared Error (MSE) between the network’s prediction and the clean observations, but \textit{only} within the query regions:
\begin{equation}
    \mathcal{L} = \mathbb{E} \left[ \| \bm{M}_{\text{qry}} \odot \left( \bm{u}_{\bm{\phi}}(\bm{M}_{\text{ctx}} \odot \bm{u}_{\text{obs}}, \bm{M}_{\text{ctx}}) - \bm{u}_{\text{obs}} \right) \|^{2} \right].
\end{equation}
By deliberately withholding the query information $\bm{M}_{\text{qry}}$ during the forward pass, we force the model to abandon local pixel memorization and instead learn the spatial correlations between different physical regions. The model must ``fill in the blanks'' by understanding the underlying physical dynamics. Because the query region is selected by the conditional BFN to match realistic physical occlusions, the model focuses its learning capacity on the most challenging and meaningful spatial transitions.

\paragraph{Convergence to conditional expectation.} Under this prediction-driven training objective, the network naturally learns to approximate the conditional expectation $\mathbb{E}[\bm{u}_{0} | \bm{M}_{\text{ctx}} \odot \bm{u}_{\text{obs}}, \bm{M}_{\text{ctx}}]$. 
Crucially, as mathematically guaranteed by Theorem~\ref{thm:strict_positivity_under_ratio_guided}, our guided partitioning inherently satisfies the strict positivity requirement for every valid spatial dimension. By eliminating the heuristic bottleneck identified in Sec.~\ref{sec:preliminary}, this ensures that no part of the observation space is ignored. Over many iterations, the model develops a robust ability to reconstruct missing spatial domains. Ultimately, this enables the framework to transfer its reconstructive capabilities from synthetically masked training subsets to the inherently missing regions of real-world data, successfully bridging the gap without ever requiring access to any fully complete observation.

\noindent
\begin{minipage}[t]{0.48\textwidth}
\vspace{0pt} 
\begin{algorithm}[H]
\small 
\caption{Guided BFN Mask Generation}
\label{alg:bfn_mask}
\begin{algorithmic}[1]
\REQUIRE $\bm{M}$, pre-trained BFN $\bm{\hat{e}_\theta}$, $w_g$, $n$, $\rho$
\ENSURE generated mask $\bm{\hat{M}}$
\STATE anchor $\bm{y}_{\text{i}} = \mathds{1}[\bm{r}_{\text{i}} < \rho] \cdot \bm{M}_{\text{i}}$ , $\bm{r}_i \sim \text{Uni.}(0,1)$
\STATE time schedule $\{t_i\}_{i=0}^n$ ($t_{0}=1, t_{n}=0$)
\STATE sample $\bm{x}_{t_{0}} \sim \mathcal{N}(\bm{0}, \bm{I})$
\FOR{$i = 0, \dots, n-1$}
    \STATE $\hat{\bm{x}}_0\leftarrow K\bm{\hat{e}_{\bm{\theta}}}(t_i,\softmax(\bm{x}_{t_i}))$
    \STATE $\bm{x}_{t_{i+1}}^{\text{base}} \leftarrow \alpha_{t_{i+1}} \hat{\bm{x}}_0 + \sigma_{t_{i+1}} (\alpha_{t_{i}} \hat{\bm{x}}_0 - \bm{x}_{t_{i}}) / \sigma_{t_{i}}^2$
    \STATE $\bm{x}_{t_{i+1}} \leftarrow \bm{x}_{t_{i+1}}^{\text{base}} - w_g\nabla_{\bm{x}_{t_i}}\mathcal{L}_\text{guid}(\bm{x}_{t_i}, y)$
\ENDFOR
\STATE $\bm{\hat{M}} \leftarrow \arg\max \bm{\frac{1}{K}(\bm{x}_{\text{0}} + 1)}$
\RETURN $\bm{\hat{M}}$
\end{algorithmic}
\end{algorithm}
\end{minipage}
\hfill
\begin{minipage}[t]{0.48\textwidth} 
\vspace{0pt} 
\begin{algorithm}[H]
\small
\caption{Train Imputation with Partition}
\label{alg:main_training}
\begin{algorithmic}[1]
    \REQUIRE dataset $\mathcal{D} = \{(\bm{u}_{\text{obs}}^{(i)}, \bm{M}^{(i)})\}_{i=1}^N$
    \ENSURE imputation model parameters $\bm{\phi}$
    \WHILE{not converged}
    \FOR{each batch $(\bm{u}_{\text{obs}}, \bm{M}) \in \mathcal{D}$}
        \STATE $\bm{\hat{M}} \leftarrow$ \text{Algorithm \ref{alg:bfn_mask}}($\bm{u}_{\text{obs}}, \bm{M}, \dots$)
        \STATE $\bm{M}_{\text{ctx}} \leftarrow \bm{\hat{M}} \odot \bm{M}$
        \STATE $\bm{M}_{\text{qry}} \leftarrow \bm{M} \odot (1 - \bm{M}_{\text{ctx}})$
        \STATE $\bm{\hat{u}} \leftarrow \bm{u_{\bm{\phi}}}(\bm{M}_{\text{ctx}} \odot \bm{u}_{\text{obs}}, \bm{M}_{\text{ctx}})$
        \STATE $\mathcal{L} \leftarrow ||\bm{M}_{\text{qry}} \odot (\hat{\bm{u}} - \bm{u}_{\text{obs}})||^2$
        \STATE update $\bm{\phi}$ using gradient descent on $\mathcal{L}$
    \ENDFOR
    \ENDWHILE
\end{algorithmic}
\end{algorithm}
\end{minipage}

%% file: 4experiment.tex
\section{Experiments} \label{sec: section experiments}
\subsection{Baselines}\label{sec:experiment-baselines}
To evaluate the effectiveness of our framework, we compare it against the following established diffusion-based reconstruction approaches. \textbf{DINDiff}~\citep{barth2024ensemble} utilizes a denoising diffusion model with hierarchical masking to reconstruct incomplete satellite observations via ensemble averaging. \textbf{Ambient Diffusion}~\citep{daras2023ambientdiffusionlearningclean} reconstructs uncorrupted distributions directly from highly corrupted training data by incorporating known degradation processes into the forward pass. \textbf{MissDiff}~\citep{ouyang2023missdifftrainingdiffusionmodels} handles incomplete datasets by optimizing a unified diffusion objective that effectively marginalizes over unobserved variables. In addition to these evaluated methods, we also explored several other baselines in Appendix~\ref{app: baseline_and_method_comparison}.

For our method, we first pre-train a dataset-specific BFN model $\bm{\hat{e}_\theta}$ on the observation masks, freezing its parameters during the primary imputation training to maintain stable generative priors. Key hyperparameters including retention ratio $\rho$, guidance scale $w_g$, and sampling steps $n$ (see Algorithm~\ref{alg:bfn_mask}) are set empirically, with a detailed sensitivity analysis provided in Appendix~\ref{app: implement_details}.

\subsection{Datasets} \label{sec:experiment-datasets}
We conduct comprehensive experiments on several real-world high-resolution datasets to validate our framework's performance in reconstructing complex physical dynamics. Regarding the data format, we processed the datasets into partially observed gridded fields and observation mask data pairs. Detailed information about these datasets is provided in Appendix~\ref{app: dataset settings}.

\textbf{Black Sea CHL} (Black Sea Chlorophyll-a Dataset)~\citep{letraon:hal-03405376}. This dataset offers daily multi-sensor observations of the Black Sea at a 1-km resolution, fundamentally measuring and visualizing chlorophyll-a concentration. We compiled a dataset consisting of approximately 10,000 samples.

\textbf{Baltic Sea NANO} (Baltic Sea Nano-Phytoplankton Dataset)~\citep{letraon:hal-03405376}. This dataset provides daily Baltic Sea ocean plankton observations at a 1-km resolution. It specifically measures and visualizes the nano-phytoplankton concentration, an essential metric for analyzing marine ecological structures and phytoplankton community dynamics. Our dataset comprises roughly 5,000 samples.

\textbf{Global Ocean SSS} (Global Sea Surface Salinity Dataset)~\citep{letraon:hal-03405376}. This dataset provides global, daily satellite observations at a 25-km resolution, explicitly measuring and visualizing Sea Surface Salinity, which is a crucial variable for oceanographic and climate studies. We picked up around 6,000 samples as our dataset. 

\subsection{Experimental Results}
\paragraph{Evaluation.} We evaluate the model performance by overlaying another mask over the existing observations. Specifically, we extract another observation mask $\bm{\tilde{M}}$ from the remaining dataset. Overlay it onto the existing observations $\bm{M}_{\text{eval}}$, and use the intersection of $\bm{\tilde{M}} \odot \bm{M}_{\text{eval}}$ as the new observation mask $\bm{M}$. The model will then predict the remaining portion given $\bm{M}$ and $\bm{M} \odot \bm{u}_{\text{obs}}$. The evaluation will be conducted in the region $\bm{M}_{\text{eval}}\odot(1-\bm{M})$ where real values naturally exist as evaluation criteria. To assess reconstruction quality, we use three physically meaningful metrics: \textbf{MSE Loss} measures the mean squared error on masked observed regions using random real-mask overlays. \textbf{PSNR} assesses global signal fidelity. \textbf{Cross-Boundary Gradient Discrepancy (CBGD)} quantifies edge artifacts by comparing Sobel gradient ratios across mask boundaries. The closer this value is to 1, the smoother the generated image will be at the boundary of the observation area.

\begin{table}[tb]
\centering
\caption{Performance comparison on physical dynamics imputation tasks with the resolution of $64 \times 64$ across three real-world datasets.}
\label{tab: main results}
\resizebox{\textwidth}{!}{
\begin{tabular}{c|ccccccccc}
\toprule
\multicolumn{1}{c}{\multirow{2}{*}{\textbf{Method}}} & \multicolumn{3}{c}{\textbf{Black Sea CHL}}    & \multicolumn{3}{c}{\textbf{Baltic Sea NANO}}   & \multicolumn{3}{c}{\textbf{Global Ocean SSS}}    \\ \cmidrule(lr){2-4} \cmidrule(lr){5-7} \cmidrule(lr){8-10} 
\multicolumn{1}{c}{} & MSE $\downarrow$ & PSNR $\uparrow$ & CBGD & MSE $\downarrow$ & PSNR $\uparrow$ & CBGD & MSE $\downarrow$ & PSNR $\uparrow$ & CBGD \\ \midrule
AmbientDiff   & 0.228 & 17.717 & 1.013 & 0.365 & 15.466 & 0.906 & 0.286 & 18.022 & 0.774 \\ 
DINDiff       & 0.306 & 15.355 & 0.757 & 0.322 & 15.788 & 0.695 & 0.408 & 16.373 & 0.748 \\ 
MissDiff      & 0.606 & 13.575 & 0.843 & 0.579 & 13.672 & 0.858 & 0.413 & 16.644 & 0.748 \\ 
\midrule
\textbf{Ours} & \textbf{0.193} & \textbf{17.825} & 1.146 & \textbf{0.294} & \textbf{16.101} & 1.161 & \textbf{0.276} & \textbf{18.192} & 0.846 \\
\bottomrule
\end{tabular}
}
\end{table}

\paragraph{Results.} Tab.~\ref{tab: main results} presents the quantitative evaluation of our framework against established diffusion-based imputation baselines in three real-world oceanographic datasets. In addition to the experiments at 64×64 resolution, we also scaled up to $128 \times 128$ and $256 \times 256$ resolution (see Tab.~\ref{tab: scale up to 128} and Tab.~\ref{tab: scale up to 256}). Our method consistently achieves superior performance, yielding the lowest MSE Loss and the highest PSNR across all evaluated scenarios. While baselines such as AmbientDiff suffer from severe structural information gaps due to their reliance on random spatial masking, and approaches like DINDiff and MissDiff struggle to maintain global fidelity under severe sparsity, our framework robustly reconstructs complex physical dynamics. By leveraging our proposed context-query partitioning strategy, the model efficiently concentrates its learning capacity on critical missing regions, successfully bridging deterministic observed anchors with highly coherent, physically realistic generated structures.

\subsection{Uncertainty Quantification of Query Distribution}

\begin{figure}[htbp]
    \begin{minipage}[c]{0.55\textwidth}
        \centering
        \begin{subfigure}{0.48\linewidth}
            \centering
            \includegraphics[trim=0 0 65pt 0, clip, height=0.95\linewidth]{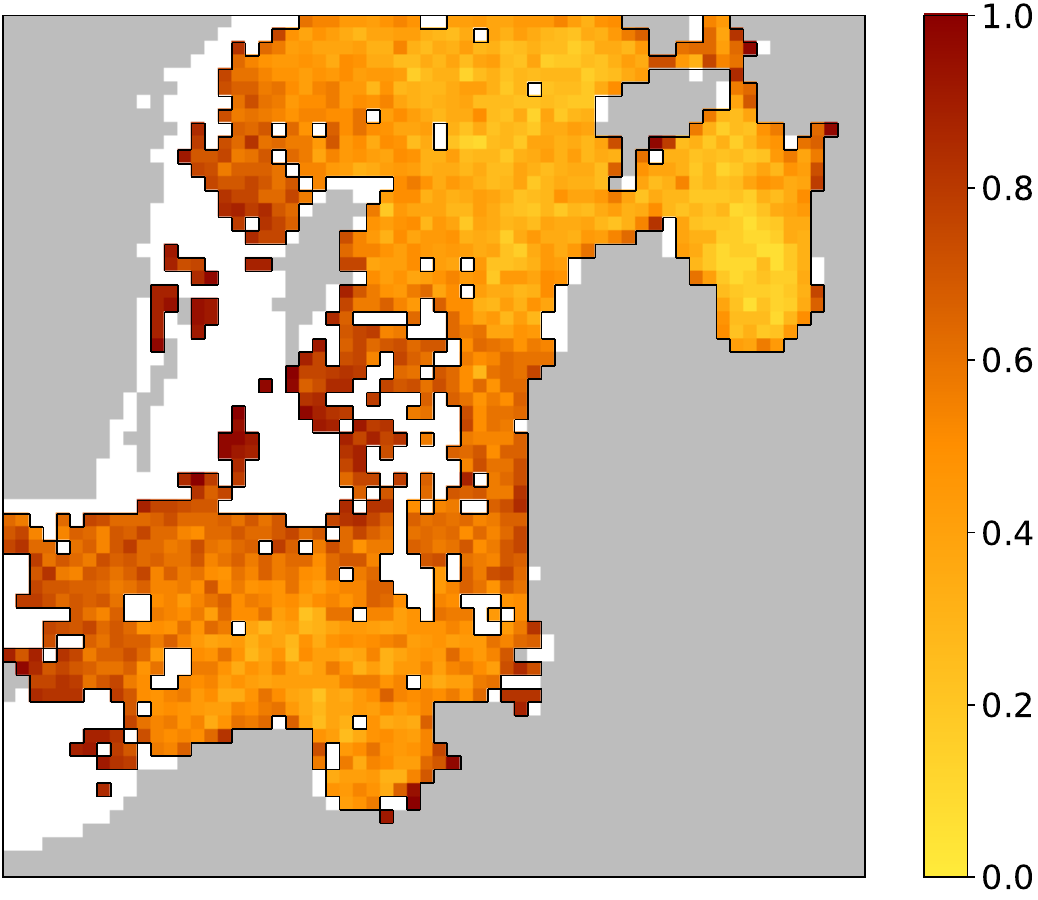}
            \caption{Ours}
            \label{fig:zhengwen_nano_bfn_qry_heatmap_appendix}
        \end{subfigure}\hfill
        \begin{subfigure}{0.48\linewidth}
            \centering
            \includegraphics[height=0.95\linewidth]{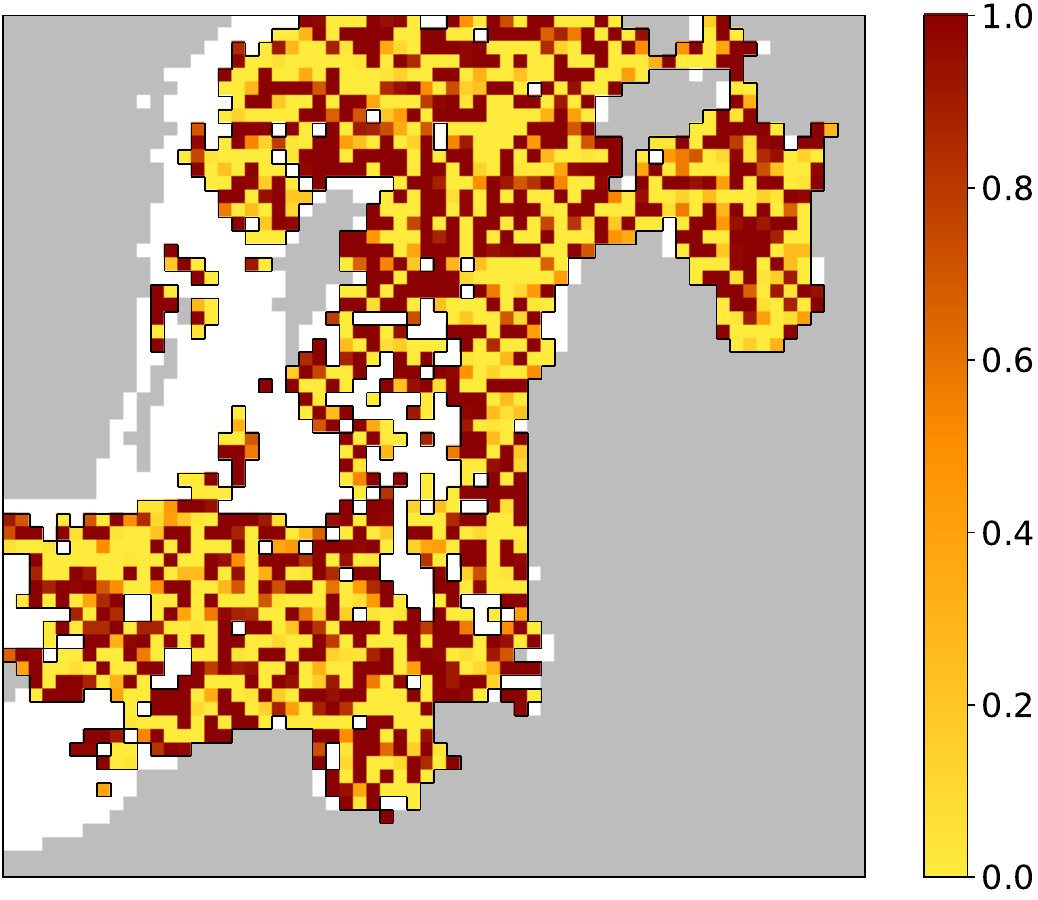}
            \caption{Random Select}
            \label{fig:zhengwen_nano_random_qry_heatmap}
        \end{subfigure}
        \caption{Query probability distributions of our method and naive random selection strategy.}
        \label{fig:zhengwen_qry_heatmap}
    \end{minipage} \hfill 
    \begin{minipage}[c]{0.4\textwidth}
        \centering
        \captionof{table}{Comparison of different context-query partitioning strategies. Here the baseline ``Unconditional'' refers to BFN unconditional generation which reflects the result of Theorem \ref{thm:uniform_query_exposure}.}
        \label{tab: mask method comparison}
        \resizebox{\linewidth}{!}{
            \begin{tabular}{c|cc}
            \toprule
            Methods & MSE $\downarrow$ & PSNR $\uparrow$\\
            \midrule
            Pixel-level       & 0.228 & 17.72 \\
            Block-wise        & 0.248 & 16.41 \\
            Saliency-driven   & 0.317 & 16.28 \\
            Empirical         & 0.225 & 16.62 \\
            Unconditional     & 0.216 & 16.78 \\
            Ours & \textbf{0.193} & \textbf{17.83} \\
            \bottomrule
            \end{tabular}
        }
    \end{minipage}
\end{figure}

To validate that our observation-aligned guided partitioning strategy strictly preserves the positivity of query probabilities as established in Theorem~\ref{thm:strict_positivity_under_ratio_guided}, we design an empirical study to quantify the spatial distribution of query assignments.
\paragraph{Experiment design.} We first train a binary classifier to learn to predict the probability of each pixel being assigned to the query region given only the context mask $\bm{M}_\text{ctx}$. Then we repeatedly sample from our conditionally guided Bayesian Flow Network (BFN) under the intersection constraint $C_k$ (regulated by our retention ratio $\rho$). By generating an ensemble of context-query partitions for the same observation field, we compute the empirical query probability, denoted as $P((\bm{M}_\text{qry})_i=1|C_k)$, across the entire spatial domain.

\paragraph{Verification of strictly positive query probabilities.} To explicitly demonstrate the effectiveness of our partitioning mechanism, we compare the mean $P(qry)$ spatial distribution of our method against a naive random selection baseline in Fig.~\ref{fig:zhengwen_qry_heatmap}. As shown, the naive strategy yields a highly fragmented distribution plagued by ``zero-query'' regions (where $P(qry)$ approaches 0), which inevitably leads to localized generative collapse. In contrast, it demonstrates that the query probability distribution of our method is strictly positive throughout the spatial domain. This empirical evidence directly corroborates Theorem~\ref{thm:strict_positivity_under_ratio_guided}: by dynamically anchoring mask generation to the learned physical prior without perfectly memorizing the deterministic constraints, our framework mathematically and practically guarantees that no valid spatial dimension is systematically ignored during the partitioning process. An extended comparative analysis across additional datasets is provided in Appendix~\ref{app:uq_qry}.

\paragraph{Heuristic reliability indicator.}  In Appendix~\ref{app:learning dynamics basis and analysis}, we detail the relationship between query probability and gradient updates, illustrating that the $P(qry)$ distribution can provide a heuristic indication of the reconstruction reliability during inference.

\subsection{Ablation Study}\label{sec:experiment-ablation}

\paragraph{Context mask selection strategy comparison.} Proper selection of $\bm{M}_{\text{ctx}}$ is crucial to prevent information gaps. Tab.~\ref{tab: mask method comparison} reveals that existing baselines face a critical trade-off: heuristic partitions (Pixel/Block-wise) lose physical coherence, non-adaptive methods ignore spatial topology, and saliency-driven approaches sacrifice large-scale structural anchors, worsening global PSNR. Our strategy resolves these trade-offs via cross-entropy guidance, which adaptively optimizes context proportion and placement to balance global anchors with local dynamics for superior reconstruction (details in Appendix~\ref{app:context_mask_selection_strategy_comparison}).

\paragraph{Sampling method comparison.}
To reconstruct complete ocean dynamics from sparse observations, the sampling strategy must effectively bridge the deterministic observed regions and the stochastic generated regions. We evaluate our approach against several sampling methods in Appendix~\ref{app:sampling_method_comparison}.



\begin{figure}[tb]
    \centering
    \begin{minipage}[c]{0.32\textwidth}
        \centering
        \small Black Sea CHL\vspace{1mm}\\
        \begin{minipage}[t]{0.48\linewidth}
            \centering
            \includegraphics[width=\linewidth]{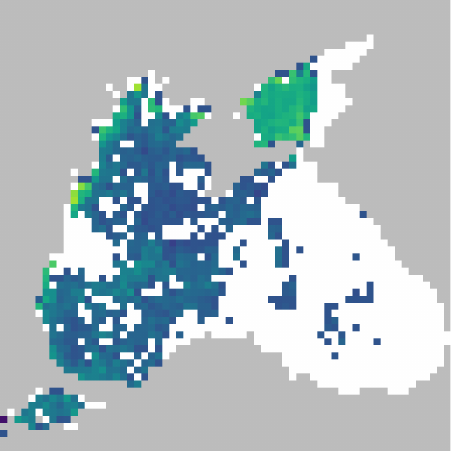}\\
            \scriptsize (a) GT
        \end{minipage}\hfill
        \begin{minipage}[t]{0.48\linewidth}
            \centering
            \includegraphics[width=\linewidth]{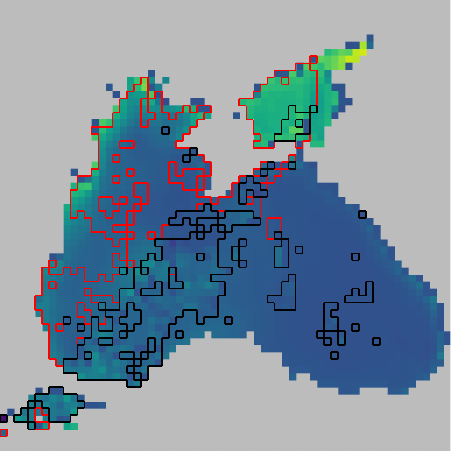}\\
            \scriptsize (b) Imputation
        \end{minipage}
    \end{minipage}\hfill
    \begin{minipage}[c]{0.32\textwidth}
        \centering
        \small Global Ocean SSS\vspace{1mm}\\
        \begin{minipage}[t]{0.48\linewidth}
            \centering
            \includegraphics[width=\linewidth]{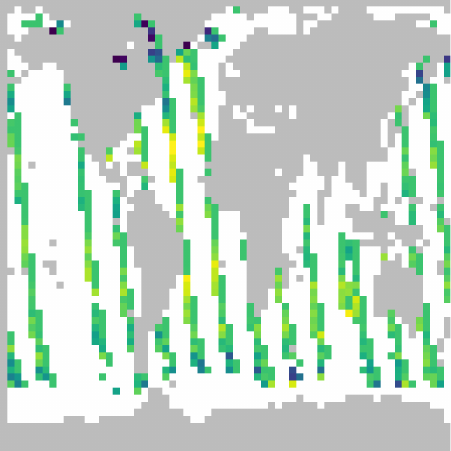}\\
            \scriptsize (a) GT
        \end{minipage}\hfill
        \begin{minipage}[t]{0.48\linewidth}
            \centering
            \includegraphics[width=\linewidth]{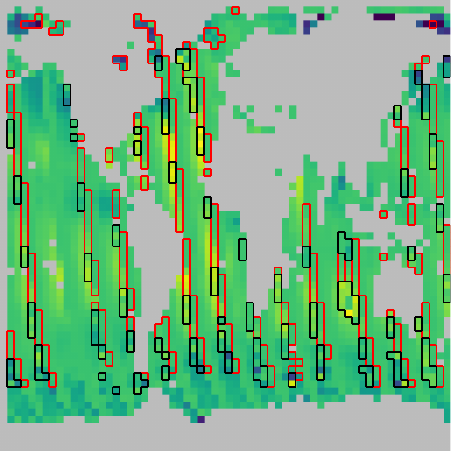}\\
            \scriptsize (b) Imputation
        \end{minipage}
    \end{minipage}\hfill
    \begin{minipage}[c]{0.32\textwidth}
        \centering
        \small Baltic Sea NANO\vspace{1mm}\\
        \begin{minipage}[t]{0.48\linewidth}
            \centering
            \includegraphics[width=\linewidth]{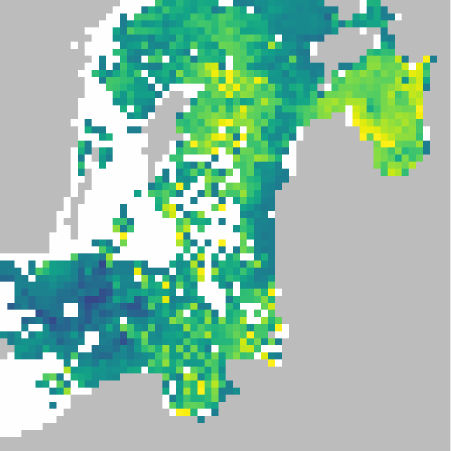}\\
            \scriptsize (a) GT
        \end{minipage}\hfill
        \begin{minipage}[t]{0.48\linewidth}
            \centering
            \includegraphics[width=\linewidth]{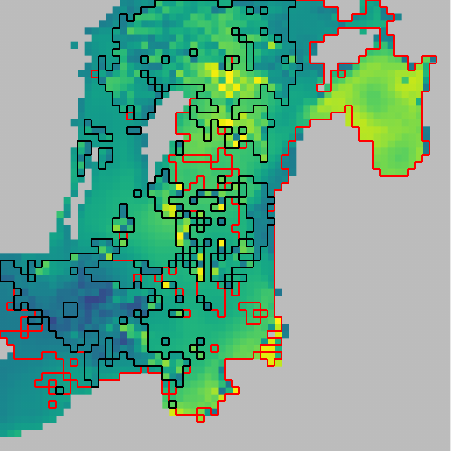}\\
            \scriptsize (b) Imputation
        \end{minipage}
    \end{minipage}
    
    \caption{Visualization of data imputation results on Black Sea CHL, Global Ocean SSS, and Baltic Sea NANO with the resolution of $64 \times 64$ (grouped from left to right). For each dataset, the left panel shows the ground truth (GT) and the right panel displays our imputation result. The gray areas represent land, and the white areas represent the missing regions. The area outlined in black is the visible input region for the model, while the areas framed in red indicate the region used to evaluate the reconstruction results.}
    \label{fig:main_figure_label}
\end{figure}

%% file: 5conclusion.tex
\section{Conclusion}
This paper introduces Observation-Aligned Mask Priors, a novel framework for learning physical dynamics directly from incomplete data with authentic, structured occlusions. Instead of relying on predefined or heuristic masking, our approach uses a Bayesian Flow Network to learn the distribution of real observation masks, guiding the generation to align with specific sparse observations to construct dynamic context-query partitions. We mathematically prove that this intersection-based partitioning guarantees a strictly positive query probability for every valid observed dimension, effectively eliminating zero-query dead zones and preventing local generative collapse. Extensive experiments on three real-world oceanographic datasets across multiple resolutions demonstrate that our method consistently outperforms strong diffusion baselines in reconstruction accuracy and global signal fidelity. Ultimately, this work establishes that learning data-driven occlusion priors is a highly effective alternative to heuristic masking for reconstructing complex physical fields without access to fully observed training data.

%% file: 7appendix.tex
\input{appendix/related}
\input{appendix/bfn_diffusion}

\input{appendix/proofs}

\input{appendix/learning_dynamics}
\input{appendix/experiment}
\input{appendix/context_select_methods}
\input{appendix/sampling_methods}

\input{appendix/limitation}
\input{appendix/llm_usage}

%% file: appendix/related.tex
\section{Related Work} \label{app: related work}
\subsection{Learning from Incomplete Data}
Recent generative approaches for incomplete data face critical limitations when applied to complex physical dynamics. Diffusion methods like MissDiff~\citep{ouyang2023missdifftrainingdiffusionmodels} and Generative Geostatistical Modeling~\citep{erdinc2024generativegeostatisticalmodelingincomplete} rely on heuristic masking strategies that implicitly assume Missing At Random (MAR) conditions. Lacking a physics-guided context-query partitioning mechanism, they struggle to capture essential long-range spatial correlations. Similarly, continuous-time mask-aware paradigms such as Impute-MACFM~\citep{liu2025imputemacfmimputationbasedmaskaware} are tailored for tabular data, missing the structural inductive biases required for global fluid continuity. Furthermore, theoretically complex frameworks encounter hard boundaries in physical scenarios. For example, MIRI~\citep{yu2025missingdataimputationreducing} relies on a strict MAR assumption that fundamentally fails in non-random (MNAR) oceanographic occlusions, while incurring prohibitive computational costs. In addition, latent-space diffusion models, such as LSSDM~\citep{liang2024latentspacescorebaseddiffusion}, depend entirely on pre-trained Variational Autoencoders, which currently cannot losslessly encode high-dimensional ocean states without violating governing physical conservation laws. Furthermore,~\citep{zhou2025incomplete} propose a fundamental framework for learning physical systems from incomplete observations by dividing valid regions into context and query components. A more detailed technical overview of this framework is provided in Sec.~\ref{sec:preliminary}.

\subsection{Discrete Generation via Diffusion and Flow Models}
Classic generative models for categorical spaces often struggle to incorporate smooth, gradient-based guidance due to their hard discrete transitions. To address discrete generation, several advanced diffusion and flow frameworks have recently emerged. Multinomial diffusion~\citep{hoogeboom2021argmaxflowsmultinomialdiffusion} extends the standard continuous diffusion framework to categorical distributions by iteratively adding categorical noise through uniform transition matrices. Generalizing this approach, structured discrete diffusion models (D3PM)~\citep{austin2023structureddenoisingdiffusionmodels} introduce specialized transition matrices, such as absorbing and discretized Gaussian states, allowing the forward process to explicitly respect domain priors and ordinal relationships between categories. Concurrently, discrete flow matching~\citep{gat2024discreteflowmatching} introduces a probability path framework utilizing continuous-time Markov chains (CTMCs) to define a probability velocity field for categorical data.

Despite their theoretical elegance, adapting these explicitly discrete frameworks for zero-shot physical imputation is restricted. These discrete frameworks fundamentally rely on classifier-free guidance which strictly necessitates conditional labels during training, or otherwise lack a native, differentiable continuous latent space. Bayesian Flow Networks (BFNs)~\citep{graves2025bayesianflownetworks} solve this hard discrete transition problem by modeling discrete data with continuous parameters on a probability simplex. Recent works~\citep{xue2024unifying} further unify BFNs with stochastic differential equations to highlight their continuous dynamics. However, standard BFNs mostly focus on unconditional generation. In physical dynamic imputation tasks, the model must dynamically align with sparse observations. While some gradient-guided BFNs exist~\citep{qiu2024empower}, they rely on heavy historical trajectory corrections. Therefore, existing discrete generation frameworks lack a mechanism for instantaneously gradient-based guidance, making it difficult to seamlessly guide discrete mask generation with real physical observations.

\subsection{Diffusion for PDE Problem}
Generative modeling for partial differential equations (PDEs) has shifted from purely data-driven mapping to physics-informed diffusion frameworks. Standard models like Fourier Neural Operator (FNO)~\citep{li2021fourierneuraloperatorparametric} effectively learn solution operators in frequency domains but typically require complete, high-resolution training data to capture the underlying physical dynamics. To improve long-term rollouts and capture high-frequency details, diffusion-based solvers such as Dyffusion~\citep{cachay2023dyffusiondynamicsinformeddiffusionmodel} and PDE-Refiner~\citep{lippe2023pderefinerachievingaccuratelong} leverage score-based generation and multi-step refinement, enhancing the statistical fidelity of complex fluid forecasting. Furthermore, Physics-Informed Diffusion Models (PIDMs)~\citep{bastek2025physicsinformeddiffusionmodels} explicitly embed PDE constraints directly into the generative process, reducing residual errors by ensuring that the sampled trajectories strictly adhere to governing physical laws. For inverse problems and partial observations, approaches like DiffusionPDE~\citep{huang2024diffusionpdegenerativepdesolvingpartial} and conditional score-based assimilation models~\citep{shysheya2024conditionaldiffusionmodelspde} incorporate physics-informed losses and observation likelihoods to guide the denoising process. Recent frameworks like Ambient Physics~\citep{majid2026ambientphysicstrainingneural} further attempt to learn the joint distribution of coefficient-solution pairs directly from partial observations by randomly masking known measurements during training.

However, these methods primarily treat sparse inputs as static constraints and often depend on pre-defined, idealistic masking patterns. However, when faced with the complex, non-random occlusions typical of oceanographic data, they lack the adaptive partitioning needed to maintain physical consistency across unknown boundaries. Consequently, bridging the gap between unstructured sparse measurements and structured latent dynamics remains a significant challenge for existing PDE-based diffusion frameworks.

%% file: appendix/bfn_diffusion.tex
\section{Methodological Connections and Distinctions} \label{app: Methodological Connections and Distinctions}
This section contextualizes our proposed mask generation module within the broader landscape of probabilistic generative models, specifically detailing its theoretical connections to and differences from the original BFNs and their recent continuous-time reinterpretation.

\subsection{Connections and Differences with Original BFNs}
Our mask generation module is fundamentally inspired by BFNs introduced by~\citep{graves2025bayesianflownetworks}.

\paragraph{Theoretical connections.}
\begin{itemize}
    \item \textbf{Continuous modeling of discrete data:} Both frameworks address the challenge of modeling discrete variables (such as our binary masks) by operating on a continuous parameter space. In the original BFN, the network inputs for discrete data lie on the probability simplex, making the process natively differentiable. We similarly project our noisy latent variables onto the probability simplex, ensuring the network processes continuous probabilities $\softmax(\bm{x}_t)$ rather than discrete tokens.
    \item \textbf{Data Prediction Objective:} Original BFNs train the network to output the parameters of a clean distribution based on Bayesian-updated input parameters. Our framework adopts a mathematically analogous data prediction formulation, directly outputting predicted class probabilities $\hat{\bm{e}}_{\bm{\theta}}$ to estimate the clean discrete categories.
\end{itemize}

\paragraph{Key differences.}
\begin{itemize}
    \item \textbf{Forward process formulation:} The original BFN framework defines the generative process strictly through Bayesian inference, iteratively updating the parameters of an input distribution based on noisy ``sender'' samples. In contrast, our framework explicitly recasts these dynamics into a continuous diffusion-style forward process. We treat the discrete categories as continuous targets using a scaled logit representation $\bm{x}_0 = K\bm{e}_c$ and directly corrupt this continuous target by injecting Gaussian noise.
    \item \textbf{Guidance and intervention:} BFNs were primarily formulated to optimize data compression via maximum likelihood. Our reinterpretation is specifically designed to facilitate gradient-based intervention in the latent space during sampling. This allows us to introduce a classifier guidance mechanism that dynamically aligns the generated masks with sparse physical observations, a capability not natively explored in the standard BFN formulation.
\end{itemize}

\subsection{Connections and Differences with SDE-Unified BFNs}
Our continuous reformulation of BFNs shares conceptual similarities with recent work by~\citep{xue2024unifying}, which unifies BFNs and Diffusion Models (DMs) through SDEs.

\paragraph{Theoretical connections.}
\begin{itemize}
    \item \textbf{Latent Variable Perspective:} \citep{xue2024unifying} demonstrates that BFNs on discrete data implicitly operate as a linear SDE on a set of continuous latent variables $\bm{z}(t)$, which the original BFN formulation marginalizes out. Our formulation explicitly adopts this continuous latent perspective, defining a forward diffusion process $\bm{x}_t$ on scaled logits.
    \item \textbf{Alignment with Diffusion Objectives:} \citep{xue2024unifying} formally prove that the continuous-time regression loss of a discrete BFN is a reparameterized Denoising Score Matching (DSM) loss. Our framework embraces this alignment, explicitly reparameterizing the noise matching objective into a direct discrete data matching objective $\mathcal{L}_{\text{DM-discrete}}$.
\end{itemize}

\paragraph{Key differences.}
\begin{itemize}
    \item \textbf{Primary Objective:} The primary goal of \citep{xue2024unifying}'s SDE unification is to derive specialized, high-order ODE and SDE solvers (BFN-Solvers) to significantly accelerate the sampling speed of BFNs with a limited number of function evaluations. Conversely, our continuous formulation is not engineered for sampling acceleration, but rather to provide a principled pathway for physics-guided conditioning.
    \item \textbf{Application in the Pipeline:} While \citep{xue2024unifying} aims to improve BFNs as standalone generative models for general text and image datasets, our framework utilizes the BFN specifically as a dynamic spatial partitioning engine. The outputs of our guided BFN dictate the context ($\bm{M}_{\text{ctx}}$) and query ($\bm{M}_{\text{qry}}$) regions for a secondary continuous diffusion model. This ensures the primary model focuses its learning capacity on reconstructing missing physical dynamics rather than simply matching unstructured data.
\end{itemize}

\subsection{Connections and Differences with Gradient-Guided BFNs (MolJO)}
Recently, \citep{qiu2024empower} proposed MolJO, which extends BFNs with gradient-based guidance for structure-based molecule optimization (SBMO). While both works leverage BFN's capacity for guided generation, our objectives and architectural choices diverge significantly.

\paragraph{Theoretical connections.}
\begin{itemize}
    \item \textbf{Gradient guidance in $\bm{\theta}$ space:} Both methods capitalize on the insight that BFN's Bayesian posterior parameter $\bm{\theta}$ constitutes a continuous, differentiable space naturally suited for gradient-based optimization. We similarly exploit this property to apply classifier guidance during sampling, allowing the incorporation of external constraints into the mask generation process.
    \item \textbf{Hybrid continuous-discrete modeling:} MolJO addresses the challenge of jointly optimizing continuous coordinates and discrete atom types by guiding their respective $\bm{\theta}$ components. Analogously, our framework represents discrete binary masks via continuous probability simplex parameters (logits), enabling seamless gradient flow for discrete structure generation.
\end{itemize}

\paragraph{Key differences.}
\begin{itemize}
    \item \textbf{Guidance objective and domain:} MolJO targets multi-objective molecular property optimization (e.g., binding affinity, synthesizability) within 3D protein pockets. In contrast, our guidance is physics-driven, designed to align spatial mask sampling with sparse physical observations (e.g., scattered ocean sensor measurements) for scientific data imputation, rather than drug-like property enhancement.
    \item \textbf{Temporal dependency and correction:} MolJO introduces a sophisticated \textit{backward correction} strategy with a sliding window mechanism to align gradients across generative steps and preserve SE(3)-equivariance for molecular geometries. Our framework eschews such historical correction; instead, we focus on instantaneous physics-guided conditioning to dynamically partition spatial contexts for downstream reconstruction.
    \item \textbf{System integration:} While MolJO serves as the terminal generative model producing final molecular structures, our guided BFN operates as a \textit{spatial routing module} within a larger pipeline. The generated masks explicitly dictate context ($\bm{M}_{\text{ctx}}$) and query ($\bm{M}_{\text{qry}}$) regions for a secondary continuous diffusion model, rather than constituting the end-product themselves.
\end{itemize}

%% file: appendix/proofs.tex
\section{Theorems and Proofs} \label{app: Theorems and Proofs}

\input{appendix/proofs/softmax}
\input{appendix/proofs/bfn_noise_data_reparameterization}

\input{appendix/proofs/theorem1}
\input{appendix/proofs/theorem2}

%% file: appendix/proofs/softmax.tex
\subsection{Representational Sufficiency of the Softmax Projection} \label{app: Representational Sufficiency of the Softmax Projection}
In Sec.~\ref{sec: Modeling Mask Distributions via Bayesian Flow Networks}, we model discrete binary masks using a continuous diffusion-style formulation with scaled logit representations $\bm{x}_0 = K \bm{e}_e$. A critical property of this representation is its inherent \textit{shift invariance}: adding any scalar constant to all logits leaves the resulting softmax probabilities unchanged. To eliminate this redundancy and ensure the network processes only the essential geometry of the probability distribution, we explicitly project noisy latent variables onto the probability simplex, feeding $\softmax(\bm{x}_t)$ rather than raw $\bm{x}_t$ into the network. The following theorem provides the theoretical foundation for this architectural choice. It establishes that any function satisfying shift invariance can be uniquely decomposed as a composition with the softmax function, thereby proving that our projection onto the simplex is well-defined and incurs no loss of representational capacity:
\begin{theorem} \label{thm: BFN net softmax}
Let $h : \mathbb{R}^n \to \mathbb{R}^m$ satisfy the translation invariance property $h(\bm{x} + c\mathbf{1}) = h(\bm{x})$, $\forall \bm{x} \in \mathbb{R}^n, c \in \mathbb{R}$. Then there exists a unique function $s : \Delta^{n-1} \to \mathbb{R}^m$ such that $h(\bm{x}) = s(\softmax(\bm{x}))$, $\forall \bm{x} \in \mathbb{R}^n$, where $\Delta^{n-1} = \{\bm{p} \in \mathbb{R}^n : p_i \ge 0, \sum_{i=1}^n p_i = 1\}$ is the probability simplex.
\end{theorem}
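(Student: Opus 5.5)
The plan is to construct $s$ explicitly from a right inverse of the softmax, verify that it works using the translation invariance of $h$, and then settle uniqueness by a surjectivity argument. Before that, one subtlety has to be handled: $\softmax$ maps $\mathbb{R}^n$ onto the \emph{open} simplex $\Delta^{n-1}_{\circ}=\{\bm{p}: p_i>0,\ \sum_i p_i=1\}$, not onto all of $\Delta^{n-1}$. On boundary points, where some $p_i=0$, the values of $s$ are never constrained by the identity $h=s\circ\softmax$. So I would prove existence on $\Delta^{n-1}$ and uniqueness of the restriction $s|_{\Delta^{n-1}_{\circ}}$. Uniqueness on all of $\Delta^{n-1}$ would need an extra convention, such as requiring $s$ to be the continuous extension when one exists. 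I will note this explicitly, since it is the one place the statement as written is slightly too strong.

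\textbf{Step 1 (fibres of softmax).} I will show that $\softmax(\bm{x})=\softmax(\bm{y})$ holds if and only if $\bm{y}=\bm{x}+c\mathbf{1}$ for some $c\in\mathbb{R}$. The "if" direction is immediate, since $e^{x_i+c}/\sum_j e^{x_j+c}=e^{x_i}/\sum_j e^{x_j}$. For "only if", taking logarithms of equal coordinates gives $x_i-\log\sum_j e^{x_j}=y_i-\log\sum_j e^{y_j}$ for every $i$. Hence $y_i-x_i$ equals the same constant $c=\log\sum_j e^{y_j}-\log\sum_j e^{x_j}$ for all $i$.

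\textbf{Step 2 (existence).} Define the right inverse $L:\Delta^{n-1}_{\circ}\to\mathbb{R}^n$ by $L(\bm{p})_i=\log p_i$. Then $\softmax(L(\bm{p}))=\bm{p}/\sum_i p_i=\bm{p}$, which also shows that $\softmax$ is surjective onto the open simplex. Set $s(\bm{p})=h(L(\bm{p}))$ for $\bm{p}$ in the interior, and let $s$ be arbitrary (say $0$) on the boundary. For any $\bm{x}$, the vectors $L(\softmax(\bm{x}))$ and $\bm{x}$ have the same softmax, so by Step 1 they differ by $c\mathbf{1}$. Translation invariance then gives $s(\softmax(\bm{x}))=h(\bm{x}+c\mathbf{1})=h(\bm{x})$.

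\textbf{Step 3 (uniqueness).} Suppose $s_1\circ\softmax=s_2\circ\softmax=h$. Every interior $\bm{p}$ equals $\softmax(L(\bm{p}))$, so $s_1(\bm{p})=h(L(\bm{p}))=s_2(\bm{p})$.

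I expect no real obstacle. The only delicate point is the boundary issue above, which I would resolve by stating uniqueness on the open simplex. If the intended reading requires it, I would add a remark that for continuous $h$ with a continuous extension, uniqueness extends to the closure by density.
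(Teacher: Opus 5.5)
Your proposal is correct and follows essentially the same route as the paper: define $s(\bm{p}) = h(\ln \bm{p})$ on the open simplex, use translation invariance to verify $s \circ \softmax = h$, and obtain uniqueness on the interior from the surjectivity of softmax, with the same boundary caveat that the paper records in a footnote. The only cosmetic difference is your general fibre lemma in Step 1; the paper instead computes $\ln \softmax(\bm{x}) = \bm{x} - \ln\bigl(\sum_j e^{x_j}\bigr)\mathbf{1}$ directly, which is all the existence step needs.
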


\begin{proof}
Note that for any finite $\bm{x} \in \mathbb{R}^n$, we strictly have $\softmax(\bm{x})_i > 0$. Thus, the image of $\mathbb{R}^n$ under the softmax function is the relative interior of the probability simplex, denoted as $\operatorname{int}(\Delta^{n-1}) = \{\bm{p} \in \Delta^{n-1}: p_i > 0\}$.

\paragraph{Existence.}
We explicitly construct the function $s$ on the interior of the simplex. For any $\bm{p} \in \operatorname{int}(\Delta^{n-1})$, we define $s(\bm{p})$ as:
\begin{equation}
    s(\bm{p}) \coloneqq h(\ln \bm{p}),
\end{equation}
where $\ln \bm{p}$ denotes the element-wise natural logarithm, meaning $(\ln \bm{p})_i = \ln(p_i)$. We now verify that this construction satisfies $h(\bm{x}) = s(\softmax(\bm{x}))$ for all $\bm{x} \in \mathbb{R}^n$. Let $\bm{p} = \softmax(\bm{x})$. Taking the natural logarithm of the $i$-th component yields:
\begin{equation}
    \ln(p_i) = \ln\left( \frac{\exp(x_i)}{\sum_{j=1}^n \exp(x_j)} \right) = x_i - \ln\left(\sum_{j=1}^n \exp(x_j)\right).
\end{equation}
Let $c = -\ln\left(\sum_{j=1}^n \exp(x_j)\right)$. Since $c$ is a scalar dependent only on $\bm{x}$, we can express this in vector form as:
\begin{equation}
    \ln \bm{p} = \bm{x} + c\mathbf{1}.
\end{equation}
Substituting this back into our definition of $s$, we obtain:
\begin{equation}
    s(\softmax(\bm{x})) = s(\bm{p}) = h(\ln \bm{p}) = h(\bm{x} + c\mathbf{1}).
\end{equation}
By the premise, $h$ is translation invariant ($h(\bm{x} + c\mathbf{1}) = h(\bm{x})$ for any scalar $c \in \mathbb{R}$). Therefore, we conclude:
\begin{equation}
    s(\softmax(\bm{x})) = h(\bm{x}).
\end{equation}
This proves the existence of $s$. \footnote{\textit{Remark on Domain:} Strictly speaking, the equation $h(\bm{x}) = s(\softmax(\bm{x}))$ only constrains $s$ on the image of the softmax, which is $\operatorname{int}(\Delta^{n-1})$. If evaluated on the boundary $\partial \Delta^{n-1}$ (where some $p_i = 0$), $s$ can be defined arbitrarily without violating the condition, unless one additionally assumes continuity of $h$ and $s$, in which case the unique mapping continuously extends to the boundary.}

\paragraph{Uniqueness.}
Suppose there exist two functions, $s_1$ and $s_2$, defined on $\operatorname{int}(\Delta^{n-1})$ such that for all $\bm{x} \in \mathbb{R}^n$:
\begin{equation}
    s_1(\softmax(\bm{x})) = h(\bm{x}) \quad \text{and} \quad s_2(\softmax(\bm{x})) = h(\bm{x}).
\end{equation}
This trivially implies that for all $\bm{x} \in \mathbb{R}^n$:
\begin{equation}
    s_1(\softmax(\bm{x})) = s_2(\softmax(\bm{x})).
\end{equation}
Let $\bm{p} \in \operatorname{int}(\Delta^{n-1})$ be an arbitrary point in the relative interior of the simplex. Because the softmax function is surjective onto $\operatorname{int}(\Delta^{n-1})$, there exists at least one $\bm{x} \in \mathbb{R}^n$ such that $\softmax(\bm{x}) = \bm{p}$ (for example, $\bm{x} = \ln \bm{p}$). 

Substituting this $\bm{x}$ into the equality above, we obtain:
\begin{equation}
    s_1(\bm{p}) = s_2(\bm{p}), \quad \forall \bm{p} \in \operatorname{int}(\Delta^{n-1}).
\end{equation}
Thus, $s$ is strictly unique on the image of the softmax function.
\end{proof}

%% file: appendix/proofs/bfn_noise_data_reparameterization.tex
\subsection{Reparameterization of BFN from Noise Matching to Discrete Data Matching} \label{app: Reparameterization of BFN from Noise Matching to Discrete Data Matching}

We derive how the standard noise matching objective is reparameterized into the discrete data matching objective $\mathcal{L}_{\text{DM-discrete}}$ (Eq.~\eqref{eq: bfn data matching}).

We represent a discrete category $c \in \{1, \dots, K\}$ via the continuous target $\bm{x}_0 = K \bm{e}_c$, where $\bm{e}_c \in \mathbb{R}^K$ is the one-hot encoding. The forward diffusion process corrupts this target as:
\begin{equation}
    \bm{x}_t = \alpha_t \bm{x}_0 + \sigma_t \bm{\epsilon}, \quad \bm{\epsilon} \sim \mathcal{N}(\bm{0}, \bm{I}).
    \label{eq: bfn forward}
\end{equation}
Applying the noise matching and data matching approaches~\citep{zheng2023improved}:
\begin{subequations} \label{eq: diffusion optimal}
   \begin{alignat}{3}
        \mathcal{J}_{\textsubscript{noise}}(\bm{\theta}) &= \mathbb{E}_{t, \bm{x}_0, \bm{\epsilon}}\left[ w(t) \left\| \bm{\epsilon}_{\bm{\theta}}\left(t, \bm{x}_t\right) - \bm{\epsilon} \right\|^2\right], \quad & \bm{\epsilon}_{\bm{\theta}}^*\left(t, \bm{x}_t\right) &= -\sigma_t \nabla_{\bm{x}} \log p_t\left(\bm{x}_t\right); \label{eq: noise predictor} \\
        \mathcal{J}_{\textsubscript{data}}(\bm{\theta}) &= \mathbb{E}_{t, \bm{x}_0, \bm{\epsilon}}\left[ w(t)\left\| \bm{x}_{\bm{\theta}}\left(t, \bm{x}_t\right) - \bm{x}_0\right\|^2\right], \quad & \bm{x}_{\bm{\theta}}^*\left(t, \bm{x}_t\right) &= \frac{1}{\alpha_t} \bm{x}_t+\frac{\sigma_t^2}{\alpha_t} \nabla_{\bm{x}} \log p_t\left(\bm{x}_t\right). \label{eq: data predictor}
    \end{alignat}
\end{subequations}
As established in Theorem~\ref{thm: BFN net softmax}, the scaled logit representation $\bm{x}_0 = K\bm{e}_c$ is shift invariant, and consequently so is the score $\nabla_{\bm{x}_t} \log p_t(\bm{x}_t)$. By Theorem~\ref{thm: BFN net softmax}, any shift-invariant function admits a unique decomposition as a composition with the softmax map. We may therefore reparameterize the noise predictor without loss of representational capacity as
\begin{equation}
    \bm{\epsilon}_{\bm{\theta}}(t, \bm{x}_t) = \tilde{\bm{\epsilon}}_{\bm{\theta}}(t, \softmax(\bm{x}_t)),
\end{equation}
where $\tilde{\bm{\epsilon}}_{\bm{\theta}}$ takes inputs on the probability simplex. The shift redundancy is thereby eliminated by construction, and the noise matching objective becomes
\begin{equation}
    \mathcal{J}_{\text{noise}}(\bm{\theta}) = \mathbb{E}_{t, c, \bm{\epsilon}}\left[ w_{\text{NM}}(t) \left\| \tilde{\bm{\epsilon}}_{\bm{\theta}}(t, \softmax(\bm{x}_t)) - \bm{\epsilon} \right\|^2\right].
\end{equation}
From Eqs.~\eqref{eq: noise predictor} and~\eqref{eq: data predictor}, the noise and data predictors are mutually determined through the score function:
\begin{equation}
    \bm{x}_{\bm{\theta}}^*(t, \bm{x}_t) = \frac{\bm{x}_t - \sigma_t\, \bm{\epsilon}_{\bm{\theta}}^*(t, \bm{x}_t)}{\alpha_t}.
\end{equation}
Substituting this relation into $\bm{x}_t = \alpha_t \bm{x}_0 + \sigma_t \bm{\epsilon}$ yields
\begin{equation}
    \bm{x}_{\bm{\theta}}(t, \bm{x}_t) - \bm{x}_0 = \frac{\sigma_t}{\alpha_t}\big(\bm{\epsilon} - \bm{\epsilon}_{\bm{\theta}}(t, \bm{x}_t)\big),
\end{equation}
so that data matching and noise matching differ only by a time-dependent reweighting factor $\sigma_t^2/\alpha_t^2$ and are strictly equivalent up to that reweighting. Since the shift-invariance argument depends only on the score (and not on the specific parameterization), the same conclusion applies to the data predictor: the optimal $\bm{x}_{\bm{\theta}}^*$ is shift invariant and admits the softmax decomposition
\begin{equation}
    \bm{x}_{\bm{\theta}}(t, \bm{x}_t) = \tilde{\bm{x}}_{\bm{\theta}}(t, \softmax(\bm{x}_t)).
\end{equation}
Plugging this into Eq.~(\ref{eq: data predictor}) gives the data matching objective with softmax projection:
\begin{equation}
    \mathcal{J}_{\text{data}}(\bm{\theta}) = \mathbb{E}_{t, c, \bm{\epsilon}}\left[ w_{\text{NM}}(t)\cdot \frac{\alpha_t^2}{\sigma_t^2}\left\| \tilde{\bm{x}}_{\bm{\theta}}(t, \softmax(\bm{x}_t)) - \bm{x}_0\right\|^2\right],
\end{equation}
where the prefactor $\alpha_t^2/\sigma_t^2$ arises from the noise/data reweighting derived above. Since $\bm{x}_0 = K\bm{e}_c$, it is natural to parameterize the data predictor as
\begin{equation}
    \tilde{\bm{x}}_{\bm{\theta}}(t, \softmax(\bm{x}_t)) = K\,\hat{\bm{e}}_{\bm{\theta}}(t, \softmax(\bm{x}_t)),
\end{equation}
i.e., the network directly outputs a probability vector $\hat{\bm{e}}_{\bm{\theta}} \in \Delta^{K-1}$. Pulling the constant $K$ out of the squared norm yields exactly a similar optimal solution. We recover the discrete data matching objective stated in the main text:
\begin{equation}
    \mathcal{L}_{\text{DM-discrete}} = \mathbb{E}_{t, c, \bm{\epsilon}}\left[ w(t) \left\| \hat{\bm{e}}_{\bm{\theta}}(t, \softmax(\bm{x}_t)) - \bm{e}_c\right\|^2\right].
\end{equation}
This establishes that training a data predictor on simplex-projected inputs is equivalent to standard noise matching, up to a time-dependent reweighting $w(t)$. The trained discrete data predictor $\hat{\bm{e}}_{\bm{\theta}}$ yields an estimate of the denoised target $\hat{\bm{x}}_0(t, \bm{x}_t) = K\hat{\bm{e}}_{\bm{\theta}}(t, \softmax(\bm{x}_t))$. By Tweedie's formula, this establishes the relation to the score function:
\begin{equation}
\nabla_{\bm{x}_t} \log p_t(\bm{x}_t) = \frac{\alpha_t \hat{\bm{x}}_0(t, \bm{x}_t) - \bm{x}_t}{\sigma_t^2} = \frac{\alpha_t K \hat{\bm{e}}_{\bm{\theta}}(t, \softmax(\bm{x}_t)) - \bm{x}_t}{\sigma_t^2}.
\end{equation}
This allows direct use of standard diffusion sampling ODEs for discrete mask generation. \qed

%% file: appendix/proofs/theorem1.tex
\subsection{Strict Positivity of Query Probabilities via Mask Intersection} \label{app: Strict Positivity of Query Probabilities via Mask Intersection}
\begin{assumption}[Intersection Coverage Completeness] \label{assump: Intersection Coverage Completeness}
    Let $S = \{\bm{x} \in \{0,1\}^d \mid P(\bm{x}) > 0\}$ denote the support of the distribution $P(\bm{M})$. For any valid context mask $\bm{m}$ (i.e., $\exists \bm{x}, \bm{y} \in S$ such that $\bm{x} \odot \bm{y} = \bm{m}$) and any spatial dimension $i$ where $\bm{m}_i = 0$, there exists at least one pair of masks $(\bm{M}_a, \bm{M}_b) \in S \times S$ such that:
    \begin{enumerate}
        \item $\bm{M}_a \odot \bm{M}_b = \bm{m}$ (Intersectability)
        \item $(\bm{M}_a)_i = 1$ (Local Coverage)
    \end{enumerate}
\end{assumption}

\uniformqueryexposure*
\begin{proof}
By the definition of conditional probability, we can expand the target expression as:
\begin{equation}
    P((\bm{M}_{\text{qry}})_i = 1 \mid \bm{M}_{\text{ctx}} = \bm{m}) = \frac{P((\bm{M}_{\text{qry}})_i = 1 \text{ and } \bm{M}_{\text{ctx}} = \bm{m})}{P(\bm{M}_{\text{ctx}} = \bm{m})}
\end{equation}
First, we establish that the denominator is strictly positive. Since $\bm{m}$ is defined as a valid context mask, there exist $\bm{x}, \bm{y} \in S$ such that $\bm{x} \odot \bm{y} = \bm{m}$. By the definition of the support set $S$, $P(\bm{x}) > 0$ and $P(\bm{y}) > 0$. By independence condition, the probability of this specific combination is:
\begin{equation}
    P(\bm{M}_1 = \bm{x}, \bm{M}_2 = \bm{y}) = P(\bm{x}) P(\bm{y}) > 0
\end{equation}
Therefore, the marginal probability of the context mask is bounded from below by this specific combination, implying $P(\bm{M}_{\text{ctx}} = \bm{m}) > 0$.

Next, we analyze the numerator, which represents the joint probability of two events occurring simultaneously: $\bm{M}_1 \odot \bm{M}_2 = \bm{m}$ and $(\bm{M}_{\text{qry}})_i = 1$. We substitute the definition of the query mask into the second condition:
\begin{equation}
    (\bm{M}_{\text{qry}})_i = (\bm{M}_1)_i \cdot (1 - (\bm{M}_{\text{ctx}})_i)
\end{equation}
Given the condition $\bm{M}_{\text{ctx}} = \bm{m}$ and the premise that we are evaluating an unobserved dimension where $\bm{m}_i = 0$, the equation simplifies to:
\begin{equation}
    (\bm{M}_{\text{qry}})_i = (\bm{M}_1)_i \cdot (1 - 0) = (\bm{M}_1)_i
\end{equation}
Thus, the numerator fundamentally computes the probability of the joint event: $\{\bm{M}_1 \odot \bm{M}_2 = \bm{m} \text{ and } (\bm{M}_1)_i = 1\}$.

By Assumption~\ref{assump: Intersection Coverage Completeness}, because $\bm{m}$ is a valid context mask and $\bm{m}_i = 0$, there is guaranteed to exist a specific pair of masks $(\bm{M}_a, \bm{M}_b) \in S \times S$ satisfying $\bm{M}_a \odot \bm{M}_b = \bm{m}$ and $(\bm{M}_a)_i = 1$.

Since $\bm{M}_a \in S$ and $\bm{M}_b \in S$, their independent marginal probabilities are strictly positive ($P(\bm{M}_a) > 0$ and $P(\bm{M}_b) > 0$). Under the independence condition, the probability of sampling this exact pair is:
\begin{equation}
    P(\bm{M}_1 = \bm{M}_a, \bm{M}_2 = \bm{M}_b) = P(\bm{M}_a) P(\bm{M}_b) > 0
\end{equation}

Because we have found at least one valid combination of masks in the distribution that perfectly satisfies the conditions of the numerator, and the probability of sampling this combination is strictly greater than zero, the total probability of the numerator must also be strictly positive:
\begin{equation}
    P((\bm{M}_{\text{qry}})_i = 1 \text{ and } \bm{M}_{\text{ctx}} = \bm{m}) \geq P(\bm{M}_1 = \bm{M}_a, \bm{M}_2 = \bm{M}_b) > 0
\end{equation}

Finally, dividing a strictly positive numerator by a strictly positive denominator yields a strictly positive result:
\begin{equation}
    \frac{P((\bm{M}_{\text{qry}})_i = 1 \text{ and } \bm{M}_{\text{ctx}} = \bm{m})}{P(\bm{M}_{\text{ctx}} = \bm{m})} > 0
\end{equation}
Hence, $P((\bm{M}_{\text{qry}})_i = 1 \mid \bm{M}_{\text{ctx}} = \bm{m}) > 0$, completing the proof.
\end{proof}

%% file: appendix/proofs/theorem2.tex
\subsection{Strict Positivity of Query Probabilities under Ratio-Guided Partitioning} \label{app: Strict Positivity of Query Probabilities under Ratio-Guided Partitioning}

\begin{assumption}[Constrained Coverage Completeness] \label{assump: Constrained Coverage Completeness}
 Let $\bm{M} \in \{0,1\}^d$ be a fixed incomplete observation mask. Let $C_k$ denote the hard constraint that a conditionally generated mask $\hat{\bm{M}}$ must share exactly $k$ valid intersecting pixels with the observation, i.e., $|\hat{\bm{M}} \odot \bm{M}| = k$, where $0 < k < |\bm{M}|$. Let $S_{\text{guided}} = \{\hat{\bm{M}} \in \{0,1\}^d \mid P(\hat{\bm{M}} \mid C_k) > 0\}$ be the support set of the conditionally guided mask distribution. We assume that the physical mask prior does not deterministically collapse on any single spatial coordinate under this volume constraint. Specifically, for any spatial dimension $i$ where $\bm{M}_i = 1$, there exists at least one mask $\hat{\bm{M}}_a \in S_{\text{guided}}$ such that $(\hat{\bm{M}}_a)_i = 0$.
\end{assumption}

This assumption is physically well-founded. In authentic oceanographic observations, missing patterns, such as those driven by dynamic cloud cover or shifting satellite trajectories, exhibit immense spatial diversity. Constraining the intersection volume $k$ merely restricts the \textit{capacity} of the context region without eliminating the inherent \textit{stochasticity} of the physical prior. A well-trained generative model naturally synthesizes multiple distinct, physically plausible occlusion topologies that satisfy the exact same volume constraint, ensuring that no single spatial dimension is deterministically anchored across all valid generations.

\strictpositivityratioguided*
\begin{proof} 
By the definitions of the context and query masks, we can explicitly express the query mask $\bm{M}_{\text{qry}}$ in terms of the fixed observation $\bm{M}$ and the conditionally generated mask $\hat{\bm{M}}$:
\begin{equation}
    \bm{M}_{\text{qry}} = \bm{M} \odot (1 - \hat{\bm{M}} \odot \bm{M})
\end{equation}
Since $\bm{M}$ is a binary mask (i.e., $\bm{M}_i \in \{0, 1\}$), the element-wise multiplication yields $\bm{M} \odot \bm{M} = \bm{M}$. Thus, the equation simplifies to:
\begin{equation}
    \bm{M}_{\text{qry}} = \bm{M} \odot (1 - \hat{\bm{M}})
\end{equation}
We are evaluating a specific spatial dimension $i$ where the authentic observation is valid, meaning $\bm{M}_i = 1$. Substituting this into the simplified query equation for dimension $i$, we obtain:
\begin{equation}
    (\bm{M}_{\text{qry}})_i = \bm{M}_i \cdot (1 - \hat{\bm{M}}_i) = 1 \cdot (1 - \hat{\bm{M}}_i) = 1 - \hat{\bm{M}}_i
\end{equation}
This deterministic relationship reveals that under a fixed observation $\bm{M}$, the event $\{ (\bm{M}_{\text{qry}})_i = 1 \}$ is mathematically equivalent to the event $\{ \hat{\bm{M}}_i = 0 \}$. Consequently, their probabilities under the guided distribution are identical:
\begin{equation}
    P((\bm{M}_{\text{qry}})_i = 1 \mid C_k) = P(\hat{\bm{M}}_i = 0 \mid C_k)
\end{equation}
According to Assumption~\ref{assump: Constrained Coverage Completeness}, because $0 < k < |\bm{M}|$, the constrained support set $S_{\text{guided}}$ preserves sufficient generative diversity. Therefore, there exists at least one valid mask $\hat{\bm{M}}_a \in S_{\text{guided}}$ that satisfies $(\hat{\bm{M}}_a)_i = 0$.

By the definition of the support set $S_{\text{guided}}$, any mask within it has a strictly positive probability of being sampled:
\begin{equation}
    P(\hat{\bm{M}} = \hat{\bm{M}}_a \mid C_k) > 0
\end{equation}
The marginal probability $P(\hat{\bm{M}}_i = 0 \mid C_k)$ is formulated as the sum of the probabilities of all generated masks in $S_{\text{guided}}$ where the $i$-th dimension equals 0. Since $\hat{\bm{M}}_a$ is a member of this set and possesses a strictly positive probability, the entire sum must be strictly bounded away from zero:
\begin{equation}
    P(\hat{\bm{M}}_i = 0 \mid C_k) = \sum_{\hat{\bm{m}} \in S_{\text{guided}}, \hat{\bm{m}}_i = 0} P(\hat{\bm{M}} = \hat{\bm{m}} \mid C_k) \geq P(\hat{\bm{M}} = \hat{\bm{M}}_a \mid C_k) > 0
\end{equation}
It trivially follows that:
\begin{equation}
    P((\bm{M}_{\text{qry}})_i = 1 \mid C_k) > 0
\end{equation}
This completes the proof. 
\end{proof}

\begin{remark}[Connection to Expected Gradients and Theorem~\ref{thm:uniform_query_exposure}] \label{remark: uniform_query_exposure}
Readers may notice a subtle shift in the condition from $\bm{M}_{\text{ctx}}$ in Theorem~\ref{thm:uniform_query_exposure} to $C_k$ in Theorem~\ref{thm:strict_positivity_under_ratio_guided}. This reflects the fundamental architectural shift from random bipartite splitting to observation-anchored generation. In the unconditional case (Theorem~\ref{thm:uniform_query_exposure}), $\bm{M}_{\text{ctx}}$ is highly stochastic, so we must mathematically guarantee that \textit{any} valid randomly formed context leaves a non-zero query probability to avoid zero-gradient dead zones. However, under the guided framework, the actual observation $\bm{M}$ is fixed for a given data sample. Both $\bm{M}_{\text{ctx}}$ and $\bm{M}_{\text{qry}}$ are jointly and deterministically derived from a single generated mask $\hat{\bm{M}}$ subject to the guidance constraint $C_k$. Therefore, guaranteeing the marginal probability $P((\bm{M}_{\text{qry}})_i = 1 \mid C_k) > 0$ is mathematically sufficient. It ensures that across multiple training epochs, every valid observation point has a strictly positive frequency of being queried ($p_i > 0$ in Eq.~\eqref{eq: linear relation}), thereby satisfying the prerequisite for non-vanishing expected gradients.
\end{remark}

%% file: appendix/learning_dynamics.tex
\section{Learning Dynamics Basis and Analysis }
\label{app:learning dynamics basis and analysis}
\paragraph{Preliminary property.} As illustrated in previous work~\citep{zhou2025incomplete}, Eq.~\eqref{eq: strict positive condition} is fundamentally dictated by the learning dynamics, where the expected squared gradient magnitude for feature $i$ scales linearly with its query probability:
\begin{equation}  \label{eq: linear relation}
    \mathbb{E}\left[\left(\frac{\partial \mathcal{L}}{\partial (\bm{u}_{\bm{\phi}})_i}\right)^2\right] = 4p_i \mathbb{E}\left[ \left((\bm{u}_{\bm{\phi}})_i - (\bm{u}_{\text{obs}})_i \right)^2 \Big| (\bm{M}_{\text{qry}})_i = 1\right],
\end{equation}
where $p_i = P((\bm{M}_{\text{qry}})_i = 1 | \bm{M}_{\text{ctx}})$ directly controls the frequency of parameter updates. If $p_i = 0$, the given dimension receives no gradient updates, causing the model to predict arbitrary and meaningless values in that region.
\paragraph{Heuristic reliability indicator.} As established in our Eq.~\eqref{eq: linear relation}, the expected gradient update for a specific feature dimension scales linearly with its query probability. This dictates a direct relationship between sampling probability and learning frequency: regions with a higher $P(qry)$ are exposed to the loss function more frequently during training.Because the prediction model is optimized more heavily in these high-frequency regions, the $P(qry)$ distribution effectively captures the model's spatial familiarity with the data. Although not a formally calibrated statistical uncertainty metric, $P(qry)$ acts as a strong proxy for epistemic confidence. During inference, regions that exhibited high query probabilities in the training phase can generally be interpreted as having higher predictive reliability. Consequently, this distribution provides an accessible and heuristic indicator to assess the relative trustworthiness of the reconstructed physical fields across different spatial domains.

%% file: appendix/experiment.tex
\section{Supplementary Experiments}
\input{appendix/experiments/dataset_setting}

\input{appendix/experiments/method_comparison}

\input{appendix/experiments/implement_details}
\input{appendix/experiments/ablation}

\input{appendix/experiments/context_select_comparison}
\input{appendix/experiments/sampling_method_comparison}
\input{appendix/experiments/complete_results}
\input{appendix/experiments/visualization}
\input{appendix/experiments/qry_uq}
\input{appendix/experiments/stochastic_anchor}

%% file: appendix/experiments/dataset_setting.tex
\subsection{Dataset Settings} \label{app: dataset settings}
All real-world oceanographic datasets utilized in our experiments are sourced from the E.U. Copernicus Marine Environment Monitoring Service (CMEMS). We specifically selected Level-3 (L3) satellite observation products. Unlike Level-4 (L4) products, which are already synthetically interpolated and gap-filled, L3 products contain unadulterated, authentic missingness patterns caused by real-world physical constraints such as cloud occlusions, sensor swath gaps, and orbital trajectories. This makes them ideal testbeds for evaluating the imputation of complex, structured missing dynamics.

\textbf{Black Sea CHL} (Black Sea Chlorophyll-a Dataset)~\citep{letraon:hal-03405376}. This dataset is derived from the CMEMS product \texttt{OCEANCOLOUR\_BLK\_BGC\_L3\_MY\_009\_153} (\url{https://doi.org/10.48670/moi-00303}). It provides daily multi-sensor merged objective observations of Chlorophyll-a (CHL) concentration in the Black Sea at a 1-km spatial resolution. Chlorophyll-a distributions exhibit highly complex, non-linear spatial gradients driven by mesoscale eddies and coastal currents. Coupled with the extensive and contiguous cloud occlusions typical of the Black Sea region, this dataset challenges an imputation model's ability to recover fine-grained, localized ecological structures from sparse and fragmented observations.

\textbf{Baltic Sea NANO} (Baltic Sea Nano-Phytoplankton Dataset)~\citep{letraon:hal-03405376}. Sourced from the CMEMS product \texttt{OCEANCOLOUR\_BAL\_BGC\_L3\_MY\_009\_133} (\url{https://doi.org/10.48670/moi-00296}), this dataset provides daily regional ocean color observations, specifically targeting nano-phytoplankton concentrations at a 1-km resolution. The Baltic Sea possesses a highly irregular topology with numerous archipelagos and narrow straits. Consequently, the missingness patterns are governed not only by atmospheric occlusions but also by complex, rigid coastal boundaries. This presents a unique challenge for imputation tasks, requiring models to reconstruct physically plausible dynamics that strictly respect intricate geographical constraints rather than applying naive spatial smoothing across landmasses.

\textbf{Global Ocean SSS} (Global Sea Surface Salinity Dataset)~\citep{letraon:hal-03405376}. This dataset utilizes the CMEMS global product \texttt{MULTIOBS\_GLO\_PHY\_SSS\_L3\_MYNRT\_015\_014} (\url{https://doi.org/10.48670/mds-00368}). It offers daily global Sea Surface Salinity measurements at a coarser 25-km spatial resolution. In contrast to regional datasets, this global dataset involves macroscopic spatial correlations dominated by large-scale ocean circulation and climatic forcing. Furthermore, the missingness typology fundamentally shifts from primarily cloud-induced irregular holes to distinct satellite orbital tracks and systematic inter-swath gaps. This dataset is ideal for validating the scalability of imputation algorithms and their capacity to maintain global physical consistency under highly structured, geometry-driven masking topologies.

%% file: appendix/experiments/method_comparison.tex
\subsection{Baseline and Method Comparison} \label{app: baseline_and_method_comparison}
Here we also considered other baselines but ultimately omitted them from direct empirical comparisons. We excluded AmbientGAN~\citep{Bora2018AmbientGANGM} because adversarial training suffers from severe instability, making it exceedingly difficult to converge on highly sparse datasets. 
Furthermore, we reviewed recent generative frameworks tailored for partial observations, including Ambient Physics~\citep{majid2026ambientphysicstrainingneural},  MSM~\citep{park2025measurementscorebaseddiffusionmodel}, and the context-query diffusion approach~\citep{zhou2025incomplete}. However, a direct comparison with these methods is infeasible because they fundamentally rely on overly simplistic or manually designed mask distributions (e.g., uniform random dropout or regular block masking). These idealistic assumptions fail to capture the complex, missing-not-at-random (MNAR) topology of authentic physical occlusions. Nevertheless, their core idea can be demonstrated by comparing different masking strategies in Sec.~\ref{sec:experiment-ablation}. A more direct and comprehensive comparison with the most related methods is shown in Tab.~\ref{tab: other method comparison}.

\begin{table}[htbp]
\centering
\caption{Comparison with the most related methods. We use $\times$ to denote that it cannot be achieved or is extremely costly or engineering-difficult. And use $\checkmark$ to indicate that this can be achieved.}
\label{tab: other method comparison}
\resizebox{0.8\textwidth}{!}{%
\begin{tabular}{@{}cccc@{}}
\toprule
\textbf{Methods} & \textbf{Complete Data Free} & \textbf{High Dimensional} & \textbf{Non-heuristic} \\
\midrule
\textbf{ImputeMACFM}~\citep{liu2025imputemacfmimputationbasedmaskaware} & \checkmark & \texttimes & \texttimes \\
\addlinespace[0.2em]
\textbf{MIRI}~\citep{yu2025missingdataimputationreducing} & \checkmark & \texttimes & \texttimes \\
\addlinespace[0.2em]
\textbf{LSSDM}~\citep{liang2024latentspacescorebaseddiffusion} & \checkmark & \texttimes & \checkmark \\
\addlinespace[0.2em]
\textbf{AmbientDiff}~\citep{daras2023ambientdiffusionlearningclean} & \checkmark & \checkmark & \texttimes \\
\addlinespace[0.2em]
\textbf{MissDiff}~\citep{ouyang2023missdifftrainingdiffusionmodels} & \checkmark & \checkmark & \texttimes \\
\addlinespace[0.2em]
\textbf{DINDiff}~\citep{barth2024ensemble} & \checkmark & \checkmark & \checkmark \\
\addlinespace[0.2em]
\textbf{Ambient Physics}~\citep{majid2026ambientphysicstrainingneural} & \checkmark & \checkmark & \texttimes \\
\addlinespace[0.2em]
\textbf{MSM}~\citep{park2025measurementscorebaseddiffusionmodel} & \checkmark & \checkmark & \texttimes \\
\addlinespace[0.2em]
\textbf{Ours} & \checkmark & \checkmark & \checkmark \\
\bottomrule
\end{tabular}%
}
\end{table}

%% file: appendix/experiments/implement_details.tex
\subsection{Implementation Details} \label{app: implement_details}
In all experiments, considering both the stability of the guidance and the diversity of the generated data, we empirically set $\rho = 0.8$. Setting the value of $\rho$ too high will result in insufficient diversity of conditional generation, while setting it too low will make it difficult to align with the guidance target. Regarding BFN sampling steps $n$, we find that 10 to 20 ODE sampling steps are usually sufficient to achieve the desired effect. As the number of sampling steps increases, the improvement in effect becomes relatively limited. For guidance scale $w_g$, a moderate interval is between 80 and 200 and usually needs to be adapted to the selected sampling steps. The value of the guidance scale seems larger than that in traditional classifier guidance because we perform global normalization on our guidance objective. This parameter is not particularly sensitive, a reasonable setting is sufficient.

Also, to explore the best performance of the baseline AmbientDiff, we attempt several combinations of context and query ratio in Tab.~\ref{tab: ambientdiff multi ratio}. 

\begin{table}[ht]
\centering
\caption{Exploration of suitable parameters for AmbientDiff on the Black Sea CHL dataset with the resolution of $64 \times 64$.}
\label{tab: ambientdiff multi ratio}
\resizebox{\textwidth}{!}{
\begin{tabular}{cc|ccccccccc}
\toprule
\multirow{2}{*}{\textbf{Context Ratio}} & \multirow{2}{*}{\textbf{Query Ratio}} & \multicolumn{3}{c}{\textbf{Black Sea CHL}}    & \multicolumn{3}{c}{\textbf{Baltic Sea NANO}}   & \multicolumn{3}{c}{\textbf{Global Ocean SSS}}    \\ \cmidrule(lr){3-5} \cmidrule(lr){6-8} \cmidrule(lr){9-11} 
&  & MSE $\downarrow$ & PSNR $\uparrow$ & CBGD & MSE $\downarrow$ & PSNR $\uparrow$ & CBGD & MSE $\downarrow$ & PSNR $\uparrow$ & CBGD \\ \midrule
10\% & 10\%   & 0.243 & 16.775 & 1.159 & 0.376 & 15.652 & 1.057 & 0.294 & 17.992 & 0.826 \\ 
30\% & 30\%   & \textbf{0.228} & \textbf{17.717} & 1.013 & \textbf{0.365} & \textbf{15.466} & 0.906 & \textbf{0.286} & \textbf{18.022} & 0.774 \\ 
50\% & 50\%   & 0.275 & 16.926 & 0.997 & 0.439 & 14.776 & 0.849 & 0.292 & 17.957 & 0.795 \\ 
70\% & 70\%   & 0.344 & 16.171 & 0.910 & 0.495 & 14.323 & 0.914 & 0.323 & 17.596 & 0.803 \\ 
90\% & 90\%   & 0.428 & 15.439 & 0.860 & 0.619 & 13.488 & 0.922 & 0.344 & 17.363 & 0.767 \\ 
\bottomrule
\end{tabular}
}
\end{table}

%% file: appendix/experiments/ablation.tex
\subsection{Ablation Study} \label{app:ablation study}
\paragraph{Sampling method comparison.} We conduct a comprehensive comparison of our approach with various sampling methods.
\begin{itemize}
    \item \textbf{Direct Projection:} Perform a single-step projection to approximate the conditional expectation $\mathbb{E}[\bm{u}_{\text{0}} | \bm{u}_{\text{obs}}, \bm{M}]$. By mapping the uncorrupted partial observations directly onto the data manifold via ensemble averaging, it minimizes global reconstruction error but often produces blurry results in high-frequency regions.
    \item \textbf{Proximal Prediction:} A variant of the single-step approach that applies an infinitesimal noise level before projection. This initial noise injection acts as a proximal constraint that aligns with the diffusion model's prior, while the final step still tightly anchors the solution to the deterministic manifold of the observed data, better mitigating blurriness and ensuring higher fidelity to local pixel values.
    \item \textbf{Iterative Conditioning:} Decomposes the complex conditional expectation into a diffusion expectation capturing the global prior and an imputation expectation enforcing structural constraints. By blending the two components through a monotonically increasing weight schedule over multiple steps, it effectively balances global physical dynamics with local observation consistency.
    \item \textbf{Resampling Inpainting:} Harmonize observation boundaries by alternating between denoising and a resampling step where noise is added back to the current estimate. This iterative loop allows the model to better align the generated content with the fixed observed regions through multiple stochastic trials.
    \item \textbf{Recursive Jump Inpainting:} An extension of the Resampling Inpainting framework that utilizes profound temporal jumps back to earlier noise levels. By re-injecting significant variance, the model alleviates local boundary inconsistencies and more effectively reconciles the global physical dynamics with the sparse observation constraints.
\end{itemize}
The quantitative results in Tab.~\ref{tab: sample method comparison} demonstrate a fundamental trade-off between minimizing global expectation error and preserving physical boundary coherence. The single-step methods achieve the lowest global reconstruction loss as theoretically established in~\citep{zhou2025incomplete}. However, this deterministic averaging inherently smooths out high-frequency variations, leading to noticeable boundary artifacts between observed and unobserved regions, as evidenced by their poor CBGD scores. In contrast, iterative approaches, such as Iterative Conditioning and Resampling Inpainting, significantly improve both PSNR and boundary consistency CBGD, albeit at a slight cost to the global loss. This improvement aligns with the theoretical foundation of Resampling Inpainting, that is, the iterative forward-backward diffusion process continuously harmonizes the joint distribution of the known context and the generated content. Notably, the Recursive Jump Inpainting achieves the best overall structural fidelity (highest PSNR and lowest CBGD). By introducing large noise jumps, this profound re-injection strategy prevents the model from settling into local boundary inconsistencies, ultimately yielding the most seamless and physically natural transitions across the mask boundaries. Detailed algorithm implementation for all sampling methods are provided in Appendix~\ref{app: sampling methods}. 

\paragraph{Backbone architecture.} To assess the generalizability of our guided partitioning strategy across different neural architectures, we evaluate our method using two distinct backbones: Vision Transformer (ViT)~\citep{dosovitskiy2021imageworth16x16words} and Karras UNet~\citep{karras2024analyzingimprovingtrainingdynamics}. The results presented in Tab.~\ref{tab: backbone comparison} indicate that while both architectures achieve comparable performance on the Black Sea Chlorophyll-a Dataset at a resolution of 64×64, while UNet achieves superior performance, producing a lower MSE and a higher PSNR compared to ViT. This performance gap suggests that the hierarchical inductive bias of UNet is well-suited for capturing the multi-scale spatial dependencies inherent in oceanic physical dynamics.

\begin{table}[tb]
    \centering
        \caption{Performance comparison of different sampling strategies on the 64×64 Black Sea CHL Dataset.}
        \label{tab: sample method comparison}
            \begin{tabular}{c|ccc}
            \toprule
            Methods & MSE $\downarrow$ & PSNR $\uparrow$ & CBGD \\
            \midrule
            Direct Projection & 0.193 & 17.825 & 1.146 \\
            Proximal Prediction & \textbf{0.189} & 17.669 & 1.114 \\
            Iterative Conditioning & 0.199 & 18.330 & 0.916 \\
            Resampling Inpainting & 0.199 & 18.330 & 0.915 \\
            Recursive Jump Inpainting & 0.199 & \textbf{18.359} & \textbf{0.912} \\
            \bottomrule
            \end{tabular}
\end{table}

\begin{table}[tb]
    \centering
    \caption{Backbone performance comparison on the Black Sea CHL dataset with the resolution of $64 \times 64$.}
    \label{tab: backbone comparison}
    \begin{tabular}{c|ccc}
    \toprule
        Methods & MSE $\downarrow$ & PSNR $\uparrow$ & CBGD \\
        \midrule
            Vit & 0.225 & 16.701 & 1.094 \\
            Unet & \textbf{0.193} & \textbf{17.825} & 1.146 \\
        \bottomrule
    \end{tabular}
    \label{tab:mask-method-comparison}
\end{table}

%% file: appendix/experiments/context_select_comparison.tex
\subsection{Context Mask Selection Strategy Comparison.} \label{app:context_mask_selection_strategy_comparison}
We conduct a comprehensive comparison of our approach with various context mask selection strategies here.
\begin{itemize}
    \item \textbf{Pixel-level Partition:} Each pixel in the observed region is independently sampled using a Bernoulli distribution with a constant probability $p$ (where $p$ is the ratio of required points to total points), producing a spatially uniform subset as $\bm{M}_{\text{ctx}}$.
    \item \textbf{Block-wise Partition:} The observed region is partitioned into a grid of non-overlapping blocks. A subset of these blocks is randomly sampled with a constant probability $p$ (where $p$ is the ratio of selected blocks to total blocks), and their intersection with the ground-truth mask produces a subset as $\bm{M}_{\text{ctx}}$.
    \item \textbf{Saliency-driven Partition:} This heuristic strategy relies on sorting local gradient magnitudes to allocate the available context points. Sample half of $\bm{M}_{\text{ctx}}$ from high-gradient regions and the other half from low-gradient regions.
    \item \textbf{Empirical Distribution Partition:} Randomly sample a real observation mask from the remaining dataset, then consider the intersection of this mask and $\bm{M}_{\text{real}}$ as $\bm{M}_{\text{ctx}}$.
    \item \textbf{Unconditional Prior Partition:} It employs a mask sampled unconditionally by a BFN which was pre-trained to capture the dataset's observation mask distribution (Sec.~\ref{sec: Modeling Mask Distributions via Bayesian Flow Networks}). The intersection of this sampled mask and $\bm{M}_{\text{real}}$ is used as $\bm{M}_{\text{ctx}}$.   
    \item \textbf{Ours:} Our framework introduces a cross-entropy guidance mechanism during the BFN sampling phase to dynamically anchor the generation process (Sec.~\ref{sec: Observation-Aligned Context Mask Generation}). This produces a tailored mask specific to the current data point, and $\bm{M}_{\text{ctx}}$ is the intersection of this conditionally generated mask and $\bm{M}_{\text{real}}$.
\end{itemize}

%% file: appendix/experiments/sampling_method_comparison.tex
\subsection{Sampling Method Comparison.} \label{app:sampling_method_comparison}
We conduct a comprehensive comparison of our approach with various sampling methods here.
\begin{itemize}
    \item \textbf{Direct Projection:} Perform a single-step projection to approximate the conditional expectation $\mathbb{E}[\bm{u}_{\text{0}} | \bm{u}_{\text{obs}}, \bm{M}]$. By mapping the uncorrupted partial observations directly onto the data manifold via ensemble averaging, it minimizes global reconstruction error but often produces blurry results in high-frequency regions.
    \item \textbf{Proximal Prediction:} A variant of the single-step approach that applies an infinitesimal noise level before projection. This initial noise injection acts as a proximal constraint that aligns with the diffusion model's prior, while the final step still tightly anchors the solution to the deterministic manifold of the observed data, better mitigating blurriness and ensuring higher fidelity to local pixel values.
    \item \textbf{Iterative Conditioning:} Decomposes the complex conditional expectation into a diffusion expectation capturing the global prior and an imputation expectation enforcing structural constraints. By blending the two components through a monotonically increasing weight schedule over multiple steps, it effectively balances global physical dynamics with local observation consistency.
    \item \textbf{Resampling Inpainting:} Harmonize observation boundaries by alternating between denoising and a resampling step where noise is added back to the current estimate. This iterative loop allows the model to better align the generated content with the fixed observed regions through multiple stochastic trials.
    \item \textbf{Recursive Jump Inpainting:} An extension of the Resampling Inpainting framework that utilizes profound temporal jumps back to earlier noise levels. By re-injecting significant variance, the model alleviates local boundary inconsistencies and more effectively reconciles the global physical dynamics with the sparse observation constraints.
\end{itemize}
The quantitative results in Tab.~\ref{tab: sample method comparison} demonstrate a fundamental trade-off between minimizing global expectation error and preserving physical boundary coherence. The single-step methods achieve the lowest global reconstruction loss as theoretically established in~\citep{zhou2025incomplete}. However, this deterministic averaging inherently smooths out high-frequency variations, leading to noticeable boundary artifacts between observed and unobserved regions, as evidenced by their poor CBGD scores. In contrast, iterative approaches, such as Iterative Conditioning and Resampling Inpainting, significantly improve both PSNR and boundary consistency CBGD, albeit at a slight cost to the global loss. This improvement aligns with the theoretical foundation of Resampling Inpainting, that is, the iterative forward-backward diffusion process continuously harmonizes the joint distribution of the known context and the generated content. Notably, the Recursive Jump Inpainting achieves the best overall structural fidelity (highest PSNR and lowest CBGD). By introducing large noise jumps, this profound re-injection strategy prevents the model from settling into local boundary inconsistencies, ultimately yielding the most seamless and physically natural transitions across the mask boundaries. Detailed algorithm implementation for all sampling methods are provided in Appendix~\ref{app: sampling methods}. 

%% file: appendix/experiments/complete_results.tex
\subsection{Complete Results}
To further demonstrate the scalability and robustness of our framework across different spatial scales, we provide additional experimental results at $128 \times 128$ and $256 \times 256$ resolutions on all three datasets in Tab.~\ref{tab: scale up to 128} and~\ref{tab: scale up to 256}. This demonstrates that our method remains effective even at high resolutions.

\begin{table}[ht]
\centering
\caption{Performance comparison on physical dynamics imputation tasks with the resolution of $128 \times 128$ across three real-world datasets.}
\label{tab: scale up to 128}
\resizebox{\textwidth}{!}{
\begin{tabular}{c|ccccccccc}
\toprule
\multicolumn{1}{c}{\multirow{2}{*}{\textbf{Method}}} & \multicolumn{3}{c}{\textbf{Black Sea CHL}}    & \multicolumn{3}{c}{\textbf{Baltic Sea NANO}}   & \multicolumn{3}{c}{\textbf{Global Ocean SSS}}    \\ \cmidrule(lr){2-4} \cmidrule(lr){5-7} \cmidrule(lr){8-10} 
\multicolumn{1}{c}{} & MSE $\downarrow$ & PSNR $\uparrow$ & CBGD & MSE $\downarrow$ & PSNR $\uparrow$ & CBGD & MSE $\downarrow$ & PSNR $\uparrow$ & CBGD \\ \midrule
AmbientDiff   & 0.268 & 20.432 & 0.942 & 0.466 & 15.648 & 0.825 & 0.440 & 19.165 & 0.662 \\ 
DINDiff       & 0.275 & 18.800 & 0.789 & 0.385 & 16.275 & 0.642 & 0.578 & 17.967 & 0.676 \\ 
MissDiff      & 0.706 & 16.702 & 0.870 & 0.753 & 13.480 & 0.887 & 0.683 & 17.342 & 0.578 \\ 
\midrule
\textbf{Ours} & \textbf{0.210} & \textbf{21.020} & 1.092 & \textbf{0.333} & \textbf{16.730} & 0.958 & \textbf{0.379} & \textbf{19.799} & 0.841 \\
\bottomrule
\end{tabular}
}
\end{table}

\begin{table}[ht]
\centering
\caption{Performance comparison on physical dynamics imputation tasks with the resolution of $256 \times 256$ across three real-world datasets.}
\label{tab: scale up to 256}
\resizebox{\textwidth}{!}{
\begin{tabular}{c|ccccccccc}
\toprule
\multicolumn{1}{c}{\multirow{2}{*}{\textbf{Method}}} & \multicolumn{3}{c}{\textbf{Black Sea CHL}}    & \multicolumn{3}{c}{\textbf{Baltic Sea NANO}}   & \multicolumn{3}{c}{\textbf{Global Ocean SSS}}    \\ \cmidrule(lr){2-4} \cmidrule(lr){5-7} \cmidrule(lr){8-10} 
\multicolumn{1}{c}{} & MSE $\downarrow$ & PSNR $\uparrow$ & CBGD & MSE $\downarrow$ & PSNR $\uparrow$ & CBGD & MSE $\downarrow$ & PSNR $\uparrow$ & CBGD \\ \midrule
AmbientDiff   & 0.256 & 20.523 & 0.959 & 0.487 & 16.234 & 0.847 & 0.524 & 20.578 & 0.694 \\ 
DINDiff       & 0.266 & 21.389 & 0.703 & 0.449 & 17.222 & 0.658 & 0.567 & 19.610 & 0.555 \\ 
MissDiff      & 0.655 & 16.887 & 0.960 & 0.818 & 14.147 & 1.022 & 0.731 & 19.187 & 0.773 \\ 
\midrule
\textbf{Ours} & \textbf{0.160} & \textbf{21.889} & 1.004 & \textbf{0.320} & \textbf{17.616} & 0.942 & \textbf{0.441} & \textbf{20.801} & 0.879 \\
\bottomrule
\end{tabular}
}
\end{table}

%% file: appendix/experiments/visualization.tex
\subsection{Visualization of Generated Samples}
By statistically analyzing the masks within the dataset, we defined positions where no valid observations were ever recorded as ``land'', and represented them in gray during visualization to distinguish them from the unobserved fields, which are depicted in white. The areas enclosed by the red and black wireframes together constitute the observable region of the sample. During evaluation, the area enclosed by the black wireframe is the visible part provided to the model, which performs global reconstruction based on it. The red area is an artificially masked region used to evaluate the model's reconstruction performance. Here we provide visual comparison with the resolution of $64 \times 64$, $128 \times 128$ and $256 \times 256$ respectively in Fig.~\ref{fig:full_64_figure}, Fig.~\ref{fig:full_128_figure} and Fig.~\ref{fig:full_256_figure}

\begin{figure}[htbp]
    \centering
    \begin{subfigure}{0.24\textwidth}
        \centering
        \caption{GT}
        \includegraphics[width=\linewidth]{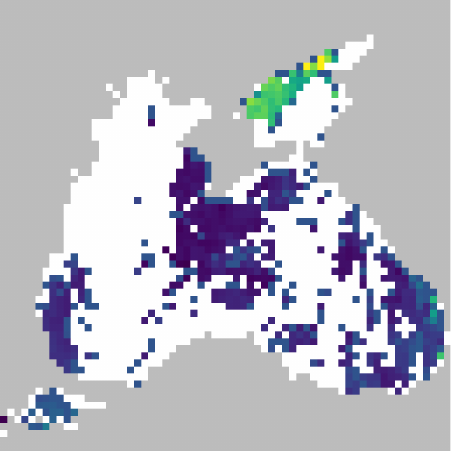}
        \label{fig:appendix_black_sea_gt}
    \end{subfigure}\quad
    \begin{subfigure}{0.24\textwidth}
        \centering
        \caption{GT}
        \includegraphics[width=\linewidth]{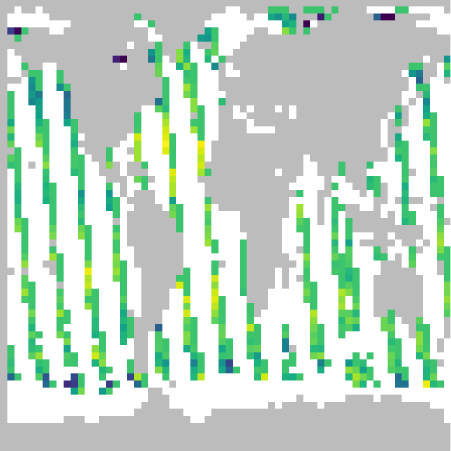}
        \label{fig:appendix_smos_gt}
    \end{subfigure}\quad
    \begin{subfigure}{0.24\textwidth}
        \centering
        \caption{GT}
        \includegraphics[width=\linewidth]{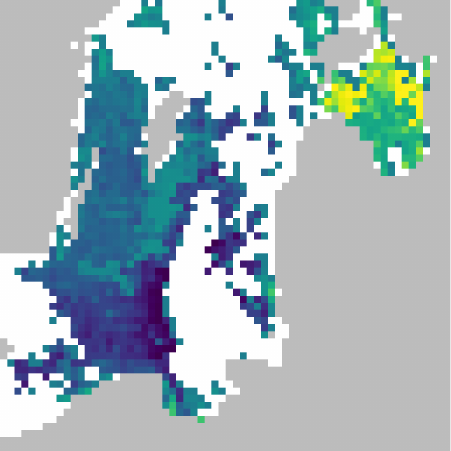}
        \label{fig:appendix_nano_gt}
    \end{subfigure}
    
    \vspace{0.4mm} 
    
    \begin{subfigure}{0.24\textwidth}
        \centering
        \caption{AmbientDiff}
        \includegraphics[width=\linewidth]{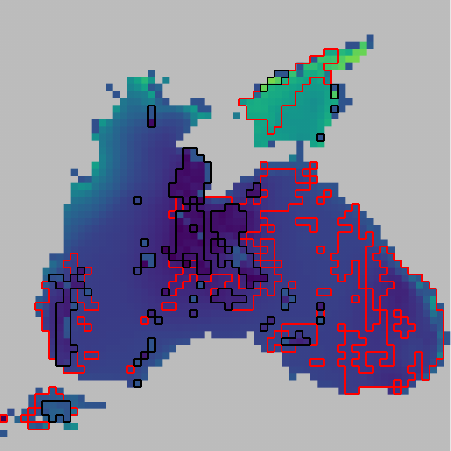}
        \label{fig:appendix_black_sea_ambient}
    \end{subfigure}\quad
    \begin{subfigure}{0.24\textwidth}
        \centering
        \caption{AmbientDiff}
        \includegraphics[width=\linewidth]{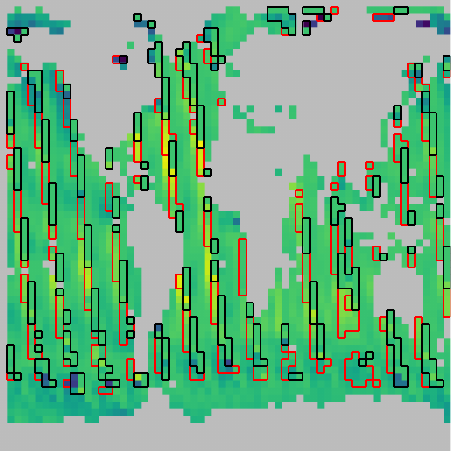}
        \label{fig:appendix_smos_ambient}
    \end{subfigure}\quad
    \begin{subfigure}{0.24\textwidth}
        \centering
        \caption{AmbientDiff}
        \includegraphics[width=\linewidth]{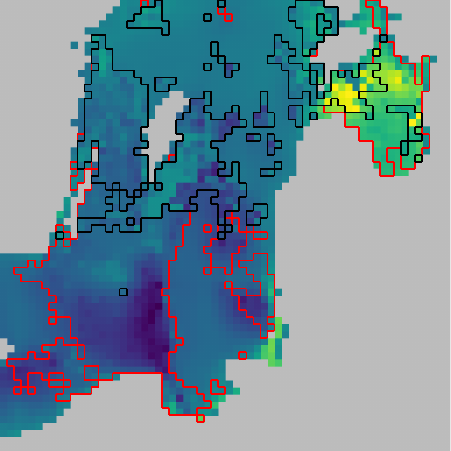}
        \label{fig:appendix_nano_ambient}
    \end{subfigure}

    \vspace{0.4mm} 
    
    \begin{subfigure}{0.24\textwidth}
        \centering
        \caption{DINDiff}
        \includegraphics[width=\linewidth]{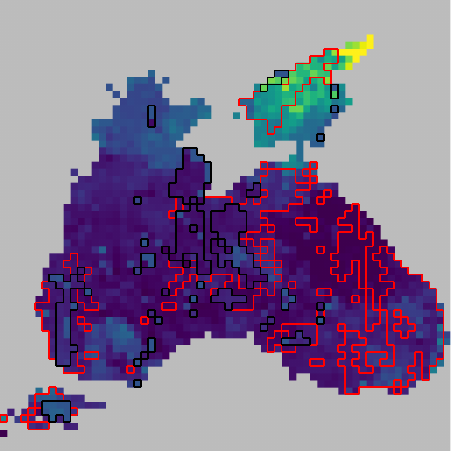}
        \label{fig:appendix_black_sea_dindiff}
    \end{subfigure}\quad
    \begin{subfigure}{0.24\textwidth}
        \centering
        \caption{DINDiff}
        \includegraphics[width=\linewidth]{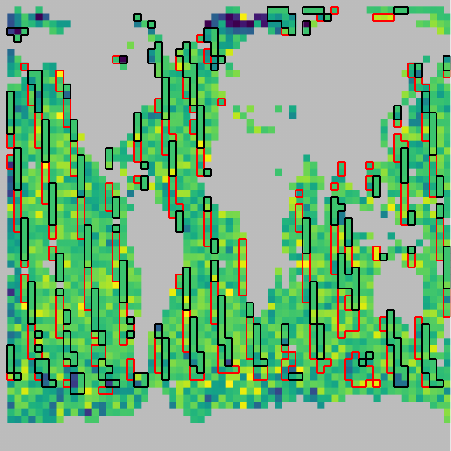}
        \label{fig:appendix_smos_dindiff}
    \end{subfigure}\quad
    \begin{subfigure}{0.24\textwidth}
        \centering
        \caption{DINDiff}
        \includegraphics[width=\linewidth]{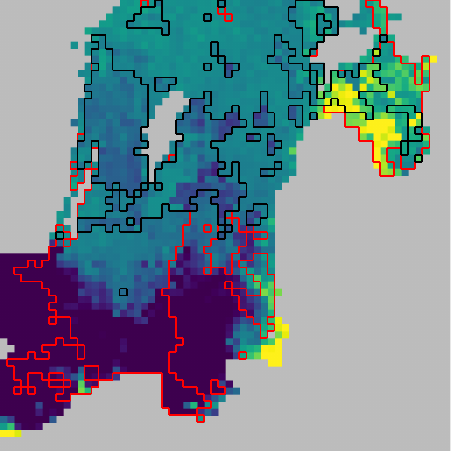}
        \label{fig:appendix_nano_dindiff}
    \end{subfigure}

    \vspace{0.4mm} 
    
    \begin{subfigure}{0.24\textwidth}
        \centering
        \caption{MissDiff}
        \includegraphics[width=\linewidth]{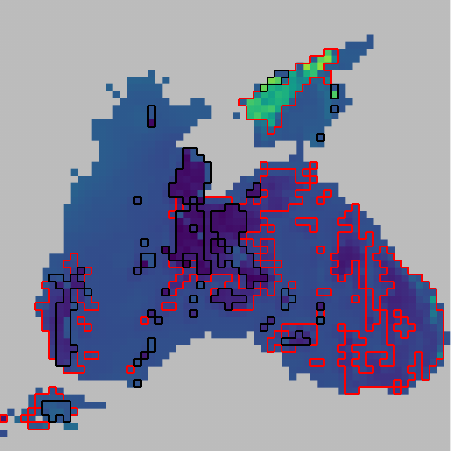}
        \label{fig:appendix_black_sea_missdiff}
    \end{subfigure}\quad
    \begin{subfigure}{0.24\textwidth}
        \centering
        \caption{MissDiff}
        \includegraphics[width=\linewidth]{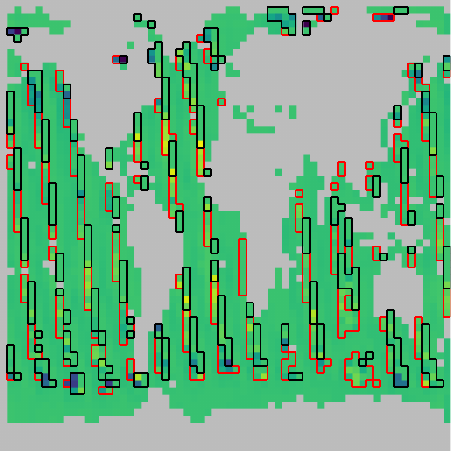}
        \label{fig:appendix_smos_missdiff}
    \end{subfigure}\quad
    \begin{subfigure}{0.24\textwidth}
        \centering
        \caption{MissDiff}
        \includegraphics[width=\linewidth]{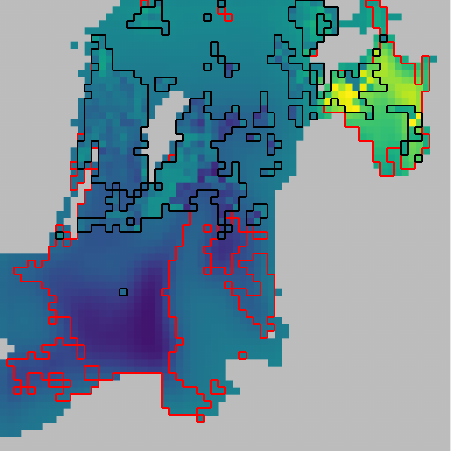}
        \label{fig:appendix_nano_missdiff}
    \end{subfigure}

    \vspace{0.4mm} 
    
    \begin{subfigure}{0.24\textwidth}
        \centering
        \caption{Ours}
        \includegraphics[width=\linewidth]{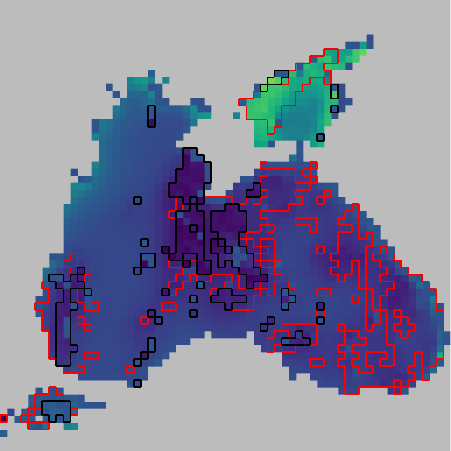}
        \label{fig:appendix_black_sea_ours}
    \end{subfigure}\quad
    \begin{subfigure}{0.24\textwidth}
        \centering
        \caption{Ours}
        \includegraphics[width=\linewidth]{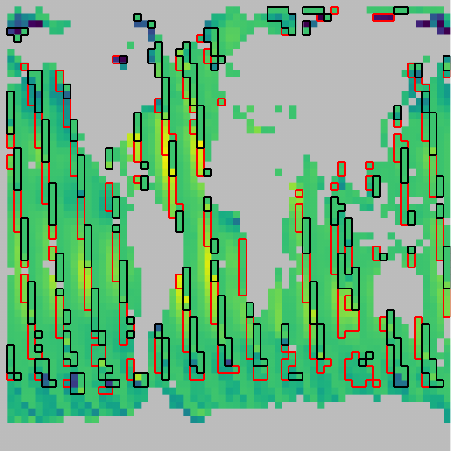}
        \label{fig:appendix_smos_ours}
    \end{subfigure}\quad
    \begin{subfigure}{0.24\textwidth}
        \centering
        \caption{Ours}
        \includegraphics[width=\linewidth]{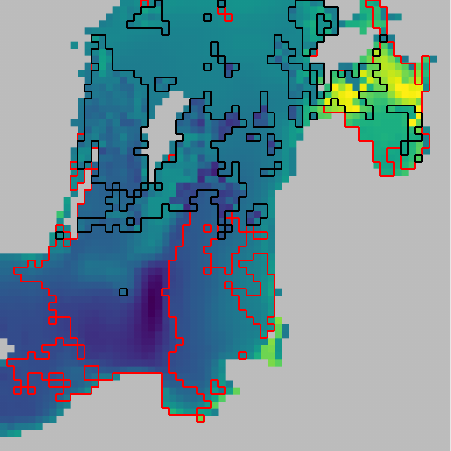}
        \label{fig:appendix_nano_ours}
    \end{subfigure}
    
    \caption{Visualization of imputation results on Black Sea CHL, Global Ocean SSS, and Baltic Sea NANO with the resolution of $64 \times 64$ (from left to right).}
    \label{fig:full_64_figure}
\end{figure}

\begin{figure}[htbp]
    \centering
    \begin{subfigure}{0.24\textwidth}
        \centering
        \caption{GT}
        \includegraphics[width=\linewidth]{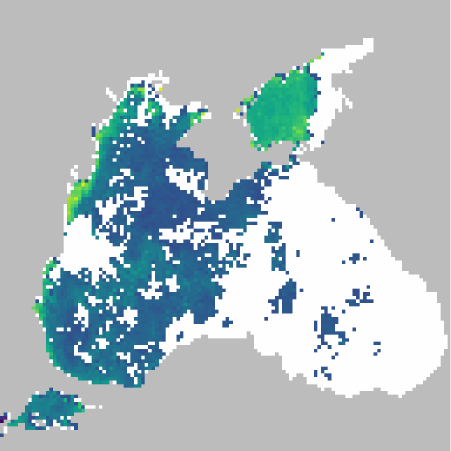}
        \label{fig:black_sea_128_gt}
    \end{subfigure}\quad
    \begin{subfigure}{0.24\textwidth}
        \centering
        \caption{GT}
        \includegraphics[width=\linewidth]{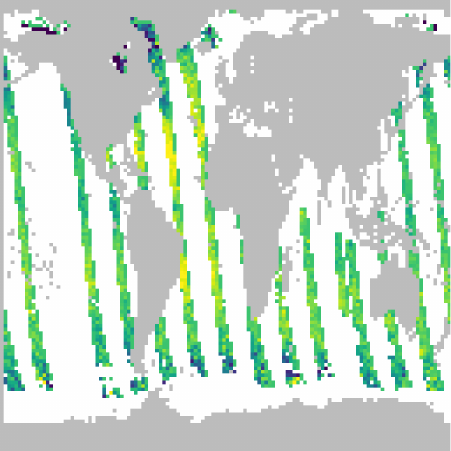}
        \label{fig:smos_128_gt}
    \end{subfigure}\quad
    \begin{subfigure}{0.24\textwidth}
        \centering
        \caption{GT}
        \includegraphics[width=\linewidth]{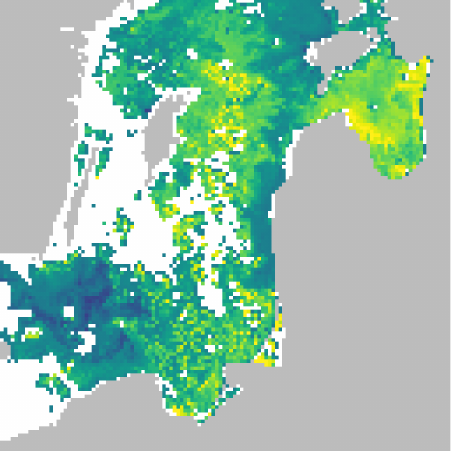}
        \label{fig:nano_128_gt}
    \end{subfigure}
    
    \vspace{0.4mm} 
    
    \begin{subfigure}{0.24\textwidth}
        \centering
        \caption{AmbientDiff}
        \includegraphics[width=\linewidth]{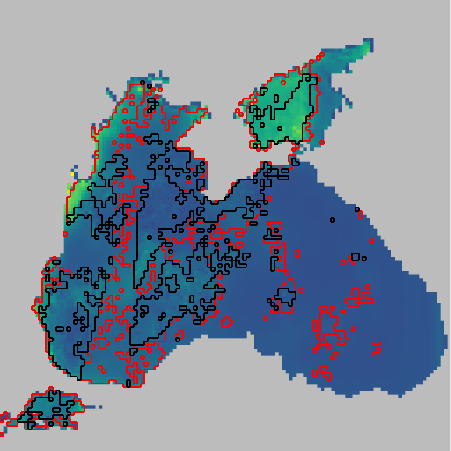}
        \label{fig:black_sea_128_ambient}
    \end{subfigure}\quad
    \begin{subfigure}{0.24\textwidth}
        \centering
        \caption{AmbientDiff}
        \includegraphics[width=\linewidth]{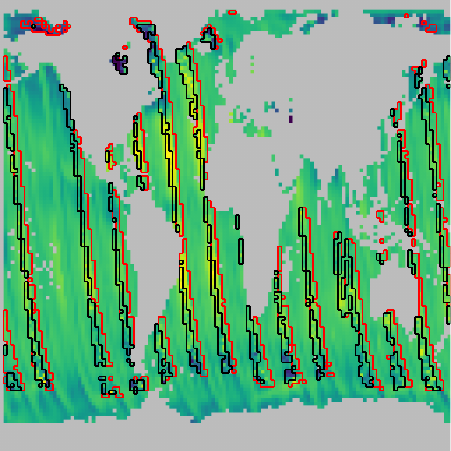}
        \label{fig:smos_128_ambient}
    \end{subfigure}\quad
    \begin{subfigure}{0.24\textwidth}
        \centering
        \caption{AmbientDiff}
        \includegraphics[width=\linewidth]{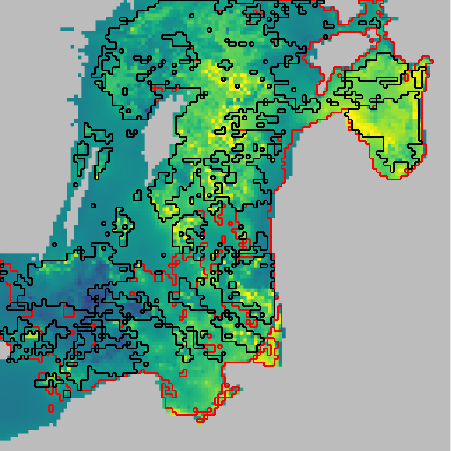}
        \label{fig:nano_128_ambient}
    \end{subfigure}

    \vspace{0.4mm} 
    
    \begin{subfigure}{0.24\textwidth}
        \centering
        \caption{DINDiff}
        \includegraphics[width=\linewidth]{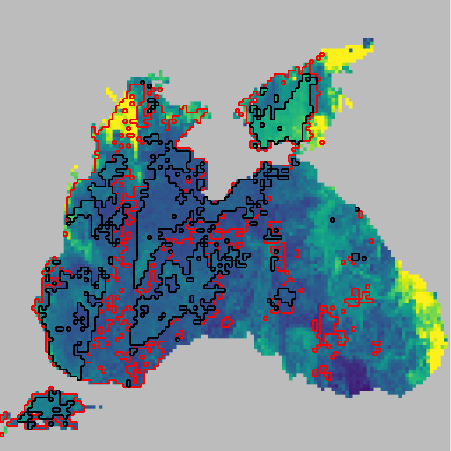}
        \label{fig:black_sea_128_dindiff}
    \end{subfigure}\quad
    \begin{subfigure}{0.24\textwidth}
        \centering
        \caption{DINDiff}
        \includegraphics[width=\linewidth]{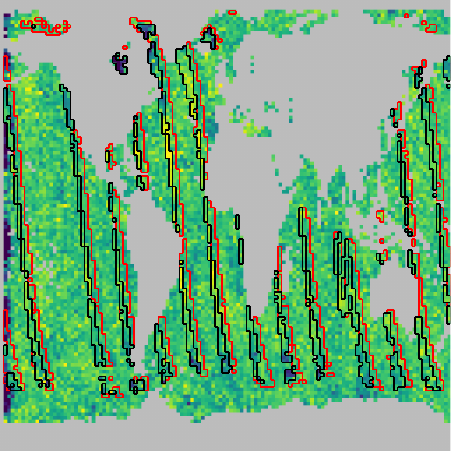}
        \label{fig:smos_128_dindiff}
    \end{subfigure}\quad
    \begin{subfigure}{0.24\textwidth}
        \centering
        \caption{DINDiff}
        \includegraphics[width=\linewidth]{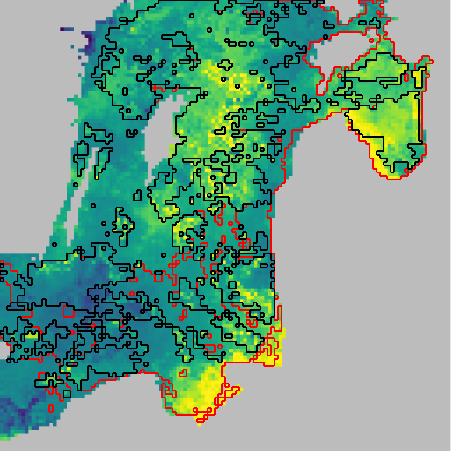}
        \label{fig:nano_128_dindiff}
    \end{subfigure}

    \vspace{0.4mm} 
    
    \begin{subfigure}{0.24\textwidth}
        \centering
        \caption{MissDiff}
        \includegraphics[width=\linewidth]{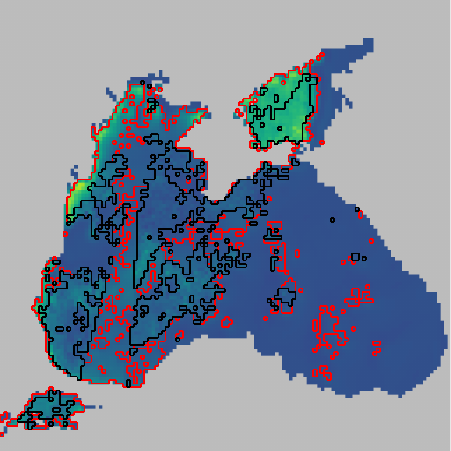}
        \label{fig:black_sea_128_missdiff}
    \end{subfigure}\quad
    \begin{subfigure}{0.24\textwidth}
        \centering
        \caption{MissDiff}
        \includegraphics[width=\linewidth]{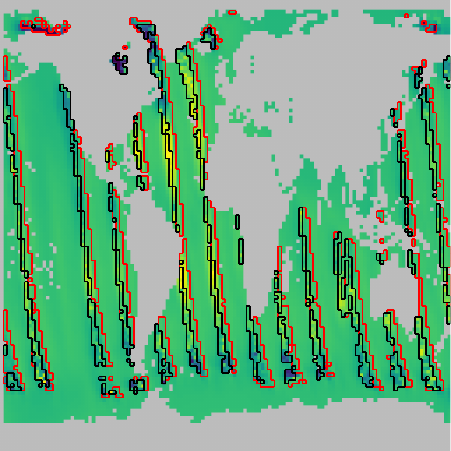}
        \label{fig:smos_128_missdiff}
    \end{subfigure}\quad
    \begin{subfigure}{0.24\textwidth}
        \centering
        \caption{MissDiff}
        \includegraphics[width=\linewidth]{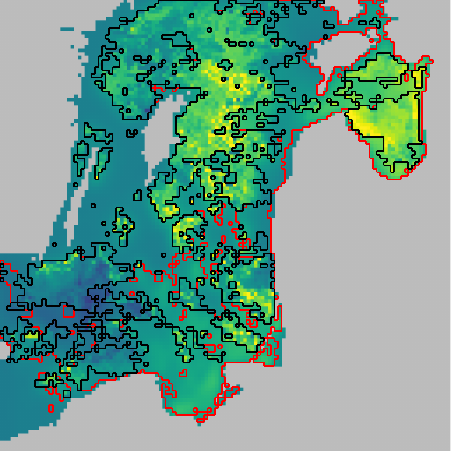}
        \label{fig:nano_128_missdiff}
    \end{subfigure}

    \vspace{0.4mm} 
    
    \begin{subfigure}{0.24\textwidth}
        \centering
        \caption{Ours}
        \includegraphics[width=\linewidth]{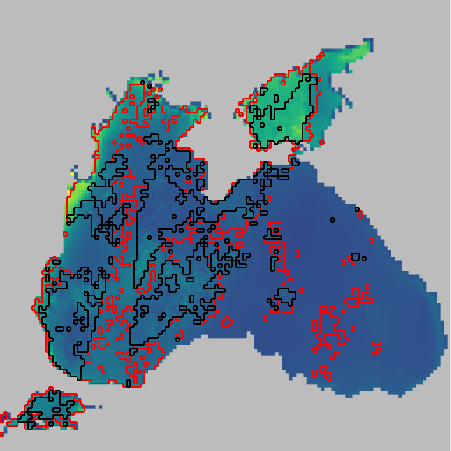}
        \label{fig:black_sea_128_ours}
    \end{subfigure}\quad
    \begin{subfigure}{0.24\textwidth}
        \centering
        \caption{Ours}
        \includegraphics[width=\linewidth]{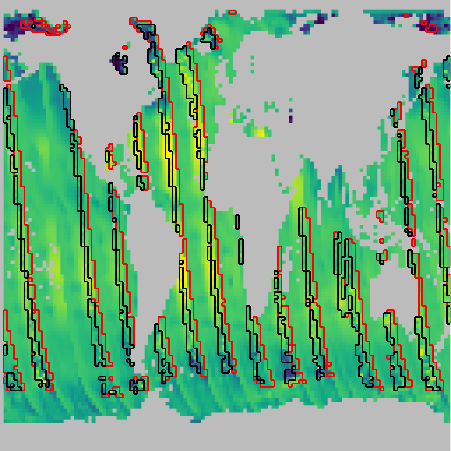}
        \label{fig:smos_128_ours}
    \end{subfigure}\quad
    \begin{subfigure}{0.24\textwidth}
        \centering
        \caption{Ours}
        \includegraphics[width=\linewidth]{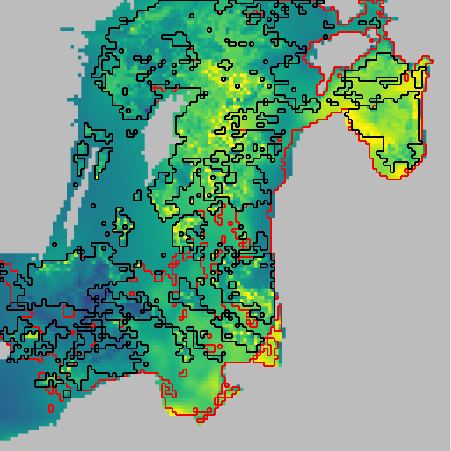}
        \label{fig:nano_128_ours}
    \end{subfigure}
    
    \caption{Visualization of imputation results on Black Sea CHL, Global Ocean SSS, and Baltic Sea NANO with the resolution of $128 \times 128$ (from left to right).}
    \label{fig:full_128_figure}
\end{figure}

\begin{figure}[htbp]
    \centering
    \begin{subfigure}{0.24\textwidth}
        \centering
        \caption{GT}
        \includegraphics[width=\linewidth]{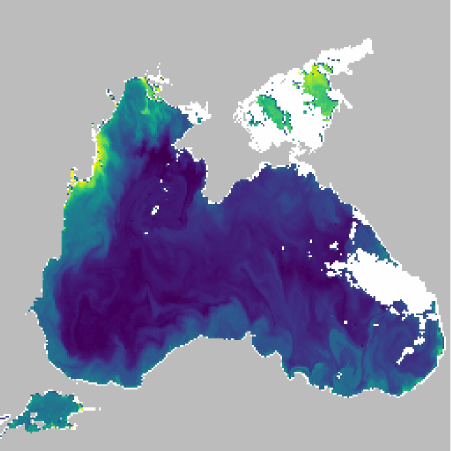}
        \label{fig:black_sea_256_gt}
    \end{subfigure}\quad
    \begin{subfigure}{0.24\textwidth}
        \centering
        \caption{GT}
        \includegraphics[width=\linewidth]{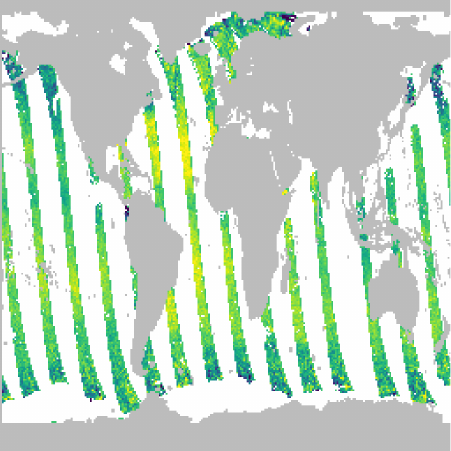}
        \label{fig:smos_256_gt}
    \end{subfigure}\quad
    \begin{subfigure}{0.24\textwidth}
        \centering
        \caption{GT}
        \includegraphics[width=\linewidth]{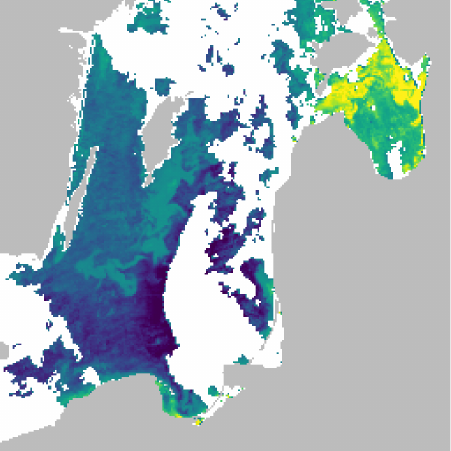}
        \label{fig:nano_256_gt}
    \end{subfigure}
    
    \vspace{0.4mm} 
    
    \begin{subfigure}{0.24\textwidth}
        \centering
        \caption{AmbientDiff}
        \includegraphics[width=\linewidth]{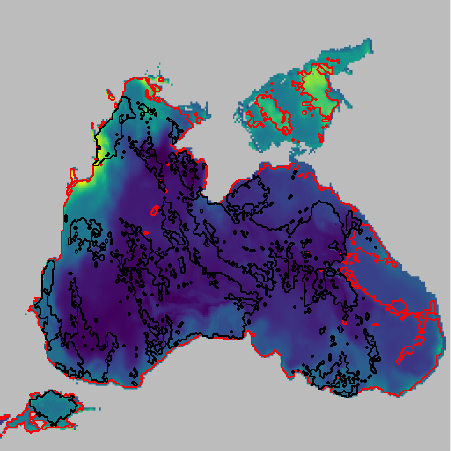}
        \label{fig:black_sea_256_ambient}
    \end{subfigure}\quad
    \begin{subfigure}{0.24\textwidth}
        \centering
        \caption{AmbientDiff}
        \includegraphics[width=\linewidth]{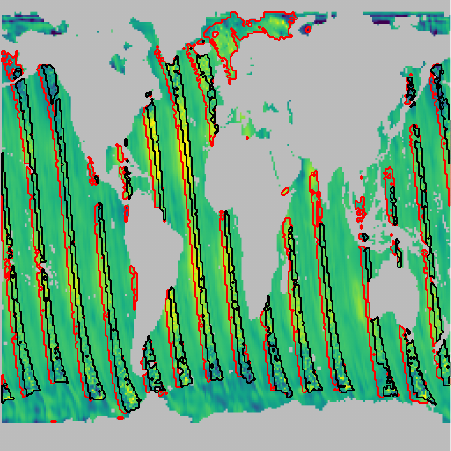}
        \label{fig:smos_256_ambient}
    \end{subfigure}\quad
    \begin{subfigure}{0.24\textwidth}
        \centering
        \caption{AmbientDiff}
        \includegraphics[width=\linewidth]{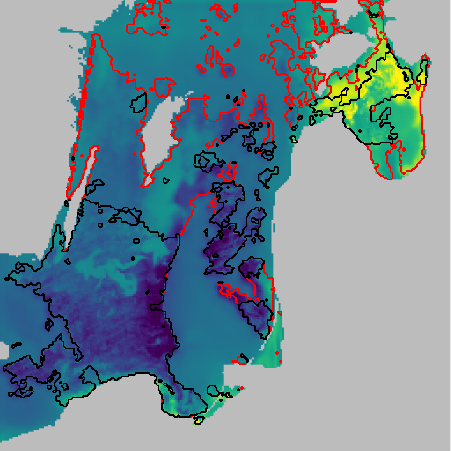}
        \label{fig:nano_256_ambient}
    \end{subfigure}

    \vspace{0.4mm} 
    
    \begin{subfigure}{0.24\textwidth}
        \centering
        \caption{DINDiff}
        \includegraphics[width=\linewidth]{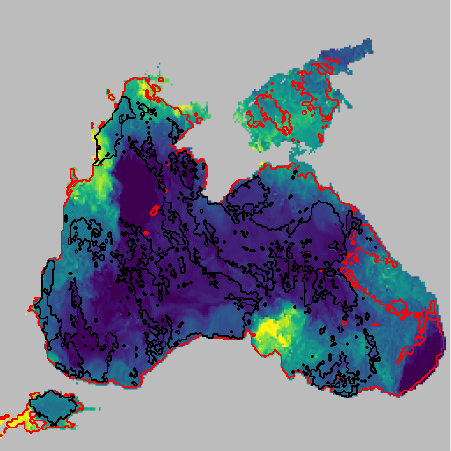}
        \label{fig:black_sea_256_dindiff}
    \end{subfigure}\quad
    \begin{subfigure}{0.24\textwidth}
        \centering
        \caption{DINDiff}
        \includegraphics[width=\linewidth]{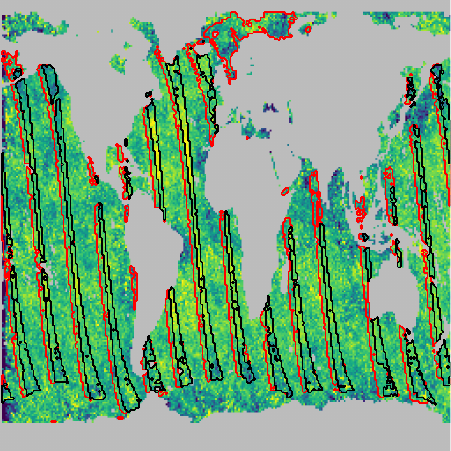}
        \label{fig:smos_256_dindiff}
    \end{subfigure}\quad
    \begin{subfigure}{0.24\textwidth}
        \centering
        \caption{DINDiff}
        \includegraphics[width=\linewidth]{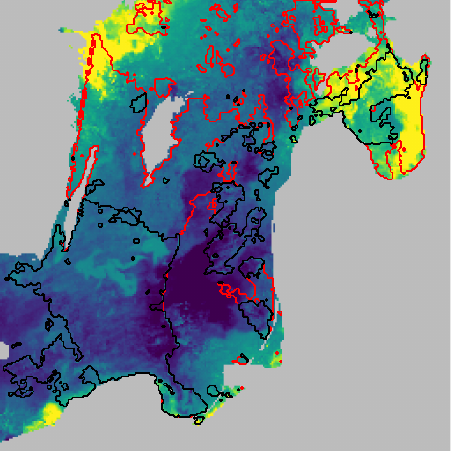}
        \label{fig:nano_256_dindiff}
    \end{subfigure}

    \vspace{0.4mm} 
    
    \begin{subfigure}{0.24\textwidth}
        \centering
        \caption{MissDiff}
        \includegraphics[width=\linewidth]{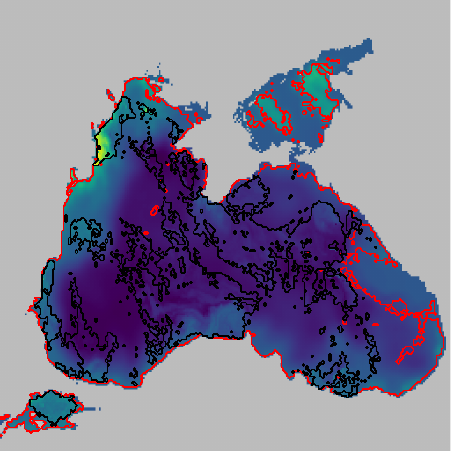}
        \label{fig:black_sea_256_missdiff}
    \end{subfigure}\quad
    \begin{subfigure}{0.24\textwidth}
        \centering
        \caption{MissDiff}
        \includegraphics[width=\linewidth]{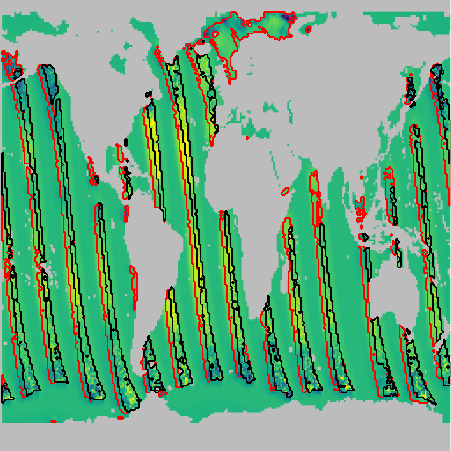}
        \label{fig:smos_256_missdiff}
    \end{subfigure}\quad
    \begin{subfigure}{0.24\textwidth}
        \centering
        \caption{MissDiff}
        \includegraphics[width=\linewidth]{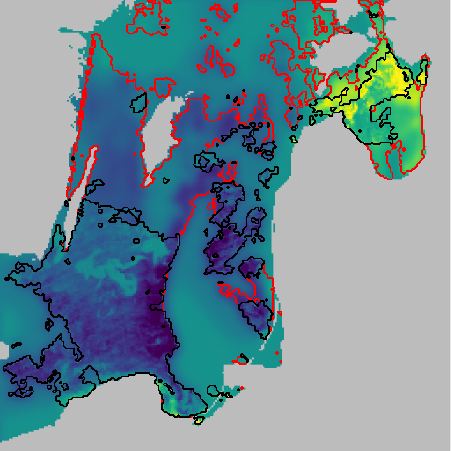}
        \label{fig:nano_256_missdiff}
    \end{subfigure}

    \vspace{0.4mm} 
    
    \begin{subfigure}{0.24\textwidth}
        \centering
        \caption{Ours}
        \includegraphics[width=\linewidth]{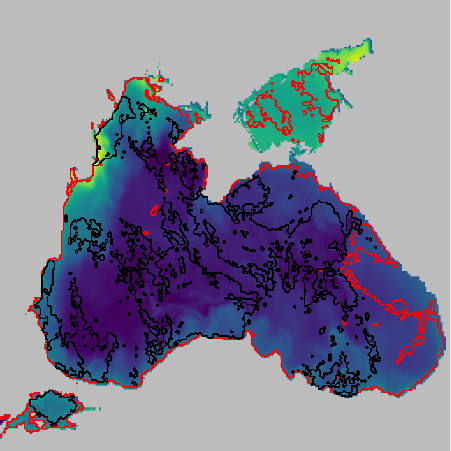}
        \label{fig:black_sea_256_ours}
    \end{subfigure}\quad
    \begin{subfigure}{0.24\textwidth}
        \centering
        \caption{Ours}
        \includegraphics[width=\linewidth]{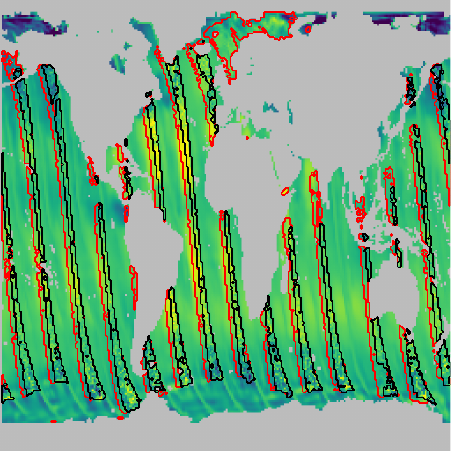}
        \label{fig:smos_256_ours}
    \end{subfigure}\quad
    \begin{subfigure}{0.24\textwidth}
        \centering
        \caption{Ours}
        \includegraphics[width=\linewidth]{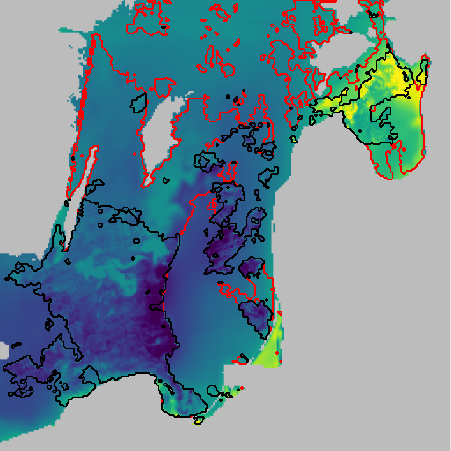}
        \label{fig:nano_256_ours}
    \end{subfigure}
    
    \caption{Visualization of imputation results on Black Sea CHL, Global Ocean SSS, and Baltic Sea NANO with the resolution of $256 \times 256$ (from left to right).}
    \label{fig:full_256_figure}
\end{figure}

%% file: appendix/experiments/qry_uq.tex
\subsection{Uncertainty Quantification of Query Distribution}
\label{app:uq_qry}
To further validate the efficacy of our partitioning mechanism, we conduct a comparative analysis of the spatial query probability distribution between our proposed method and a naive random selection baseline across all three datasets. We set the number of ensemble samplings to 16. As shown in Fig.~\ref{fig:comparison_qry_heatmap}, a naive random selection strategy yields a highly fragmented query distribution with numerous ``dead zones'' where the selection probability approaches zero. These ``dead zones'' receive virtually no updates, often leading to localized generative collapse. In contrast, our proposed method dynamically anchors mask generation to learned physical priors, substantially reducing the occurrence of these dead zones. By ensuring that every spatial point maintains a strictly positive probability of being assigned as a query, our approach allows all points to receive consistent gradient updates. This reliable update frequency effectively broadens the model's learning capacity across the physical field.

\begin{figure}[htbp]
    \centering
    \begin{subfigure}{0.28\textwidth}
        \centering
        \caption{Ours}
        \includegraphics[trim=0 0 2.5cm 0, clip, width=\linewidth]{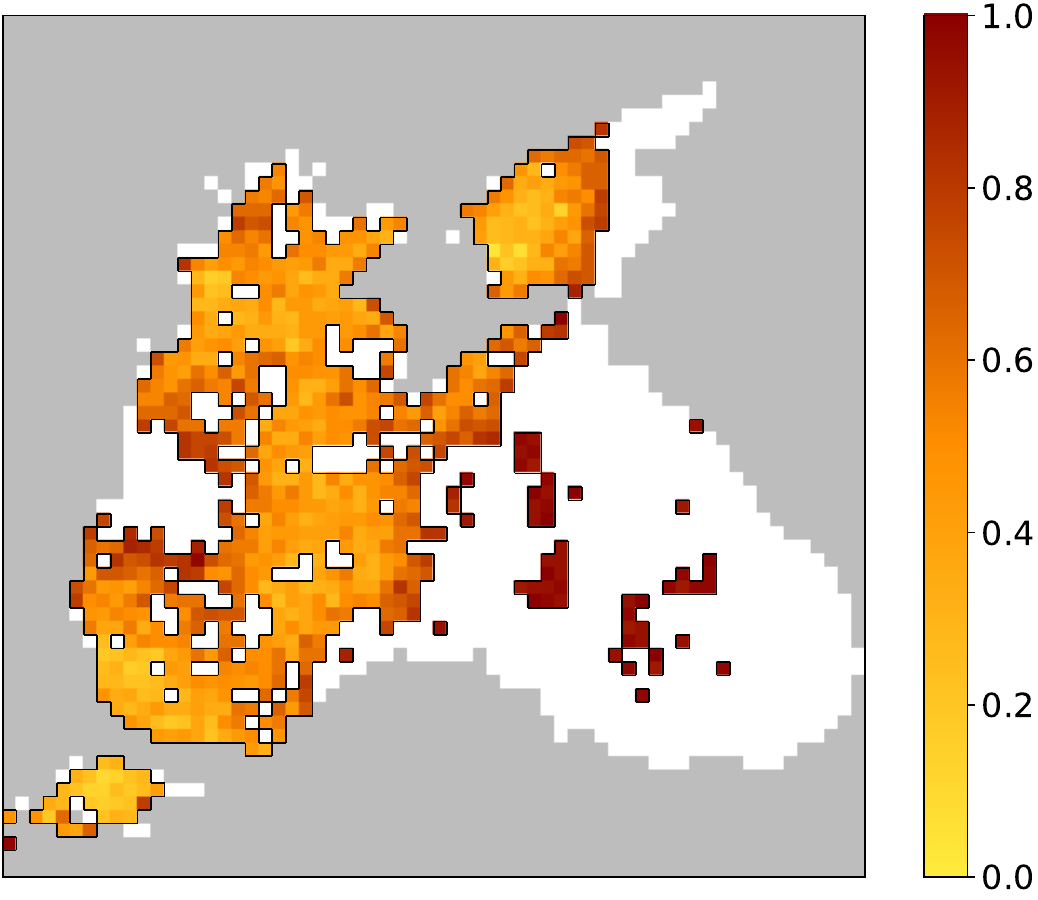}
        \label{fig:black_sea_bfn_qry_heatmap_appendix}
    \end{subfigure}\hfill
    \begin{subfigure}{0.28\textwidth}
        \centering
        \caption{Ours}
        \includegraphics[trim=0 0 2.5cm 0, clip, width=\linewidth]{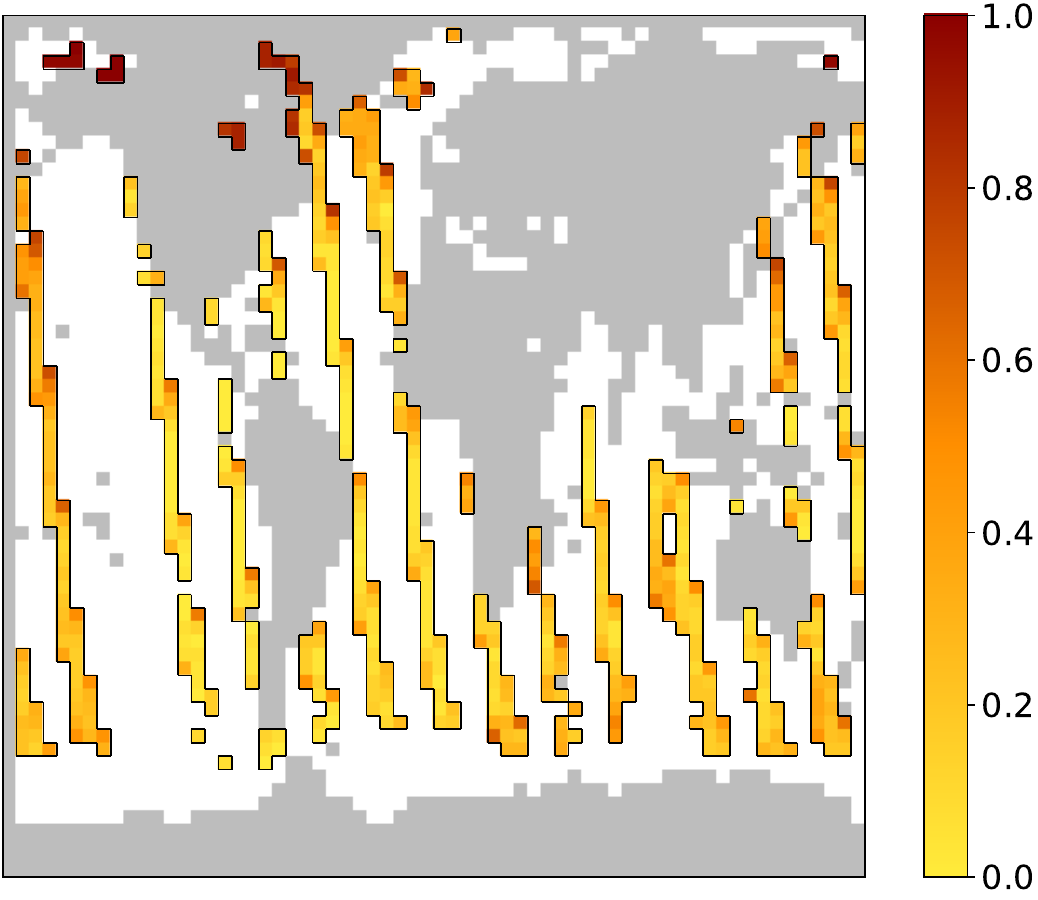}
        \label{fig:smos_bfn_qry_heatmap_appendix}
    \end{subfigure}\hfill
    \begin{subfigure}{0.327\textwidth}
        \centering
        \caption{Ours}
        \includegraphics[width=\linewidth]{figs/query_visual/nano_bfn_qry_heatmap.pdf}
        \label{fig:nano_bfn_qry_heatmap_appendix}
    \end{subfigure}
    
    \vspace{0.5mm} 
    
    \begin{subfigure}{0.28\textwidth}
        \centering
        \caption{Random Select}
        \includegraphics[trim=0 0 2.5cm 0, clip, width=\linewidth]{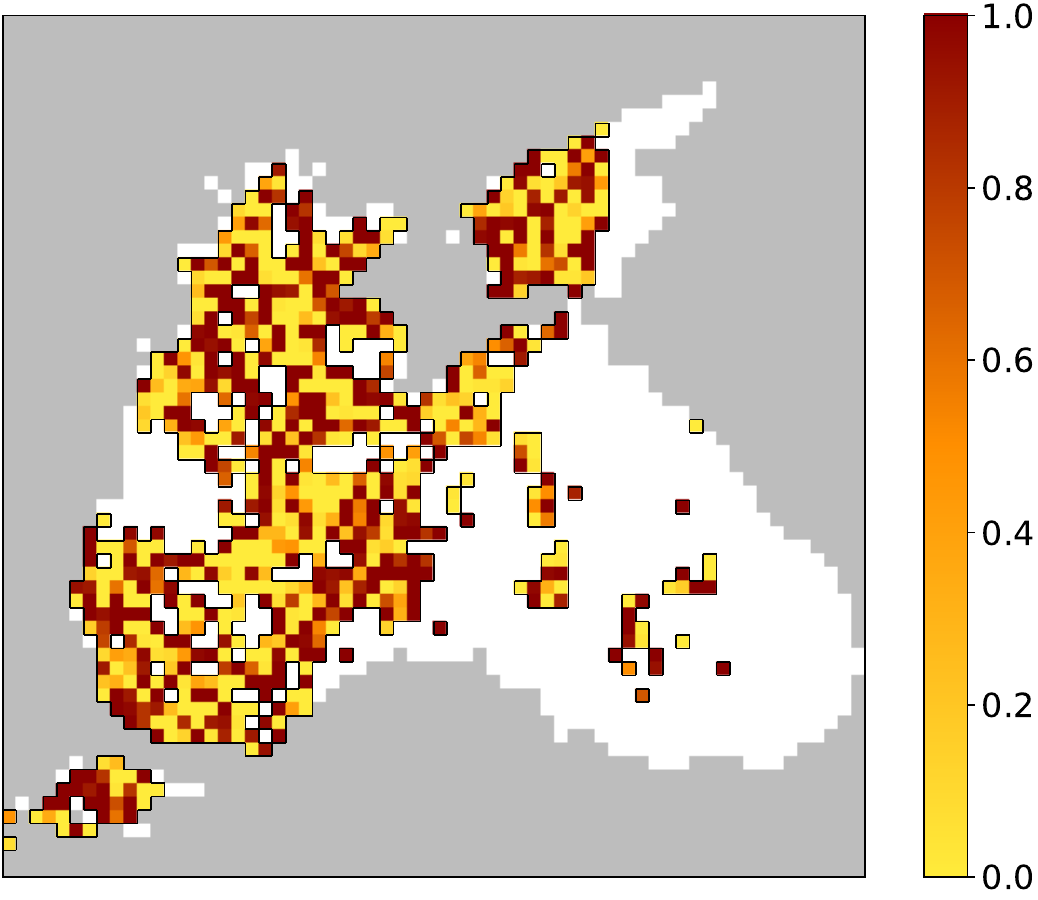}
        \label{fig:black_sea_random_qry_heatmap}
    \end{subfigure}\hfill
    \begin{subfigure}{0.28\textwidth}
        \centering
        \caption{Random Select}
        \includegraphics[trim=0 0 2.5cm 0, clip, width=\linewidth]{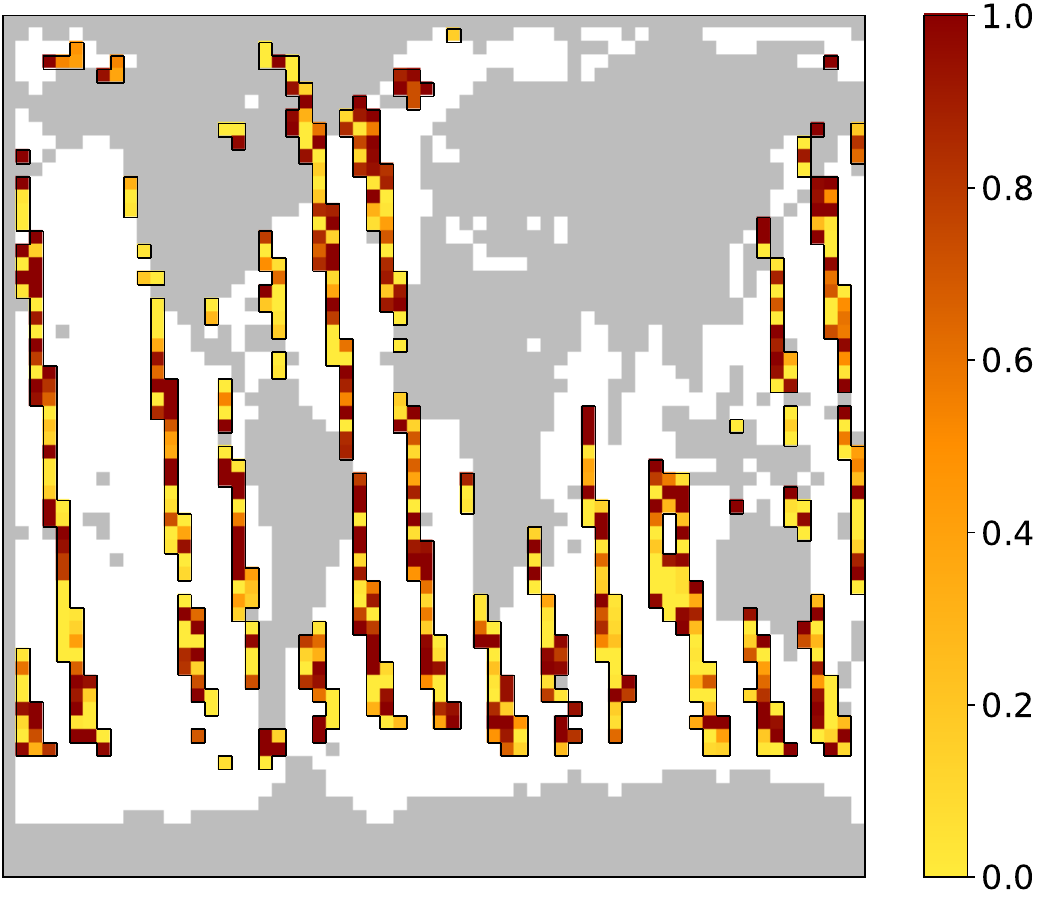}
        \label{fig:smos_random_qry_heatmap}
    \end{subfigure}\hfill
    \begin{subfigure}{0.327\textwidth}
        \centering
        \caption{Random Select}
        \includegraphics[width=\linewidth]{figs/query_visual/nano_random_qry_heatmap.pdf}
        \label{fig:nano_random_qry_heatmap}
    \end{subfigure}
    
    \caption{Comparison of spatial query probability distributions between our proposed method and naive random selection strategy across the Black Sea CHL, Global Ocean SSS, and Baltic Sea NANO datasets with the resolution of $64 \times 64$ (from left to right).}
    \label{fig:comparison_qry_heatmap}
\end{figure}

%% file: appendix/experiments/stochastic_anchor.tex
\subsection{Effectiveness of Stochastic Anchor} \label{app: stochastic anchor}
To demonstrate the effectiveness of the stochastic anchor construction mechanism, we conduct a comparison analysis on $\rho=1$ (i.e. do not use this mechanism ), $\rho=0.8$ and $\rho=0.6$ respectively. As shown in Fig.~\ref{fig:stochastic_anchor_figure} , we visualize the mean and variance of the mask conditionally generated by the BFN after ensemble sampling 16 times. It is easy to observe that without stochastic anchor ($p=1.0$), the sampled masks exhibit minimal variance, which severely limits the spatial diversity of the samples. While, when $p$ is set to $0.6$, the proportion of the context region is reduced, resulting in insufficient information to guide the model effectively. Balancing the need for spatial diversity and sufficient context information, $p=0.8$ provides a better trade-off, ensuring adequate mask variance while preserving enough information.

\begin{figure}[htbp]
    \centering
    \begin{subfigure}{0.28\textwidth}
        \centering
        \caption{Mean}
        \includegraphics[trim=0 0 2cm 0, clip, width=\linewidth]{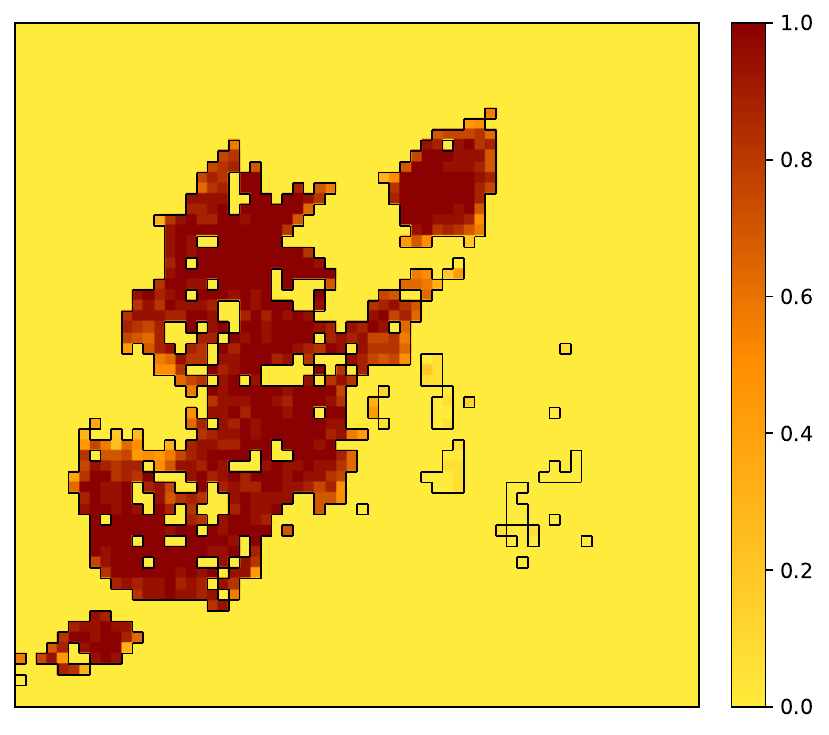}
        \label{fig:mean_context_mask_probability_1}
    \end{subfigure}\hfill
    \begin{subfigure}{0.28\textwidth}
        \centering
        \caption{Mean}
        \includegraphics[trim=0 0 2cm 0, clip, width=\linewidth]{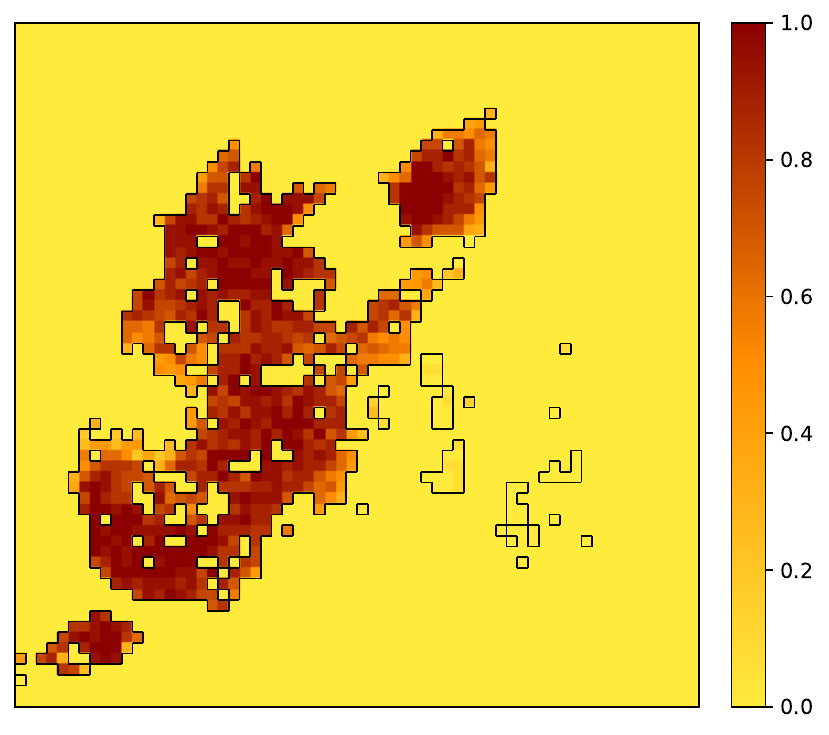}
        \label{fig:mean_context_mask_probability_0.8}
    \end{subfigure}\hfill
    \begin{subfigure}{0.327\textwidth}
        \centering
        \caption{Mean}
        \includegraphics[width=\linewidth]{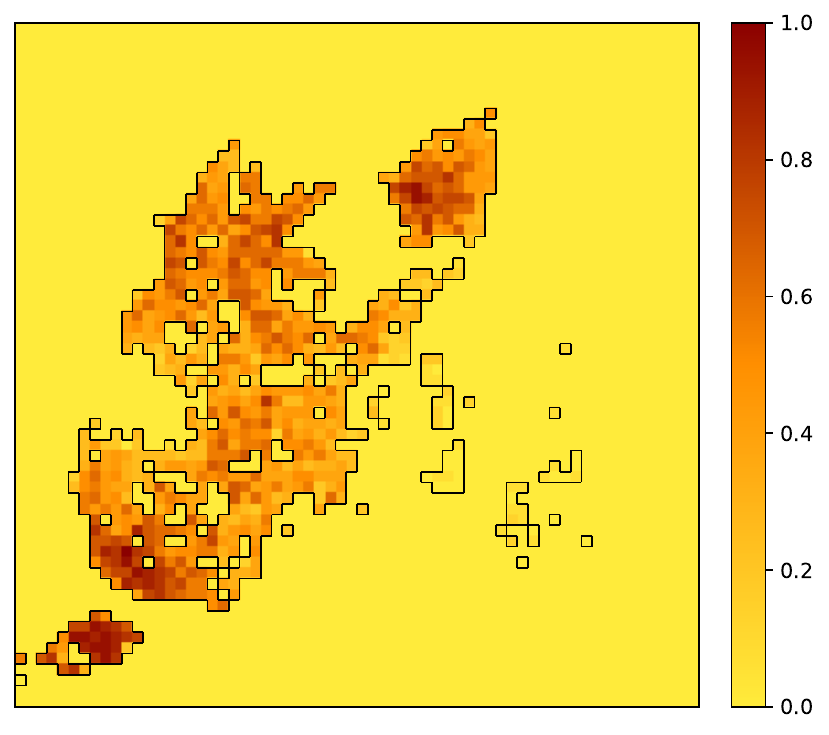}
        \label{fig:mean_context_mask_probability_0.6}
    \end{subfigure}
    
    \vspace{0.5mm} 
    
    \begin{subfigure}{0.28\textwidth}
        \centering
        \caption{Std}
        \includegraphics[trim=0 0 2cm 0, clip, width=\linewidth]{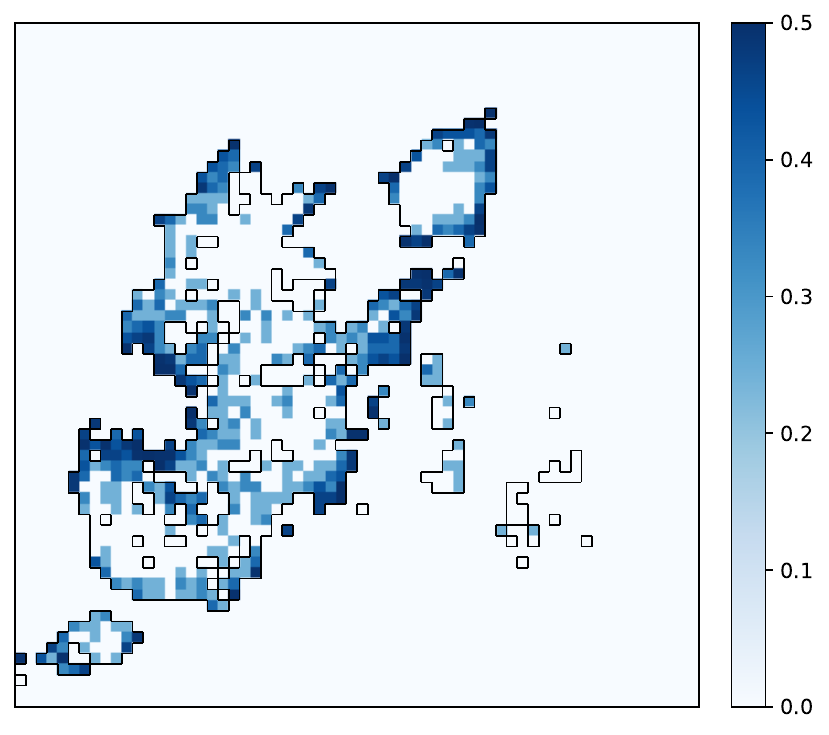}
        \label{fig:std_context_mask_probability_1}
    \end{subfigure}\hfill
    \begin{subfigure}{0.28\textwidth}
        \centering
        \caption{Std}
        \includegraphics[trim=0 0 2cm 0, clip, width=\linewidth]{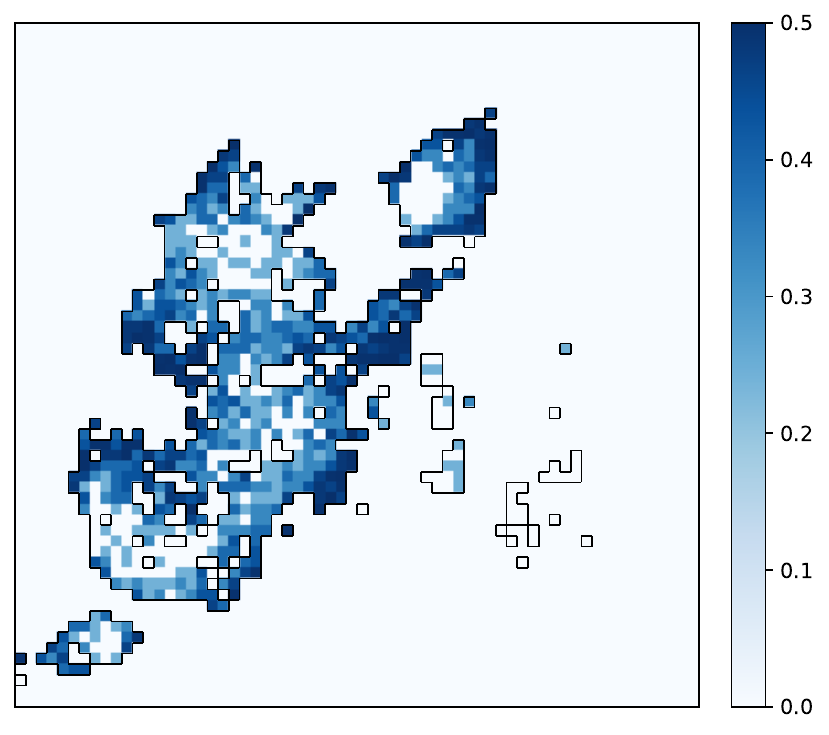}
        \label{fig:std_context_mask_probability_0.8}
    \end{subfigure}\hfill
    \begin{subfigure}{0.327\textwidth}
        \centering
        \caption{Std}
        \includegraphics[width=\linewidth]{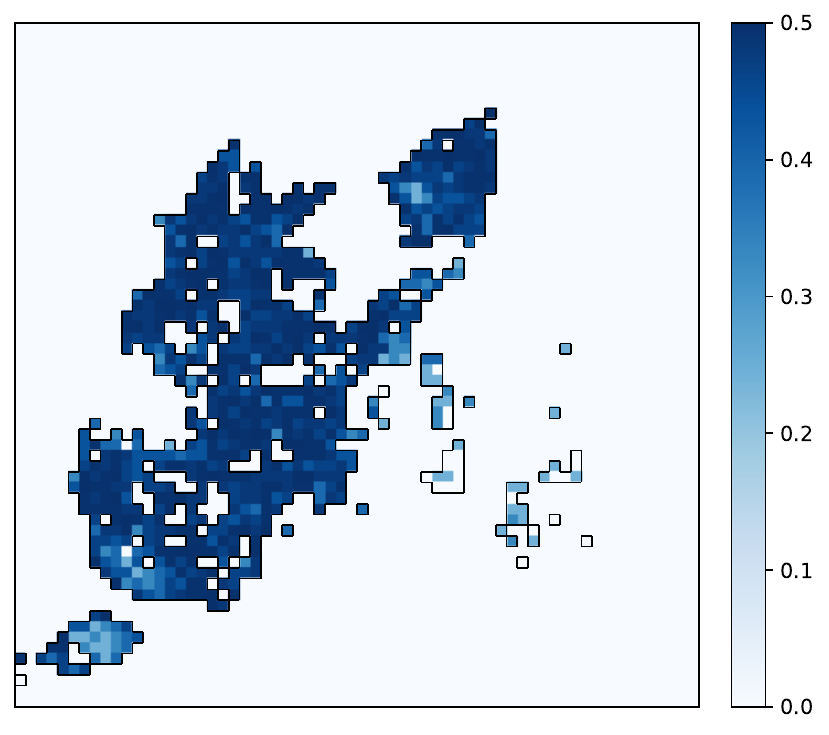}
        \label{fig:std_context_mask_probability_0.6}
    \end{subfigure}
    
    \caption{Comparison of different setting of $\rho$ in conditional BFN sampling on the $ 64 \times 64 $ Black Sea CHL dataset. The value of $\rho$ is set to be 1, 0.8, 0.6 from left to right.}
    \label{fig:stochastic_anchor_figure}
\end{figure}

%% file: appendix/context_select_methods.tex
\section{Implementation of Context Mask Selection Strategies} \label{app: context mask selection strategies}
We provide concrete algorithmic implementations of those context mask selection strategies mentioned in Sec.~\ref{sec:experiment-ablation}. Algorithm~\ref{alg: stochastic uniform partition} presents the \textbf{Pixel-level Partition} strategy, where each point in the observation mask $\bm{M}$ is independently selected as
context or query through Bernoulli sampling with ratios $r_\text{ctx}=0.3$ and $r_\text{qry}=0.3$ here. Algorithm~\ref{alg:block_partition} shows the \textbf{Block-wise Partition} strategy, where a subset of blocks is randomly sampled with ratios $r_\text{ctx}=0.5$ and $r_\text{qry}=0.5$ here, and their intersection with the observation mask $\bm{M}$ forms the context and query masks, respectively. Algorithm~\ref{alg: saliency-driven partition} outlines the \textbf{Saliency-driven Partition} strategy, a heuristic approach that allocates context points based on sorting local gradient magnitudes. Algorithm~\ref{alg: empirical distribution partition} details the \textbf{Empirical Distribution Partition} strategy, which samples a real observation mask from the dataset to intersect with the current observation. Algorithm~\ref{alg: generative prior partition} demonstrates the \textbf{Unconditional Prior Partition} strategy, employing an unconditionally sampled mask from a pre-trained BFN to intersect with the current observation. As for \textbf{Our Proposed Partition} strategy, we have provided the implementation in Algorithm~\ref{alg:bfn_mask} and Algorithm~\ref{alg:main_training}.

\begin{algorithm}[H]
\caption{Pixel-level Partition}
\label{alg: stochastic uniform partition}
\begin{algorithmic}[1]
\REQUIRE observation mask $\bm{M} \in \{0,1\}^d$, context ratio $r_{\text{ctx}} \in (0,1)$, query ratio $r_{\text{qry}} \in (0,1)$
\ENSURE context mask $\bm{M}_{\text{ctx}}$, query mask $\bm{M}_{\text{qry}}$
\STATE initialize $\bm{M}_{\text{ctx}} \gets \mathbf{0}$, $\bm{M}_{\text{qry}} \gets \mathbf{0}$
\FOR{each spatial index $i \in \{1,\dots,d\}$}
    \IF{$\bm{M}_i = 1$} 
        \STATE sample $u \sim \mathrm{Uniform}(0,1)$
        \IF{$u < r_{\text{ctx}}$}
            \STATE $(\bm{M}_{\text{ctx}})_i \gets 1$ 
        \ENDIF
        \STATE sample $v \sim \mathrm{Uniform}(0,1)$
        \IF{$v < r_{\text{qry}}$}
            \STATE $(\bm{M}_{\text{qry}})_i \gets 1$
        \ENDIF
    \ENDIF
\ENDFOR
\RETURN $\bm{M}_{\text{ctx}}, \bm{M}_{\text{qry}}$
\end{algorithmic}
\end{algorithm}

\begin{algorithm}[H]
\caption{Block-Wise Partition}
\label{alg:block_partition}
\begin{algorithmic}[1]
\REQUIRE 
mask grid $\bm{M} \in \{0, 1\}^{H \times W}$; 
block grid dimensions $d$; 
context ratio $r_{\text{ctx}} \in (0, 1]$, query ratio $r_{\text{qry}} \in (0, 1]$.
\ENSURE 
context mask $\bm{M}_{\text{ctx}}$, query mask $\bm{M}_{\text{qry}}$.

\STATE initialize $\bm{M}_{\text{ctx}} \leftarrow \mathbf{0}^{H \times W}$, $\bm{M}_{\text{qry}} \leftarrow \mathbf{0}^{H \times W}$
\STATE partition the $H \times W$ mask $\bm{M}$ into a $d \times d$ grid of blocks $\{\bm{B}_{i,j}\}_{i,j=1}^d$
\STATE total number of blocks: $K \leftarrow d \times d$
\STATE the number of context blocks to select: $N_{\text{ctx}} \leftarrow \lfloor r_{\text{ctx}} \times K \rfloor$
\STATE the number of query blocks to select: $N_{\text{qry}} \leftarrow \lfloor r_{\text{qry}} \times K \rfloor$
\STATE uniformly sample a subset $\mathcal{\bm{S}}_{\text{ctx}}$ of $N_{\text{ctx}}$ blocks from the $K$ blocks without replacement:
\STATE \quad $\mathcal{\bm{S}}_{\text{ctx}} \leftarrow \text{Sample}( \{\bm{B}_{i,j}\}_{i,j=1}^d, N_{\text{ctx}} )$
\STATE uniformly sample a subset $\mathcal{\bm{S}}_{\text{qry}}$ of $N_{\text{qry}}$ blocks from the $K$ blocks without replacement:
\STATE \quad $\mathcal{\bm{S}}_{\text{qry}} \leftarrow \text{Sample}( \{\bm{B}_{i,j}\}_{i,j=1}^d, N_{\text{qry}} )$
\FOR{each block $\bm{B}_{i, j} \in \mathcal{\bm{S}}_{\text{ctx}}$}
    \STATE $\bm{M}_{\text{ctx}}[\text{region of } \bm{B}_{i, j}] \leftarrow \bm{M}[\text{region of } \bm{B}_{i, j}]$
\ENDFOR
\FOR{each block $\bm{B}_{i, j} \in \mathcal{\bm{S}}_{\text{qry}}$}
    \STATE $\bm{M}_{\text{qry}}[\text{region of } \bm{B}_{i, j}] \leftarrow \bm{M}[\text{region of } \bm{B}_{i, j}]$
\ENDFOR
\STATE \textbf{return} $\bm{M}_{\text{ctx}}, \bm{M}_{\text{qry}}$
\end{algorithmic}
\end{algorithm}

\begin{algorithm}[H]
\caption{Saliency-driven Partition}
\label{alg: saliency-driven partition}
\begin{algorithmic}[1]
\REQUIRE observation mask $\bm{M} \in \{0, 1\}^d$, observed image data $\bm{x}_{\text{obs}}$, context ratio $r_{\text{ctx}} \in (0, 1)$
\ENSURE context mask $\bm{M}_{\text{ctx}}$, query mask $\bm{M}_{\text{qry}}$
\STATE initialize $\bm{M}_{\text{ctx}} \leftarrow \mathbf{0}, \bm{M}_{\text{qry}} \leftarrow \mathbf{0}$
\STATE $\bm{S}_{\text{obs}} \leftarrow \{i \in \{1, \dots, d\} \mid M_i = 1\}$ 
\STATE $N_{\text{obs}} \leftarrow |\bm{S}_{\text{obs}}|$
\STATE $N_{\text{ctx}} \leftarrow \lfloor r_{\text{ctx}} \cdot N_{\text{obs}} \rfloor$ 
\STATE $N_{\text{half}} \leftarrow \lfloor N_{\text{ctx}} / 2 \rfloor$ 
\STATE compute local gradient magnitudes $G_i = \|\nabla x_{\text{obs}, i}\|$ for all $i \in \bm{S}_{\text{obs}}$
\STATE sort $\bm{S}_{\text{obs}}$ in descending order based on $G_i$ into an ordered list $L$
\STATE $\bm{S}_{\text{high}} \leftarrow$ first half of elements in $L$ 
\STATE $\bm{S}_{\text{low}} \leftarrow$ second half of elements in $L$ 
\STATE randomly sample a subset $\bm{S}_{\text{ctx\_high}} \subseteq \bm{S}_{\text{high}}$ of size $N_{\text{half}}$
\STATE randomly sample a subset $\bm{S}_{\text{ctx\_low}} \subseteq \bm{S}_{\text{low}}$ of size $(N_{\text{ctx}} - N_{\text{half}})$
\STATE $\bm{S}_{\text{ctx}} \leftarrow \bm{S}_{\text{ctx\_high}} \cup \bm{S}_{\text{ctx\_low}}$ 
\FOR{each spatial index $i \in \{1, \dots, d\}$}
    \IF{$\bm{M}_i = 1$}
        \IF{$i \in \bm{S}_{\text{ctx}}$}
            \STATE $(\bm{M}_{\text{ctx}})_i \leftarrow 1$
        \ELSE
            \STATE $(\bm{M}_{\text{qry}})_i \leftarrow 1$ 
        \ENDIF
    \ENDIF
\ENDFOR
\RETURN $\bm{M}_{\text{ctx}}, \bm{M}_{\text{qry}}$
\end{algorithmic}
\end{algorithm}

\begin{algorithm}[H]
\caption{Empirical Distribution Partition}
\label{alg: empirical distribution partition}
\begin{algorithmic}[1]
\REQUIRE observation mask $\bm{M} \in \{0, 1\}^d$, dataset of real observation masks $\mathcal{D}_{\text{mask}}$
\ENSURE context mask $\bm{M}_{\text{ctx}}$, query mask $\bm{M}_{\text{qry}}$
\STATE sample a real observation mask $\bm{M}_{\text{sample}} \sim \mathcal{D}_{\text{mask}}$
\STATE initialize $\bm{M}_{\text{ctx}} \leftarrow \mathbf{0}, \bm{M}_{\text{qry}} \leftarrow \mathbf{0}$
\FOR{each spatial index $i \in \{1, \dots, d\}$}
    \IF{$\bm{M}_i = 1$}
        \IF{$(\bm{M}_{\text{sample}})_i = 1$}
            \STATE $(\bm{M}_{\text{ctx}})_i \leftarrow 1$ 
        \ELSE
            \STATE $(\bm{M}_{\text{qry}})_i \leftarrow 1$ 
        \ENDIF
    \ENDIF
\ENDFOR
\RETURN $\bm{M}_{\text{ctx}}, \bm{M}_{\text{qry}}$
\end{algorithmic}
\end{algorithm}

\begin{algorithm}[H]
\caption{Unconditional Prior Partition}
\label{alg: generative prior partition}
\begin{algorithmic}[1]
\REQUIRE observation mask $\bm{M} \in \{0, 1\}^d$, pre-trained BFN model $\mathcal{B}_{\text{mask}}$
\ENSURE context mask $\bm{M}_{\text{ctx}}$, query mask $\bm{M}_{\text{qry}}$
\STATE unconditionally sample a $M_{\text{sample}} \sim \mathcal{B}_{\text{mask}}$
\STATE initialize $\bm{M}_{\text{ctx}} \leftarrow \mathbf{0}, \bm{M}_{\text{qry}} \leftarrow \mathbf{0}$
\FOR{each spatial index $i \in \{1, \dots, d\}$}
    \IF{$\bm{M}_i = 1$}
        \IF{$(\bm{M}_{\text{sample}})_i = 1$}
            \STATE $(\bm{M}_{\text{ctx}})_i \leftarrow 1$ 
        \ELSE
            \STATE $(\bm{M}_{\text{qry}})_i \leftarrow 1$ 
        \ENDIF
    \ENDIF
\ENDFOR
\RETURN $\bm{M}_{\text{ctx}}, \bm{M}_{\text{qry}}$
\end{algorithmic}
\end{algorithm}

%% file: appendix/sampling_methods.tex
\section{Implementation of Sampling Methods} \label{app: sampling methods}
We provide detailed algorithmic implementations of various sampling methods evaluated in Sec.~\ref{sec:experiment-ablation}. Inspired by the methodologies introduced in~\citep{zhou2025incomplete}, we first present three fundamental approaches. Algorithm~\ref{alg: direct projection} demonstrates the \textbf{Direct Projection} method, which performs a one-step projection to minimize global reconstruction error. Algorithm~\ref{alg: proximal prediction} details \textbf{Proximal Prediction} method, introducing a single step denoising process with an infinitesimal noise level to better align with the diffusion prior and mitigate blurriness. Algorithm~\ref{alg: iterative conditioning} details the \textbf{Iterative Conditioning} method. Notably, to heuristically blend the diffusion expectation and imputation expectation, we employ a simple linear schedule ($\omega_s = s$) as the time-varying weight. Furthermore, drawing inspiration from the resampling mechanisms proposed in RePaint~\citep{lugmayr2022repaint}, we present two advanced sampling strategies. Algorithm~\ref{alg:resampling inpainting} describes the implementation of \textbf{Resampling Inpainting} method, which harmonizes observation boundaries by alternating between denoising and stochastic resampling steps. Algorithm~\ref{alg: recursive jump iterative} demonstrates the \textbf{Recursive Jump Inpainting} method, which utilizes profound temporal jumps to re-inject significant variance, effectively reconciling global physical dynamics with sparse observation constraints.

\begin{algorithm}[H]
\caption{Direct Projection}
\label{alg: direct projection}
\begin{algorithmic}[1]
\REQUIRE partially observed data $\bm{x}_{\text{obs}}$, mask $\bm{M}$, trained model $\bm{x}_{\bm{\phi}}$, ensemble size $K$
\ENSURE imputed complete data $\bm{x}_0$
\STATE Generate $K$ random context masks $\{\bm{M}_{\text{ctx}}^{(k)}\}_{k=1}^K \subseteq \bm{M}$
\STATE $\bm{x}^* \leftarrow \frac{1}{K} \sum_{k=1}^K \bm{x}_{\bm{\phi}}\left(\bm{M}_{\text{ctx}}^{(k)} \odot \bm{x}_{\text{obs}}, \bm{M}_{\text{ctx}}^{(k)}\right)$
\STATE $\bm{x}_{0} \leftarrow \bm{M} \odot \bm{x}_{\text{obs}} + (1 - \bm{M}) \odot \bm{x}^*$
\RETURN $\bm{x}_0$
\end{algorithmic}
\end{algorithm}

\begin{algorithm}[H]
\caption{Proximal Prediction}
\label{alg: proximal prediction}
\begin{algorithmic}[1]
\REQUIRE partially observed data $\bm{x}_{\text{obs}}$, mask $\bm{M}$, trained model $\bm{x}_{\bm{\phi}}$, minimal timestep $\delta \ll 1$, ensemble size $K$
\ENSURE imputed complete data $\bm{x}_0$
\STATE $\bm{\epsilon} \sim \mathcal{N}(0, I)$
\STATE $\bm{x}_{\text{obs},\delta} \leftarrow \alpha_\delta \bm{x}_{\text{obs}} + \sigma_\delta \bm{\epsilon}$
\STATE Generate $K$ random context masks $\{\bm{M}_{\text{ctx}}^{(k)}\}_{k=1}^K \subseteq \bm{M}$
\STATE $\bm{x}_{\delta} \leftarrow \frac{1}{K} \sum_{k=1}^K \bm{x}_{\bm{\phi}}\left(\delta, \bm{M}_{\text{ctx}}^{(k)} \odot \bm{x}_{\text{obs},\delta}, \bm{M}_{\text{ctx}}^{(k)}\right)$
\STATE $\bm{x}_0 \leftarrow \bm{M} \odot \bm{x}_{\text{obs}} + (1 - \bm{M}) \odot \bm{x}_{\delta}$
\RETURN $\bm{x}_0$
\end{algorithmic}
\end{algorithm}

\begin{algorithm}[H]
\caption{Iterative Conditioning}
\label{alg: iterative conditioning}
\begin{algorithmic}[1]
\REQUIRE partially observed data $\bm{x}_{\text{obs}}$, mask $\bm{M}$, trained model $\bm{x}_{\bm{\phi}}$, ensemble size $K$, weight schedule $\omega_s=s$
\ENSURE imputed complete data $\bm{x}_0$
\STATE Generate $K$ random context masks $\{\bm{M}_{\text{ctx}}^{(k)}\}_{k=1}^K \subseteq \bm{M}$
\STATE $E_{\text{imp}} \leftarrow \frac{1}{K} \sum_{k=1}^K \bm{x}_{\bm{\phi}}\left(\bm{M}_{\text{ctx}}^{(k)} \odot \bm{x}_{\text{obs}}, \bm{M}_{\text{ctx}}^{(k)}\right)$
\STATE initialize: $\bm{x}_T \sim \mathcal{N}(\mathbf{0}, \bm{I})$
\FOR{diffusion steps from $s$ to $t$}
    \STATE sample a $\bm{M}_{\text{ctx}}$ 
    \STATE $E_{\text{diff}} \leftarrow \bm{x}_\phi\left(s, \bm{M}_{\text{ctx}} \odot \bm{x}_s, \bm{M}_{\text{ctx}}\right)$
    \STATE $\hat{\bm{x}}_0 \leftarrow \omega_s E_{\text{diff}} + (1 - \omega_s) E_{\text{imp}}$
    \STATE $\bm{\epsilon}_{\text{unobs}} \leftarrow \frac{\bm{x}_s - \alpha_s \hat{\bm{x}}_0}{\sigma_s}$
    \STATE $\bm{\epsilon}_{\text{obs}} \leftarrow \frac{\bm{x}_s - \alpha_s \bm{x}_{\text{obs}}}{\sigma_s}$
    \STATE $\bm{\epsilon}_{\text{full}} \leftarrow \bm{M} \odot \bm{\epsilon}_{\text{obs}} + (1 - \bm{M}) \odot \bm{\epsilon}_{\text{unobs}}$
    \STATE $\bm{x}_t \leftarrow \text{DiffusionODE}(s, t, \bm{x}_s, \bm{\epsilon}_{\text{full}})$
\ENDFOR

\STATE $\bm{x}_0 \leftarrow \bm{M} \odot \bm{x}_{\text{obs}} + (1 - \bm{M}) \odot \bm{x}_t$
\RETURN $\bm{x}_0$
\end{algorithmic}
\end{algorithm}

\begin{algorithm}[H]
\caption{Resampling Inpainting}
\label{alg:resampling inpainting}
\begin{algorithmic}[1]
\REQUIRE partially observed data $\bm{x}_{\text{obs}}$, mask $\bm{M}$, trained model $\bm{x}_{\bm{\phi}}$, ensemble size $K$, weight schedule $\omega_s=s$, jump length $j$, jump frequency $f$
\ENSURE imputed complete data $\bm{x}_0$
\STATE Generate $K$ random context masks $\{\bm{M}_{\text{ctx}}^{(k)}\}_{k=1}^K \subseteq \bm{M}$
\STATE $E_{\text{imp}} \leftarrow \frac{1}{K} \sum_{k=1}^K \bm{x}_\phi\left(\bm{M}_{\text{ctx}}^{(k)} \odot \bm{x}_{\text{obs}}, \bm{M}_{\text{ctx}}^{(k)}\right)$
\STATE initialize: $\bm{x}_T \sim \mathcal{N}(0, I)$
\STATE $s \leftarrow T$
\STATE $c \leftarrow 0$
\WHILE{$s > 0$}
    \STATE $t \leftarrow s - 1$
    \STATE sample a $\bm{M}_{\text{ctx}}$
    \STATE $E_{\text{diff}} \leftarrow \bm{x}_{\bm{\phi}}(s, \bm{M}_{\text{ctx}} \odot \bm{x}_s, \bm{M}_{\text{ctx}})$
    \STATE $\hat{x}_0 \leftarrow \omega_s E_{\text{diff}} + (1 - \omega_s) E_{\text{imp}}$
    \STATE $\bm{\epsilon}_{\text{unobs}} \leftarrow \frac{\bm{x}_s - \alpha_s \bm{\hat{x}}_0}{\sigma_s}$
    \STATE $\bm{\epsilon}_{\text{obs}} \leftarrow \frac{\bm{x}_s - \alpha_s \bm{x}_{\text{obs}}}{\sigma_s}$
    \STATE $\bm{\epsilon}_{\text{full}} \leftarrow \bm{M} \odot \bm{\epsilon}_{\text{obs}} + (1 - \bm{M}) \odot \bm{\epsilon}_{\text{unobs}}$
    \STATE $\bm{x}_t \leftarrow \text{DiffusionODE}(s, t, \bm{x}_s, \bm{\epsilon}_{\text{full}})$
    \STATE $c \leftarrow c + 1$
    \IF{$c \mathbin{\%} f == 0$ \AND $t > 0$}
        \STATE $s_{\text{new}} \leftarrow \min(t + j, T)$
        \STATE $\bm{x}_{s_{\text{new}}} \leftarrow \text{ForwardDiffusion}(t, s_{\text{new}}, \bm{x}_t)$
        \STATE $s \leftarrow s_{\text{new}}$
    \ELSE
        \STATE $s \leftarrow t$
    \ENDIF
\ENDWHILE

\STATE $\bm{x}_0 \leftarrow \bm{M} \odot \bm{x}_{\text{obs}} + (1 - \bm{M}) \odot \bm{x}_t$
\RETURN $\bm{x}_0$
\end{algorithmic}
\end{algorithm}

\begin{algorithm}[H]
\caption{Recursive Jump Inpainting}
\label{alg: recursive jump iterative}
\begin{algorithmic}[1]
\REQUIRE partially observed data $\bm{x}_{\text{obs}}$, mask $\bm{M}$, trained model $\bm{x}_{\bm{\phi}}$, ensemble size $K$, weight schedule $\omega_s=s$, total jump stages $n$
\ENSURE imputed complete data $\bm{x}_0$
\STATE Generate $K$ random context masks $\{\bm{M}_{\text{ctx}}^{(k)}\}_{k=1}^K \subseteq \bm{M}$
\STATE $E_{\text{imp}} \leftarrow \frac{1}{K} \sum_{k=1}^K \bm{x}_{\bm{\phi}}\left(\bm{M}_{\text{ctx}}^{(k)} \odot \bm{x}_{\text{obs}}, \bm{M}_{\text{ctx}}^{(k)}\right)$
\STATE initialize: $\bm{x}_T \sim \mathcal{N}(0, I)$
\STATE $s \leftarrow T$
\STATE $k \leftarrow n - 1$ \COMMENT{initialize schedule index}
\WHILE{$s > 0$}
    \STATE $t \leftarrow s - 1$
    \STATE sample a $\bm{M}_{\text{ctx}}$
    \STATE $E_{\text{diff}} \leftarrow \bm{x}_{\bm{\phi}}(s, \bm{M}_{\text{ctx}} \odot \bm{x}_s, \bm{M}_{\text{ctx}})$
    \STATE $\bm{\hat{x}}_0 \leftarrow \omega_s E_{\text{diff}} + (1 - \omega_s) E_{\text{imp}}$
    \STATE $\bm{\epsilon}_{\text{unobs}} \leftarrow \frac{\bm{x}_s - \alpha_s \bm{\hat{x}}_0}{\sigma_s}$
    \STATE $\bm{\epsilon}_{\text{obs}} \leftarrow \frac{\bm{x}_s - \alpha_s \bm{x}_{\text{obs}}}{\sigma_s}$
    \STATE $\bm{\epsilon}_{\text{full}} \leftarrow \bm{M} \odot \bm{\epsilon}_{\text{obs}} + (1 - \bm{M}) \odot \bm{\epsilon}_{\text{unobs}}$
    \STATE $\bm{x}_t \leftarrow \text{DiffusionODE}(s, t, \bm{x}_s, \bm{\epsilon}_{\text{full}})$
    \IF{$t == 0$ \AND $k > 0$}
        \STATE $T_{\text{jump}} \leftarrow \lfloor \frac{k}{n} T \rfloor$
        \STATE $\bm{x}_{T_{\text{jump}}} \leftarrow \text{ForwardDiffusion}(0, T_{\text{jump}}, \bm{x}_t)$
        \STATE $s \leftarrow T_{\text{jump}}$
        \STATE $k \leftarrow k - 1$ 
    \ELSE
        \STATE $s \leftarrow t$
    \ENDIF
\ENDWHILE
\STATE $\bm{x}_0 \leftarrow \bm{M} \odot \bm{x}_{\text{obs}} + (1 - \bm{M}) \odot \bm{x}_t$
\RETURN $\bm{x}_0$
\end{algorithmic}
\end{algorithm}

%% file: appendix/limitation.tex
\section{Limitations} \label{app: limitations}
Although our framework demonstrates strong capabilities for imputation tasks in real-world datasets, we acknowledge that there are several limitations: 
\begin{itemize}
    \item \textbf{Pretraining Computational Overhead:} The framework relies on a separate pre-training stage for the Bayesian Flow Network (BFN) model to capture realistic occlusion topologies for different datasets.
    \item \textbf{Hyperparameter Sensitivity:} The guidance mechanism requires empirical tuning of parameters such as the retention ratio $\rho$, guidance scale $w_g$, and sampling steps $n$. Although the guidance scale and retention ratios are not overly sensitive within appropriate intervals, automated hyperparameter selection could improve the ease of deployment across new datasets.
\end{itemize}

%% file: appendix/llm_usage.tex
\section{LLM Usage Statement} \label{app: LLM usage statement}
We utilized large language models during the preparation of this paper. Specifically, they helped us proofread the text for clarity and grammar, format the LaTeX code for our math and tables, check our proofs for logical consistency, and correct typos throughout the document.

%% file: ref.bib
@article{barth2024ensemble,
  title={Ensemble reconstruction of missing satellite data using a denoising diffusion model: application to chlorophyll a concentration in the Black Sea},
  author={Barth, Alexander and Brajard, Julien and Alvera-Azc{\'a}rate, Aida and Mohamed, Bayoumy and Troupin, Charles and Beckers, Jean-Marie},
  journal={Ocean Science},
  volume={20},
  number={6},
  pages={1567--1584},
  year={2024},
  publisher={Copernicus Publications G{\"o}ttingen, Germany}
}

@article{zhou2025incomplete,
  title={Incomplete Data, Complete Dynamics: A Diffusion Approach},
  author={Zhou, Zihan and Wang, Chenguang and Ye, Hongyi and Guan, Yongtao and Yu, Tianshu},
  journal={arXiv preprint arXiv:2509.20098},
  year={2025}
}

@inproceedings{lugmayr2022repaint,
  title={Repaint: Inpainting using denoising diffusion probabilistic models},
  author={Lugmayr, Andreas and Danelljan, Martin and Romero, Andres and Yu, Fisher and Timofte, Radu and Van Gool, Luc},
  booktitle={Proceedings of the IEEE/CVF conference on computer vision and pattern recognition},
  pages={11461--11471},
  year={2022}
}

@misc{graves2025bayesianflownetworks,
      title={Bayesian Flow Networks}, 
      author={Alex Graves and Rupesh Kumar Srivastava and Timothy Atkinson and Faustino Gomez},
      year={2025},
      eprint={2308.07037},
      archivePrefix={arXiv},
      primaryClass={cs.LG},
      url={https://arxiv.org/abs/2308.07037}, 
}

@article{zhou2024generating,
  title={Generating Physical Dynamics under Priors},
  author={Zhou, Zihan and Wang, Xiaoxue and Yu, Tianshu},
  journal={arXiv preprint arXiv:2409.00730},
  year={2024}
}

@article{xue2024unifying,
  title={Unifying bayesian flow networks and diffusion models through stochastic differential equations},
  author={Xue, Kaiwen and Zhou, Yuhao and Nie, Shen and Min, Xu and Zhang, Xiaolu and Zhou, Jun and Li, Chongxuan},
  journal={arXiv preprint arXiv:2404.15766},
  year={2024}
}

@article{qiu2024empower,
  title={Empower structure-based molecule optimization with gradient guided bayesian flow networks},
  author={Qiu, Keyue and Song, Yuxuan and Yu, Jie and Ma, Hongbo and Cao, Ziyao and Zhang, Zhilong and Wu, Yushuai and Zheng, Mingyue and Zhou, Hao and Ma, Wei-Ying},
  journal={arXiv preprint arXiv:2411.13280},
  year={2024}
}

@misc{daras2023ambientdiffusionlearningclean,
      title={Ambient Diffusion: Learning Clean Distributions from Corrupted Data}, 
      author={Giannis Daras and Kulin Shah and Yuval Dagan and Aravind Gollakota and Alexandros G. Dimakis and Adam Klivans},
      year={2023},
      eprint={2305.19256},
      archivePrefix={arXiv},
      primaryClass={cs.LG},
      url={https://arxiv.org/abs/2305.19256}, 
}

@misc{ouyang2023missdifftrainingdiffusionmodels,
      title={MissDiff: Training Diffusion Models on Tabular Data with Missing Values}, 
      author={Yidong Ouyang and Liyan Xie and Chongxuan Li and Guang Cheng},
      year={2023},
      eprint={2307.00467},
      archivePrefix={arXiv},
      primaryClass={cs.LG},
      url={https://arxiv.org/abs/2307.00467}, 
}

@inproceedings{zheng2023improved,
  title={Improved techniques for maximum likelihood estimation for diffusion odes},
  author={Zheng, Kaiwen and Lu, Cheng and Chen, Jianfei and Zhu, Jun},
  booktitle={International Conference on Machine Learning},
  pages={42363--42389},
  year={2023},
  organization={PMLR}
}

@misc{erdinc2024generativegeostatisticalmodelingincomplete,
      title={Generative Geostatistical Modeling from Incomplete Well and Imaged Seismic Observations with Diffusion Models}, 
      author={Huseyin Tuna Erdinc and Rafael Orozco and Felix J. Herrmann},
      year={2024},
      eprint={2406.05136},
      archivePrefix={arXiv},
      primaryClass={physics.geo-ph},
      url={https://arxiv.org/abs/2406.05136}, 
}

@misc{liu2025imputemacfmimputationbasedmaskaware,
      title={Impute-MACFM: Imputation based on Mask-Aware Flow Matching}, 
      author={Dengyi Liu and Honggang Wang and Hua Fang},
      year={2025},
      eprint={2509.23126},
      archivePrefix={arXiv},
      primaryClass={cs.LG},
      url={https://arxiv.org/abs/2509.23126}, 
}

@misc{yu2025missingdataimputationreducing,
      title={Missing Data Imputation by Reducing Mutual Information with Rectified Flows}, 
      author={Jiahao Yu and Qizhen Ying and Leyang Wang and Ziyue Jiang and Song Liu},
      year={2025},
      eprint={2505.11749},
      archivePrefix={arXiv},
      primaryClass={stat.ML},
      url={https://arxiv.org/abs/2505.11749}, 
}

@misc{liang2024latentspacescorebaseddiffusion,
      title={Latent Space Score-based Diffusion Model for Probabilistic Multivariate Time Series Imputation}, 
      author={Guojun Liang and Najmeh Abiri and Atiye Sadat Hashemi and Jens Lundström and Stefan Byttner and Prayag Tiwari},
      year={2024},
      eprint={2409.08917},
      archivePrefix={arXiv},
      primaryClass={cs.LG},
      url={https://arxiv.org/abs/2409.08917}, 
}

@inproceedings{Bora2018AmbientGANGM,
  title={AmbientGAN: Generative models from lossy measurements},
  author={Ashish Bora and Eric Price and Alexandros G. Dimakis},
  booktitle={International Conference on Learning Representations},
  year={2018},
  url={https://api.semanticscholar.org/CorpusID:3481010}
}

@misc{li2021fourierneuraloperatorparametric,
      title={Fourier Neural Operator for Parametric Partial Differential Equations}, 
      author={Zongyi Li and Nikola Kovachki and Kamyar Azizzadenesheli and Burigede Liu and Kaushik Bhattacharya and Andrew Stuart and Anima Anandkumar},
      year={2021},
      eprint={2010.08895},
      archivePrefix={arXiv},
      primaryClass={cs.LG},
      url={https://arxiv.org/abs/2010.08895}, 
}

@misc{huang2024diffusionpdegenerativepdesolvingpartial,
      title={DiffusionPDE: Generative PDE-Solving Under Partial Observation}, 
      author={Jiahe Huang and Guandao Yang and Zichen Wang and Jeong Joon Park},
      year={2024},
      eprint={2406.17763},
      archivePrefix={arXiv},
      primaryClass={cs.LG},
      url={https://arxiv.org/abs/2406.17763}, 
}

@misc{karras2024analyzingimprovingtrainingdynamics,
      title={Analyzing and Improving the Training Dynamics of Diffusion Models}, 
      author={Tero Karras and Miika Aittala and Jaakko Lehtinen and Janne Hellsten and Timo Aila and Samuli Laine},
      year={2024},
      eprint={2312.02696},
      archivePrefix={arXiv},
      primaryClass={cs.CV},
      url={https://arxiv.org/abs/2312.02696}, 
}

@misc{dosovitskiy2021imageworth16x16words,
      title={An Image is Worth 16x16 Words: Transformers for Image Recognition at Scale}, 
      author={Alexey Dosovitskiy and Lucas Beyer and Alexander Kolesnikov and Dirk Weissenborn and Xiaohua Zhai and Thomas Unterthiner and Mostafa Dehghani and Matthias Minderer and Georg Heigold and Sylvain Gelly and Jakob Uszkoreit and Neil Houlsby},
      year={2021},
      eprint={2010.11929},
      archivePrefix={arXiv},
      primaryClass={cs.CV},
      url={https://arxiv.org/abs/2010.11929}, 
}

@misc{hoogeboom2021argmaxflowsmultinomialdiffusion,
      title={Argmax Flows and Multinomial Diffusion: Learning Categorical Distributions}, 
      author={Emiel Hoogeboom and Didrik Nielsen and Priyank Jaini and Patrick Forré and Max Welling},
      year={2021},
      eprint={2102.05379},
      archivePrefix={arXiv},
      primaryClass={stat.ML},
      url={https://arxiv.org/abs/2102.05379}, 
}

@misc{austin2023structureddenoisingdiffusionmodels,
      title={Structured Denoising Diffusion Models in Discrete State-Spaces}, 
      author={Jacob Austin and Daniel D. Johnson and Jonathan Ho and Daniel Tarlow and Rianne van den Berg},
      year={2023},
      eprint={2107.03006},
      archivePrefix={arXiv},
      primaryClass={cs.LG},
      url={https://arxiv.org/abs/2107.03006}, 
}

@misc{gat2024discreteflowmatching,
      title={Discrete Flow Matching}, 
      author={Itai Gat and Tal Remez and Neta Shaul and Felix Kreuk and Ricky T. Q. Chen and Gabriel Synnaeve and Yossi Adi and Yaron Lipman},
      year={2024},
      eprint={2407.15595},
      archivePrefix={arXiv},
      primaryClass={cs.LG},
      url={https://arxiv.org/abs/2407.15595}, 
}

@misc{cachay2023dyffusiondynamicsinformeddiffusionmodel,
      title={DYffusion: A Dynamics-informed Diffusion Model for Spatiotemporal Forecasting}, 
      author={Salva Rühling Cachay and Bo Zhao and Hailey Joren and Rose Yu},
      year={2023},
      eprint={2306.01984},
      archivePrefix={arXiv},
      primaryClass={cs.LG},
      url={https://arxiv.org/abs/2306.01984}, 
}

@misc{lippe2023pderefinerachievingaccuratelong,
      title={PDE-Refiner: Achieving Accurate Long Rollouts with Neural PDE Solvers}, 
      author={Phillip Lippe and Bastiaan S. Veeling and Paris Perdikaris and Richard E. Turner and Johannes Brandstetter},
      year={2023},
      eprint={2308.05732},
      archivePrefix={arXiv},
      primaryClass={cs.LG},
      url={https://arxiv.org/abs/2308.05732}, 
}

@misc{bastek2025physicsinformeddiffusionmodels,
      title={Physics-Informed Diffusion Models}, 
      author={Jan-Hendrik Bastek and WaiChing Sun and Dennis M. Kochmann},
      year={2025},
      eprint={2403.14404},
      archivePrefix={arXiv},
      primaryClass={cs.LG},
      url={https://arxiv.org/abs/2403.14404}, 
}

@misc{shysheya2024conditionaldiffusionmodelspde,
      title={On conditional diffusion models for PDE simulations}, 
      author={Aliaksandra Shysheya and Cristiana Diaconu and Federico Bergamin and Paris Perdikaris and José Miguel Hernández-Lobato and Richard E. Turner and Emile Mathieu},
      year={2024},
      eprint={2410.16415},
      archivePrefix={arXiv},
      primaryClass={cs.LG},
      url={https://arxiv.org/abs/2410.16415}, 
}

@misc{majid2026ambientphysicstrainingneural,
      title={Ambient Physics: Training Neural PDE Solvers with Partial Observations}, 
      author={Harris Abdul Majid and Giannis Daras and Francesco Tudisco and Steven McDonagh},
      year={2026},
      eprint={2602.13873},
      archivePrefix={arXiv},
      primaryClass={cs.AI},
      url={https://arxiv.org/abs/2602.13873}, 
}

@article{bi2023accurate,
  title={Accurate medium-range global weather forecasting with 3D neural networks},
  author={Bi, Kaifeng and Xie, Lingxi and Zhang, Hengheng and Chen, Xin and Gu, Xiaotao and Tian, Qi},
  journal={Nature},
  volume={619},
  number={7970},
  pages={533--538},
  year={2023},
  publisher={Nature Publishing Group UK London}
}

@article{lam2023learning,
  title={Learning skillful medium-range global weather forecasting},
  author={Lam, Remi and Sanchez-Gonzalez, Alvaro and Willson, Matthew and Wirnsberger, Peter and Fortunato, Meire and Alet, Ferran and Ravuri, Suman and Ewalds, Timo and Eaton-Rosen, Zach and Hu, Weihua and others},
  journal={Science},
  volume={382},
  number={6677},
  pages={1416--1421},
  year={2023},
  publisher={American Association for the Advancement of Science}
}

@article{martin2025generative,
  title={Generative data assimilation for surface ocean state estimation from multi-modal satellite observations},
  author={Martin, Scott A and Manucharyan, Georgy E and Klein, Patrice},
  journal={Journal of Advances in Modeling Earth Systems},
  volume={17},
  number={8},
  pages={e2025MS005063},
  year={2025},
  publisher={Wiley Online Library}
}

@article{chen2025transfer,
  title={Transfer machine learning framework for efficient full-field temperature response reconstruction of thermal protection structures with limited measurement data},
  author={Chen, Yuluo and Chen, Qiang and Ma, Han and Chen, Shuailong and Fei, Qingguo},
  journal={International Journal of Heat and Mass Transfer},
  volume={242},
  pages={126785},
  year={2025},
  publisher={Elsevier}
}

@article{hsu2025forecasting,
  title={Forecasting corporate financial performance using deep learning with environmental, social, and governance data},
  author={Hsu, Wan-Lu and Lin, Ying-Lei and Lai, Jung-Pin and Liu, Yu-Hui and Pai, Ping-Feng},
  journal={Electronics},
  volume={14},
  number={3},
  pages={417},
  year={2025},
  publisher={MDPI}
}

@article{ashfaq2025predicting,
  title={Predicting wheat yield using deep learning and multi-source environmental data},
  author={Ashfaq, Muhammad and Khan, Imran and Shah, Dilawar and Ali, Shujaat and Tahir, Muhammad},
  journal={Scientific Reports},
  volume={15},
  number={1},
  pages={26446},
  year={2025},
  publisher={Nature Publishing Group UK London}
}

@techreport{letraon:hal-03405376,
  TITLE = {{THE COPERNICUS MARINE ENVIRONMENTAL MONITORING SERVICE: MAIN SCIENTIFIC ACHIEVEMENTS AND FUTURE PROSPECTS}},
  AUTHOR = {Letraon, Py and Ali, A. and Fanjul, E. Alvarez and Aouf, L. and Axell, L. and Aznar, R. and Ballarotta, M. and Behrens, A. and Benkiran, M. and Bentamy, A. and Bertino, L. and Bowyer, P. and Brando, V. and Breivik, L. A. and Nardelli, B. Buongiorno and Cailleau, S. and Ciliberti, S. A. and Colella, S. and Connell, N. Mc and Coppini, G. and Cossarini, G. and Dabrowski, T. and Alonsomu{\~n}oyerro, M. de Alfonso and O'dea, E. and Desportes, C. and Dinessen, F. and Drevillon, M. and Drillet, Y. and Drudi, M. and Dussurget, R. and Faug{\`e}re, Y. and Forneris, V. and Fratianni, C. and Galloudec, O. Le and Hermosa, I. Garcia and Sotillo, M. Garc{\'i}a and Garnesson, P. and Garric, G. and Golbeck, I. and Gourrion, J. and Gr{\'e}goire, M. L. and Guinehut, S. and Gutknecht, E. and Harris, C. and Hernandez, F. and Huess, V. and Johannessen, J.A. and Kay, S. and Killick, R. and King, R. and Kloe, J. De and Korres, G. and Lagemaa, P. and Lecci, R. and Legeais, J.F. and Lellouche, J. M. and Levier, B. and Lorente, P. and Mangin, A. and Martin, M. and Melet, A. and Murawski, J. and {\"O}zsoy, E. and Palazov, A. and Pardo, S. and Parent, L. and Pascual, A. and Paul, J. and Peneva, E. and Perruche, C. and Peterson, D. and Villeon, L. Petit de La and Pinardi, N. and Pouliquen, S. and Pujol, M. I. and Rainaud, R. and Rampal, Pierre and Reffray, G. and Regnier, C. and Reppucci, A. and Ryan, A. and Salon, S. and Samuelsen, A. and Santoleri, R. and Saulter, Andrew and She, J. and Solidoro, C. and Stanev, E. and Staneva, J. and Stoffelen, A. and Storto, A. and Sykes, P. and Szekely, T. and Taburet, G. and Taylor, B. and Tintore, J. and Toledano, C. and Tonani, M. and Tuomi, L. and Volpe, G. and Wedhe, H. and Williams, T. and Vandendbulcke, L. and Zanten, D. Van and Schuckmann, K. Von and Xie, J. and Zacharioudaki, A. and Zuo, H.},
  URL = {https://hal.univ-grenoble-alpes.fr/hal-03405376},
  TYPE = {Research Report},
  NUMBER = {56},
  INSTITUTION = {{MERCATOR OCEAN}},
  YEAR = {2017},
  MONTH = Sep,
  DOI = {10.25575/56},
  HAL_ID = {hal-03405376},
  HAL_VERSION = {v1},
}

@misc{park2025measurementscorebaseddiffusionmodel,
      title={Measurement Score-Based Diffusion Model}, 
      author={Chicago Y. Park and Shirin Shoushtari and Hongyu An and Ulugbek S. Kamilov},
      year={2025},
      eprint={2505.11853},
      archivePrefix={arXiv},
      primaryClass={eess.IV},
      url={https://arxiv.org/abs/2505.11853}, 
}
